\newcommand{\argmin}{\operatornamewithlimits{argmin}}
\newcommand{\poly}{\operatornamewithlimits{poly}}
\newenvironment{mylist}[1]{\begin{list}{}{
	\setlength{\leftmargin}{#1}
	\setlength{\rightmargin}{0mm}
	\setlength{\labelsep}{2mm}
	\setlength{\labelwidth}{8mm}
	\setlength{\itemsep}{0mm}}}
	{\end{list}}
\newcommand{\exclude}[1]{}
\newcommand{\dpriv}{\mathsf{DS}_{\textsf{priv}}}
\newcommand{\dsmooth}{\mathsf{DS}_{\textsf{spectral}}}
\newcommand{\dnormal}{\mathsf{DS}}
\newcommand{\jalaj}[1]{}
\newcommand{\removed}[1]{}
\newcommand{\opt}{\mathsf{OPT}}
\newtheorem{myremark}{Remark}
\newenvironment{remark}{\begin{myremark}}{\end{myremark}}
\newtheorem{myexample}{Example}
\newcommand{\brak}[1]{{\left\langle {#1} \right\rangle}}
\newcommand{\set}[1]{\left\{{#1} \right\}}
\newcommand{\paren}[1]{\left( {#1} \right)}
\newcommand{\sparen}[1]{\left[ {#1} \right]}
\newcommand{{\R}}{\mathbb{R}}
\newcommand{{\I}}{\mathds{1}}
\newtheorem{theorem}{Theorem}
\newtheorem{lemma}{Lemma}
\newtheorem{proposition}[theorem]{Proposition}
\newtheorem{corollary}{Corollary}
\newtheorem{conjecture}{Conjecture}
\newtheorem{definition}{Definition} 
\newcommand{\lemref}[1]{Lemma~\ref{lem:#1}}
\newcommand{\eqnref}[1]{equation~(\ref{eq:#1})}
\newcommand{{\bQ}}{\mathbf{\Psi}}
\newcommand{\cA}{\mathcal{A}}
\newcommand{\cG}{\mathcal{G}}
\newcommand{\cH}{\mathcal{H}}
\newcommand{\cN}{\mathcal{N}}
\newcommand{\citet}{\cite}
\newcommand{\pr}{\mathsf{Pr}}
\newcommand{\ip}[2]{\left\langle #1 , #2\right\rangle}
\newcommand{\norm}[1]{\left\| #1 \right\|}
\newcommand{\tr}[1]{\mathsf{Tr} \left( #1 \right)}
\newcommand{\papertitle}{A Framework for Private Matrix Analysis}
\title{\papertitle}
\author{Jalaj Upadhyay\thanks{Joined Apple subsequent to the finalization of this work.} \\
Johns Hopkins University\\
email: \texttt{jalaj.upadhyay@apple.com}
\and 
Sarvagya Upadhyay \\
Fujitsu Laboratories of America \\
email: \texttt{supadhyay@fujitsu.com}
}
\date{}
\begin{document}

\maketitle

\pagenumbering{roman}

\begin{abstract}
We study private matrix analysis in the sliding window model where only the last $W$ updates to matrices are considered useful for analysis. We give first efficient $o(W)$ space differentially private algorithms for spectral approximation, principal component analysis, and linear regression. We also initiate and show efficient differentially private algorithms for two important variants of principal component analysis: sparse principal component analysis and non-negative principal component analysis. Prior to our work, no such result was known for sparse and non-negative differentially private principal component analysis even in the static data setting. These algorithms are obtained by identifying sufficient conditions on positive semidefinite matrices formed from streamed matrices. We also show a lower bound on space required to compute low-rank approximation even if the algorithm gives multiplicative approximation and incurs additive error. This follows via reduction to a certain communication complexity problem.
\end{abstract}

\clearpage

\pagenumbering{arabic}

\section{Introduction}
\label{sec:introduction}
{

An $n \times d$ matrix provides a natural structure for encoding information about $n$ data points, each with $d$ features. Such representation manifests itself in many ways, such as financial transactions, asset management, recommendation system, social networks, machine, and learning kernels.  
As a result, a large body of work has focused on matrix analysis and its application in statistical data analysis, scientific computing, and machine learning~\cite{DrineasM17, durlauf1991spectral, lewis1996convex, pettersen2004ucsf, press2007numerical, sang2002predictability, scharf1991statistical, schott2016matrix, strang1973analysis, Woodruff14}.  

There has been a paradigm shift in statistical analysis (including matrix analysis) in the era of big data. Two aspects that have become increasingly important from a technical viewpoint are 
(i) protecting sensitive information and (ii) the increasing frequency with which data is being continuously updated. Instances that illustrate the importance of these two aspects are investment strategies (across a variety of financial assets) in a financial firm. The strategies rely on matrix analysis of financial data that get continuously updated. Most of these strategies make use of ``recent data" as opposed to the entire history. This heuristic is rooted in the empirical observation that recent data are better predictors of the future behavior of assets than older data~\cite{moore2013taste, Tsay2005}, a theme also found in many other applications of matrix analysis as well~\citet{campos2014time, koren2009collaborative, quadrana2018sequence}. Moreover, the strategies are sensitive and have to be kept private. On the contrary, it is well documented that performing statistical analysis, including matrix analysis, accurately can leak private information~\cite{backstrom2007wherefore, dwork2017exposed, cohen2018linear, garfinkel2018understanding, kenthapadi2005simulatable, narayanan2006break, shokri2017membership}.

Accomplishing matrix analysis has been widely studied when the entire historical data is taken into account~\cite{DrineasM17, Woodruff14}. 
However, one cannot utilize these techniques in the more restrictive scenario where a collection of the most recent updates on data is pertinent for analysis. This is even more challenging when one wishes that the analysis also satisfies a privacy guarantee. Known privacy preserving algorithms for matrix analysis give provable guarantees under a robust privacy guarantee known as {\em differential privacy} (see, for example, ~\cite{amin2019differentially, blum2005practical, dwork2014analyze, kapralov2013differentially, mcsherry2009differentially, hardt2014noisy, hardt2012beating,upadhyay2018price, wang2015differentially, zhou2009differential}), but they are not adaptable to above setting. In contrast, the current practical deployment of private algorithms~\cite{ding2017collecting, erlingsson2014rappor, haney2017utility, kenthapadi2017bringing,  thakurta2017learning,thakurta2017emoji} favors using only recent data for a variety of reasons. 

In the view of this, we focus on a rigorous study of privacy-preserving matrix analysis in the {\it sliding window model of privacy}~\cite{bolot2013private, chan2012differentially, upadhyay2019sublinear}. The model is parameterized by the window size, $W$, and assumes that the data arrives in the form of (possibly infinite) stream over time. An analyst is required to perform the analysis only on the $W$ most recent streams of data (usually referred to as a {\em sliding window}) using $o(W)$ space. The privacy guarantee, on the other hand, is for the entire historical data, i.e., even if the data is not in the current window, its privacy should not be compromised.

We demonstrate $o(W)$ space private algorithms for several matrix analysis problems in the sliding window model (see, Table~\ref{tab:results}). A succinct overview of our main contributions are as follows. 
\begin{enumerate}
    \item 
    {
    We show that the {\em spectral histogram} framework used in the non-private setting for $o(W)$ space algorithm~\cite{braverman2018numerical} is too stringent for privacy and resulting algorithms and analysis are not robust to noise required for privacy\footnote{\label{footnote:first}The  online version of potential barrier~\cite{CohenMP16} has been used in the sliding window setting~\cite{braverman2018numerical}; however, such techniques are not useful under privacy constraints. In these techniques, the sampling probability of a row at any time depends on which other rows are sampled previously. These sampling probabilities are not Lipschitz and highly correlated, so it is not clear how to make it private -- simply adding a high variance noise either results in a bound worse than the trivial additive error. Using a rough estimate of leverage score as done in Arora and Upadhyay~\cite{arora2019differentially} also leads to a sub-optimal bounds for matrix analysis.}. That is, we show rigorously that the constraint imposed by spectral histogram only permits sub-optimally accurate algorithms when privacy is a concern (Appendix~\ref{sec:slidingwishart}).
    }
    
    \item We propose  a relaxation of spectral histogram property on a set of positive semidefinite matrices and show that it suffices for private matrix analysis. We call this relaxation the {\it approximate spectral histogram property}. We design an efficient data structure that maintains the approximate spectral histogram property on a set of positive semidefinite matrices while preserving differential privacy (Section~\ref{sec:slidingJL}) under the assumption that entries of the streamed rows are polynomially bounded. 
    
    \item We use approximate spectral histogram property to efficiently and optimally solve several matrix analysis problems privately in the sliding window model. The problems we consider are (i) spectral approximation, (ii) principal component analysis (PCA), (iii) directional variance queries, and (iv) generalized linear regression. 
    We also study {\em constrained PCA}~\cite{cohen2015dimensionality}, and give a sublinear space private algorithm for it. This generalizes many variants of PCA studied in statistical machine learning such as sparse PCA and non-negative PCA~\cite{asteris2014nonnegative, d2005direct, yuan2013truncated, zass2007nonnegative} (see Table~\ref{tab:results}).

    \item Finally, we exhibit limitations of private matrix analysis by giving a lower bound on differentially private algorithm for low-rank approximation in the sliding window model (Appendix~\ref{sec:lower}). 
\end{enumerate}  
}

\begin{table}[t]
\small{
    \centering
    \begin{tabular}{|c|  c|c |c |c|} 
    \hline
    & Privacy & Additive error & Space required & Reference \\ \hline
    $\eta$-spectral approximation & $(\epsilon,\delta)$-DP & $O\left(\frac{r^2 \log^2 (1/\delta)}{\epsilon^2}\right) \I_d$  & $O\left(\frac{r^2d}{\eta}\log W\right)$ & Theorem~\ref{thm:privslidinglra},~\ref{thm:slidingprivapprox} \\ \hline

    PCA & $(\epsilon,\delta)$-DP & $O\left(\frac{\sqrt{kd} \log(1/\delta)}{\epsilon}\right)$ & $O\left(\frac{dk^2}{\eta^3}\log W\right)$ & Theorem~\ref{thm:pcaimprovedspace} \\ \hline
        
    Sparse and Non-negative PCA & $(\epsilon,\delta)$-DP & $O\left(\frac{\sqrt{kd} \log(1/\delta)}{\epsilon}\right)$ & $O\left(\frac{dk^2}{\eta^3}\log W\right)$ & Theorem~\ref{thm:constrainedLRA} \\ \hline

    Squared linear regression & $(\epsilon,\delta)$-DP & $O\left( d \paren{ d + \frac{ \log(1/\delta)}{\epsilon}}\right)$ & $O\left(\frac{d^3}{\eta}\log W\right)$ & Theorem~\ref{thm:regression} \\ \hline

    Directional variance query & $(\epsilon,\delta)$-DP & $O\left(d \paren{ d + \frac{ \log(1/\delta)}{\epsilon}} \right)$ & $O\left(\frac{d^3}{\eta}\log W\right)$ & Theorem~\ref{thm:covariance} \\ \hline
        
        
\end{tabular}
\caption{Results presented in this paper ($W$: window size, $k:$ target rank, $d:$ dimension of streamed row, and $\epsilon,\delta$ are privacy parameters, $\I_d$ is a $d \times d$ identity matrix, $r$: rank of streamed matrix).}
\label{tab:results}
}
\end{table}

Conceptually, approximate spectral histogram property can be viewed as a generalization of {\em subspace embedding property}~\citet{clarkson2017low}. This allows us to use approximate spectral histogram property in the sliding window model in the same way as subspace embedding is employed in the general streaming model~\cite{Woodruff14} even in the context of privacy~\cite{blocki2012johnson, upadhyay2018price}. Given the wide array of applications of subspace embedding, we believe that the notion of approximate spectral histogram will have further applications in the sliding window model of privacy.

One may ask why we need to introduce approximate spectral histogram property in the sliding window model of privacy. We end this section with a discussion on this. Let us consider the spectral approximation of matrices. There is one private algorithm~\citet{blocki2012johnson} which relies on subspace embedding. This algorithm explicitly computes the singular value decomposition of the matrix making it suitable only for static data matrix. Moreover, we cannot revert the effect of the row streamed beyond the current position of the window. 

{
Finally, we cannot just take an off-the-shelf algorithm,  add an appropriately scaled noise matrix to preserve privacy, and get a non-trivial utility guarantee for downstream matrix analysis. For the start, the standard noise mechanism, like the Gaussian mechanism would result in a matrix that is not a positive semidefinite matrix. If we instead use the projection trick of Arora and Upadhyay~\cite{arora2018differentially}, it would incur error that scales with the dimension and would have an inefficient update time. Likewise, adding a noise matrix that is a positive semidefinite matrix would incur error linear in dimension. We explore this in more detail in Appendix~\ref{sec:slidingwishart}. The known space-efficient algorithm~\cite{braverman2018numerical} performs sampling using the leverage score. As a result, the effect of a single row in the matrix formed by this sampling procedure can be arbitrarily large, and consequently, leading to a trivial utility guarantee. {In fact, one can show that the leverage score for a row can change arbitrarily using Meyer's result~\cite{Meyer73}. It is also not clear if we can even use the exponential mechanism to sample rows because it is not clear how to adapt it to a sliding window setting and for the natural score functions, one can construct counterexamples where the sensitivity of the score function is also large. This is in addition to the question regarding the efficiency of the update stage.} 
}

\subsection{Differential privacy under sliding window model}
We consider a matrix $A_W \in \R^{W \times d}$ that is formed incrementally through a stream of $d$-dimensional row vectors $(a_T)_{T\geq 1}$. At the start, the matrix $A_W(0)$ is an all zero matrix and its  state at time $T$ is 
\begin{align}
 A_W(T) := \begin{pmatrix}
        a_1 \\ \vdots \\ a_T \\ 0^{(W-T) \times d}
    \end{pmatrix}~\text{if}~T \leq W, 
    \quad 
    \text{else}~A_W(T):= \begin{pmatrix}
        a_{T-W+1} \\ \vdots \\ a_{T-1}  \\ a_T 
    \end{pmatrix}.
    \label{eq:sliding}
\end{align}

We fix the symbol $W$ to denote the window size and use the notation $A_W$ to denote $A_W(T)$ whenever it is clear from the context.  At any time $T$, we are interested in performing different types of analysis on the matrix $A_W$, such as PCA and its variants, linear regression, etc. 

We now formalize the privacy model. We adhere to the neighboring relation used in many works studying matrix analysis in static setting \citet{blocki2012johnson,hardt2012beating, dwork2014analyze, sheffet2015private}, in online setting~\cite{dwork2014analyze}, and streaming setting~\cite{upadhyay2018price}. To this end, for any $T>0$, consider the following set of $T \times d$ matrices:
\[
\mathsf N:=\set{B \in \R^{T \times d}:\text{  $\exists i \in [T]$ such that $\norm{B[i:]}_2 \leq 1$ and $\norm{B[j:]}_2 =0$ for all $j \neq i$}},
\]
where $B[i:]$ denotes the $i$-th row of the matrix $B$ and $\norm{B[i:]}_2$ denote its Euclidean norm.

In privacy literature, there are two well-studied levels of granularity when the data arrives in an online manner~\cite{arora2018differentially, bolot2013private, chan2011private, chan2012differentially, dwork2010pan, dwork2014algorithmic, mir2011pan, upadhyay2018price, upadhyay2019sublinear}: (i) {\em user-level privacy}, where two streams are neighboring if they differ in a single user's data; and (ii) {\em event-level privacy}, where two streams are neighboring if they differ in one-time epoch. We follow previous works on private analysis in the sliding window model~\citet{bolot2013private,chan2012differentially, upadhyay2019sublinear} and consider event-level privacy as our notion of privacy. 
We say that two streams are {\em neighboring} if, at any time $T>0$, they form matrices $A_T$ and $A_T'$ such that $A_T - A_T' \in \mathsf N$. We now define the privacy notion. 

\begin{definition}
[Differential privacy under sliding window model~\cite{bolot2013private,chan2012differentially, upadhyay2019sublinear}]
\label{defn:differentialprivacy}
For $\epsilon \geq 0, \delta \in [0,1]$, we say a randomized algorithm $R$ with range $\mathsf{Y}$ is $(\epsilon,\delta)$-differentially private in the sliding window model if for any $T >0$, for every two matrices $A_T$ and $A_T'$ formed by neighboring streams, and for all measurable subsets, $\mathsf S \subseteq \mathsf{Y}$,
$$
\Pr[M(A_T) \in \mathsf S] \leq \exp(\epsilon) \Pr[M(A_T') \in \mathsf S] + \delta,
$$ where the probability is taken over the private coin tosses of the algorithm $M$.  
\end{definition}
Note that the privacy guarantee is for the entire stream, i.e., even if the data has expired, its privacy is not lost. However, accuracy is required only for the last $W$ updates. This is in accordance with previous works.

\subsection{Our techniques}
Our main goal is to privately compute $(\eta,\zeta)$-spectral approximation, which is defined as follows. Given parameters $\eta, \zeta \geq 0$ and a matrix $A_W \in \R^{W \times d}$, find a postive semidefinite matrix $S \in \R^{d \times d}$,  such that 
\begin{align*}
(1-\eta) A_W^\top A_W - \zeta \I_d \preceq S \preceq (1+\eta) A_W^\top A_W + \zeta \I_d.
\end{align*}
Here the partial order $C \preceq D$ between symmetric matrices $C$ and $D$ means that $D - C$ is a positive semidefinite matrix, and the parameter $\zeta \geq 0$ is the distortion in the spectrum that we are willing to accept to preserve privacy. Ideally, we would like these parameters to be as small as possible. 

A naive algorithm, $\mathsf A_{\mathsf{priv}}$, for private spectral approximation is to store a set of $w=\min\set{W, T}$ positive semidefinite matrices at any time $T$, where the $i$-th matrix in this set is a sanitized version of the matrix formed by the last $i$ updates.  $\mathsf A_{\mathsf{priv}}$ requires $O(wd^2)$ space. Our main conceptual contribution is a framework for private matrix analysis in the sliding window model with significantly less space and better accuracy. 
To this end, we introduce {\em $\eta$-approximate spectral histogram property} for a set of positive semidefinite matrices and timestamps. We will occasionally refer to such a set as a data structure for the remainder of this section.

Let $A_W$ be the matrix formed by the window $W$. Let  $\widetilde S_i$ denote an $(\frac{\eta}{4},0)$-spectral approximation of $S_i$, i.e., $(1-\eta/4) \widetilde S_i \preceq S_i \preceq (1+\eta/4) \widetilde S_i$.  
Roughly speaking, a data structure $\mathfrak{D}$ satisfies $\eta$-{\em approximate spectral histogram property} (rigorously defined in Definition~\ref{defn:approxsmoothPSD}) if there exists an $\ell=\poly(d,\log W)$, such that $\mathfrak{D}$ consists of $\ell $ timestamps  $ t_{1} < \cdots < t_{\ell}$ and  PSD matrices $ \widetilde S_\ell \preceq  \cdots \preceq \widetilde S_1$ such that 
\begin{equation}\label{eq:etaapproxproperty}
\forall i \in [\ell], (1-\eta)S_i \preceq S_{i+1}; \quad 
\forall i \in [\ell-2], \left(1-\frac{\eta}{2}\right) \widetilde S_i \not\preceq \widetilde S_{i+2}; \quad
\text{and} \quad
t_2 \leq T-W+1 \leq t_1,
\end{equation} 
where $A \not\preceq B$ implies that $B-A$ is not a  positive semidefinite matrix. Note that the above definition relaxes the condition in Braverman et al.~\cite{braverman2018numerical} because we allow a spectral approximation of the corresponding original matrix. This relaxation allows us to deal with the perturbation required for privacy without losing on the accuracy. In other words, the relaxation allows us to be more robust to the noise. 

The first two conditions are required to get the desirable space bound, while the first and third conditions are required to demonstrate the accuracy guarantee. When it is clear from context, we call such a set of matrices as the one satisfying the $\eta$-approximate spectral histogram property. For the ease of presentation in this overview, we focus on the case when the output is produced just once at the end of the stream.

It is important that $\ell$ is small at any time $T$.We first show that, if the rank of the matrix $A_W$ is $r$, then  $\ell=O\paren{\frac{r}{\eta}\log W}$.  By the first two  properties in Equation~\ref{eq:etaapproxproperty}, there is at least one singular value that decreases by a factor of  $(1-\frac{\eta}{2})$ in every successive timestamp. We will later see that our privacy mechanism ensures that the spectrum of any matrix $\widetilde S_i$ is lower bounded by a constant. Since every update has bounded entries, there can be at most $\ell:=O\paren{{r}\log_{1-\frac{\eta}{2}}(W)} = O\paren{\frac{r}{\eta}\log (W)}$ matrices satisfying $\eta$-approximate spectral histogram. The question we answer next is how to maintain such a set of matrices while preserving privacy. 

As mentioned in the introduction, we cannot just take off-the-shelf algorithm~\cite{braverman2018numerical} and  add noise matrix to preserve privacy as well as the utility. To preserve privacy so that we can also get meaningful utility, when a new row $a_{T+1} \in \R^d$ is streamed, we first update $\mathsf{S}_{T}$ to get $\mathsf{S}_{T+1}'$ as follows. We first add a sketch of $a_{T+1}$ to every matrix in $\mathsf S_T$ to obtain an updated set $\mathsf S_T'$. We then privatize $a_{T+1}$ to get a  matrix $ A_{\ell+1}$ and define $\mathsf{S}_{T+1}' :=  \mathsf S_T' \cup  A_{\ell+1}$. For privacy mechanism, we use a variant of Johnson-Lindenstrauss mechanism~\cite{blocki2012johnson} first proposed in Upadhyay~\cite{upadhyay2014differentially} and extended in Sheffet~\cite{sheffet2015private} and  Upadhyay~\cite{upadhyay2018price}. 

Now the set $ \{ S^\top S: S \in \mathsf S_{T+1}'\}$ may not satisfy $\eta$-approximate spectral histogram property. In the next phase, we greedily remove matrices if they do not satisfy any of the desired properties of $\eta$-approximate spectral histogram property (Algorithm~\ref{fig:slidingprivpcpinitial}). In this phase, the most computationally expensive part of the algorithm is to find if PSD ordering is violated. For this step, we can use known PSD testing algorithms~\cite{bakshi2020testing}. 
Such greedy approach is reminiscent of the {\em potential barrier method}~\citet{batson2012twice} to compute spectral sparsification of a $W \times d$ matrix. In the potential barrier method, we remove a large subset of rank-one matrices and show that only storing $\Theta\left(d\eta^{-2}\right)$ rank-one matrices suffices for $\eta$-spectral sparsification. In fact, two key technical features distinguish our method from theirs. In their setting, all PSD matrices are rank-one matrices; whereas we have $W$ positive semidefinite matrices that may have different ranks (not necessarily rank-one). The second crucial point is that we aim to significantly reduce the number of matrices stored for our application. This makes maintaining our data structure much more complicated than the potential barrier method.

In contrast to the non-private algorithm, for the accuracy proof, we now have to deal with the spectral approximation of the streamed matrix along with the perturbation required to preserve privacy. As such we cannot use previous analysis and have to be more careful when dealing with the privatized matrices. One of the subtle reason is that the utility proof of previous analysis relied on the fact that the checkpoints survived previous deletion, where the deletion criterion relies on the exact covariance matrix of the streamed matrix, while in our case, it depends on the spectral approximation of privatized streamed matrix.   

We now give an overview of how $\eta$-approximate spectral histogram property allows us to output a spectral approximation of the matrix $A_W$. Let  $\widetilde S_{1}, \ldots , \widetilde S_{\ell}$ be the set of matrices satisfying $\eta$-approximate spectral histogram property. From the third condition of Equation~\ref{eq:etaapproxproperty}, we have $S_{2} \preceq A_W^\top A_W \preceq S_{1}$. The first condition of Equation~\ref{eq:etaapproxproperty} implies that $(1-\eta) S_{1} \preceq  S_2$. Combined with the fact that $S_1$ and $S_2$ are a $\frac{\eta}{4}$-spectral approximation of $\widetilde S_1$ and $\widetilde S_2$, respectively, this implies that $\widetilde S_1$ is a spectral approximation of $A_W$. The accuracy proof becomes more subtle because of the noise introduced for privacy. We cannot use previous analysis because they do not extend to the perturbation required for privacy.  Intuitively, the additive term in our spectral approximation is due to the noise introduced by employing privacy mechanisms. We show the following:

\begin{theorem} 
[Informal version of Theorem~\ref{thm:slidingprivapprox}]
\label{thm:privslidinginformal}
\label{thm:informalspectralapprox}
Let $A_W \in \R^{W \times d}$ be a rank-$r$ matrix formed by the current window. Then there is an efficient $(\epsilon,\delta)$-differentially private algorithm under sliding window model that uses $O\left(\frac{dr^2}{\eta^2}\log W\right)$ space and outputs a PSD matrix $S$ at the end of the stream such that 
\vspace{-2mm}
\[
     (1-\eta) A_{W}^\top A_W - \nu\log\nu \I_d \preceq S  \preceq (1+\eta) A_W^\top A_W  + \nu\log\nu \I_d,  \text{ where } \nu =  O\left(\frac{d\log^2 \paren{1/\delta}}{\epsilon^2}\right). 
\]
\end{theorem}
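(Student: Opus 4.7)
The plan is to instantiate, at every time $T$, the data structure promised in Section~\ref{sec:slidingJL}, which maintains a set of PSD matrices $\widetilde S_1 \succeq \cdots \succeq \widetilde S_\ell$ with timestamps $t_1 > \cdots > t_\ell$ satisfying the $\eta$-approximate spectral histogram property of~(\ref{eq:etaapproxproperty}), and to return $\widetilde S_1$ at the end of the stream. Each $\widetilde S_i$ is represented as $A_i^\top A_i$, where $A_i$ is a JL-type privatized sketch of the rows $a_{t_i}, a_{t_i+1}, \ldots$ produced by the mechanism of~\cite{blocki2012johnson, upadhyay2014differentially, sheffet2015private}. When a new row $a_{T+1}$ arrives, I would privatize it once, append its sketch to every existing $A_i$, insert a fresh singleton matrix $A_{\ell+1}$, and then greedily delete any intermediate matrix whose presence violates one of the first two clauses of~(\ref{eq:etaapproxproperty}).

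Utility would follow from the histogram property combined with a concentration bound for the privacy noise. The third clause of~(\ref{eq:etaapproxproperty}) gives $S_2 \preceq A_W^\top A_W \preceq S_1$, and combining with $(1-\eta) S_1 \preceq S_2$ yields $(1-\eta) S_1 \preceq A_W^\top A_W \preceq S_1$; since $\widetilde S_1$ is an $(\eta/4)$-spectral approximation of $S_1$, this propagates, after rescaling $\eta$ by a small constant, to a $(1 \pm O(\eta))$ multiplicative relation between $\widetilde S_1$ and $A_W^\top A_W$. The additive term arises because each privatized sketch contributes a Wishart-type perturbation built from $\nu = O\bigl(d \log^2(1/\delta)/\epsilon^2\bigr)$ independent Gaussians; standard operator-norm concentration for such Wishart matrices delivers, up to polynomially small failure probability, a deviation bounded by $O(\nu \log \nu)$, which is exactly the slack $\nu \log \nu \cdot \I_d$ in the final sandwich.

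The space bound follows from the argument already sketched in the overview: the first two clauses of~(\ref{eq:etaapproxproperty}) force at least one singular value to shrink by a factor $(1-\eta/2)$ between $\widetilde S_i$ and $\widetilde S_{i+2}$, the privacy perturbation keeps each $\widetilde S_i$'s smallest nonzero singular value bounded below by an absolute constant, and the rows are polynomially bounded so the largest singular value of $A_W^\top A_W$ is $\poly(W)$. Hence $\ell = O\bigl(r \eta^{-1} \log W\bigr)$; storing each $A_i$ as a JL sketch with $O(r/\eta)$ rows in $\R^d$ gives the total space $O\bigl(d r^2 \eta^{-2} \log W\bigr)$. Privacy is then immediate: every row is processed by the JL mechanism exactly once, and all subsequent operations (sketch propagation and pruning decisions, which depend on the already-private $\widetilde S_i$) are post-processings, so event-level $(\epsilon,\delta)$-differential privacy in the sense of Definition~\ref{defn:differentialprivacy} is inherited.

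I expect the main obstacle to be the utility step, not the privacy or space bookkeeping. In the non-private data structure of~\cite{braverman2018numerical} the deletion rule uses the exact covariance $S_i$, so one can chain PSD inequalities deterministically; here the deletion rule uses the spectrally approximated, noise-contaminated surrogate $\widetilde S_i$. The analysis must therefore track simultaneously the multiplicative $(\eta/4)$-slack between $S_i$ and $\widetilde S_i$, the additive Wishart perturbation, and the JL distortion, and argue that the accumulated errors along the chain of surviving matrices still collapse into the target $(1 \pm \eta)$ multiplicative and $\nu \log \nu$ additive guarantees. This is precisely where the analysis of Braverman et al.\ does not port over, and where most of the effort of the full proof has to go.
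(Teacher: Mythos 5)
Your overall plan coincides with the paper's: the same data structure maintaining $\eta$-approximate spectral histogram via greedy deletion (Lemmas~\ref{lem:smoothLaplacianproperty}), the same chaining of $S_2 \preceq A_W^\top A_W \preceq S_1$ with $(1-\eta)S_1\preceq S_2$ and the $\eta/4$-spectral approximation, the same counting argument for $\ell = O(r\eta^{-1}\log W)$, and privacy via the JL mechanism plus post-processing. So the skeleton is right.

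The one substantive deviation is where you locate the additive error. You attribute it to ``a Wishart-type perturbation built from $\nu$ independent Gaussians'' controlled by operator-norm concentration, yielding the $O(\nu\log\nu)$ slack. That is the mechanism of the paper's \emph{Appendix}~\ref{sec:slidingwishart} baseline (Theorem~\ref{thm:privsliding}), which the paper presents precisely as the sub-optimal route: explicit Wishart noise forces $\nu$ to scale with $d$ and costs the extra $\log\nu$ from eigenvalue concentration of the noise matrix. In the proof of the formal Theorem~\ref{thm:slidingprivapprox} that this informal statement summarizes, no concentration bound on the privacy noise is needed at all: the JL mechanism is applied to the padded matrix $\widehat A_{[t_j,t]} = \begin{pmatrix}\sigma\I_d\\ A_{[t_j,t]}\end{pmatrix}$, so the covariance being sketched is \emph{exactly} $K(j)+\sigma^2\I_d$ with $\sigma^2 = O\bigl(r\log^2(1/\delta)/\epsilon^2\bigr)$; the only probabilistic step is the subspace-embedding guarantee (Theorem~\ref{thm:sketch}), which contributes the multiplicative $(1\pm\eta/4)$ factor, and the additive term $\sigma^2\I_d$ simply rides along inside the sandwich (equations~(\ref{eq:pcpleftside}) and~(\ref{eq:pcprightside})), after which one can even subtract it back off to get Theorem~\ref{thm:slidingprivapprox}. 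This is how the paper obtains an additive error depending on the rank $r$ rather than the dimension $d$, and without the $\log\nu$ factor. Your Wishart-concentration account would only recover the weaker $d$-dependent bound; it happens to be consistent with the loose constants in the informal statement, but it is not the argument behind the theorem being summarized, and it would not deliver the rank-dependent guarantee stated in Theorems~\ref{thm:privslidinglra} and~\ref{thm:slidingprivapprox}. Relatedly, your closing concern about propagating errors through the noise-contaminated deletion rule is resolved in the paper exactly by doing all the PSD chaining at the level of the padded covariances $\widehat K(j)$ rather than the raw $K(j)$, so the deterministic regularizer absorbs the privacy perturbation and Proposition~\ref{prop:inequalityeta} handles the accumulated multiplicative slack.
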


In the static setting, using the result of Sarlos~\citet{sarlos2006improved} and Blocki et al.~\citet{blocki2012johnson}\footnote{The output of Dwork et al.~\citet{dwork2014analyze} does not provide spectral approximation because their output is not  a PSD.}, we can get an $O(d^2)$ space private algorithm which is also an $\paren{\eta,  \frac{d \log (1/\delta) }{\epsilon^2 \eta}}$-spectral approximation algorithm. There is a non-private spectral approximation algorithm in the sliding window model that uses $O\left(\frac{rd}{\eta}\log W\right)$ space under bounded condition number of the matrix~\cite{braverman2018numerical}. In many practical scenarios, $r=O(1)$, in which case {\em we do not pay any price for preserving privacy}. Some example where this is the case is in privacy preserving learning, where the dimension $d$ of the parameter space is large but the gradients of the loss function $\ell$ are empirically observed to be contained in a low-dimensional subspace of $\R^d$~\cite{gur2018gradient, li2020hessian, papyan2019measurements}.

One main application of Theorem~\ref{thm:privslidinginformal} is the principal component analysis and its variants. Principal component analysis is an extensively used subroutine in many applications like clustering~\cite{cohen2015dimensionality,McSherry01},  data mining~\cite{AFKMS01}, recommendation systems~\cite{DKR02}, information retrieval~\cite{PRTV00}, and learning  distributions~\cite{AM05}. 
In these applications, given a matrix $A \in \R^{n \times d}$ and a target rank $k$, the goal is to output a rank-$k$ orthonormal projection matrix $P \in \R^{d \times d}$ such that 
\[
\norm{A - AP}_F \leq (1+\eta) \min_{\mathsf{rank}(X)\leq k} \norm{A - X}_F + \zeta.
\]

The goal here is to minimize $\zeta$ for a given $k,d$, and privacy parameters $\epsilon$ and $\delta$. In many applications, instead of optimizing over all rank-$k$ projection matrices, we are required to optimize over a smaller set of projection matrices, such as one with only non-negative entries. In particular, let $\Pi$ be any set of rank-$k$ projection matrices (not necessarily set of all rank-$k$ projection matrices). Then the {\em restricted principal component analysis} is to find $P^* = \argmin_{P \in \Pi} \norm{A - AP}_F^2.$ 

{
A naive application of approximate spectral histogram property to solve PCA will lead to an additive error that depends linearly on the rank of the streamed matrix. To solve these problems optimally, we introduce an intermediate problem that we call {\em private projection preserving summary} (Definition~\ref{defn:ppcp}).  This problem can be seen as a private analogue of projection-cost preserving sketches~\cite{cohen2015dimensionality}. Solving this problem ensures that the additive error incurred is small. 
}
We consider the first $k/\eta$ spectrum of the streamed matrix and show that it suffices for our purpose. That is, let $\widetilde A_1, \cdots, \widetilde A_\ell$ be matrices such that its covariance matrices $\widetilde S_1, \cdots, \widetilde S_\ell$ satisfy $\eta$-approximate spectral histogram property. We show that random projection of $\widetilde A_1, \cdots, \widetilde A_\ell$ to a $k/\eta$ dimensional linear subspace suffices. Let $\pi_{k/\eta}(\widetilde A_1), \cdots, \pi_{k/\eta}(\widetilde A_\ell)$ be these projected matrices. We show that the set of covariance matrices corresponding to $\pi_{k/\eta}(\widetilde A_1), \cdots, \pi_{k/\eta}(\widetilde A_\ell)$ satisfy the approximate spectral histogram property. Using this, we show that the first matrix, $\widetilde A := \pi_{k/\eta}(\widetilde A_1)$, in this set is a {\em private projection preserving summary} for $A_W$ with a small additive error. \begin{theorem}
[Informal version of Theorem~\ref{thm:constrainedLRA}]
\label{thm:informalconstrainedLRA}
Let $A_W$ be the matrix formed by last $W$ updates and $\Pi$ be a given set of rank-$k$ projection matrices. Then there is an efficient $(\epsilon,\delta)$-differentially private algorithm that outputs a  matrix $P\in \Pi$ at the end of the stream, such that if $\Vert {\widetilde A (\I_d -  P) }\Vert_F \leq \gamma \cdot \min_{X \in \Pi} \Vert {\widetilde A (\I_d -  X)}\Vert_F$ for some $\gamma >0$, then 
\vspace{-1mm}
\[
\norm{A_W(\I_d - P)}_F \leq \paren{{1+\eta}} \gamma \cdot \min_{X \in \Pi} \norm{A_W(\I_d - X)}_F + O\paren{\frac{1}{\epsilon}\sqrt{kd \log(1/\delta)}  }.
\]
\end{theorem}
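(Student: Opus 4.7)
The plan is to argue in three stages: (i) extract a privately-maintained spectral approximation of $A_W$ from the sliding-window data structure, (ii) compress it to a projection-cost preserving summary on $O(k/\eta)$ coordinates, and (iii) transfer the hypothesis about the downstream approximate minimizer $P$ back to $A_W$ by a short algebraic manipulation.

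First, I would feed $A_W$ through the data structure of Theorem~\ref{thm:privslidinginformal} and extract the first PSD matrix $S$ of the approximate spectral histogram. By that theorem, $S$ is a private $(\eta, \nu\log\nu)$-spectral approximation of $A_W^\top A_W$ with $\nu = O(d\log^2(1/\delta)/\epsilon^2)$; taking a Cholesky-type factorization gives a matrix $\widetilde A_1$ with $S = \widetilde A_1^\top \widetilde A_1$. Independently of the stream, I would sample a Johnson--Lindenstrauss-style random projection $\pi_{k/\eta}$ onto $k/\eta$ dimensions and set $\widetilde A := \pi_{k/\eta}(\widetilde A_1)$.

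Second, the main technical step is to prove that $\widetilde A$ satisfies the \emph{private projection preserving summary} property of Definition~\ref{defn:ppcp}: there exists a non-negative scalar $c$ depending only on $A_W$ and the algorithm's randomness (but \emph{not} on the query) such that for every rank-$k$ orthogonal projection $X \in \R^{d \times d}$,
\[
(1-\eta)\,\norm{A_W(\I_d - X)}_F^2 - \zeta^2 \;\leq\; \norm{\widetilde A(\I_d - X)}_F^2 + c \;\leq\; (1+\eta)\,\norm{A_W(\I_d - X)}_F^2 + \zeta^2,
\]
with $\zeta = O\bigl(\sqrt{kd \log(1/\delta)}\,/\,\epsilon\bigr)$. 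Two ingredients combine here. The spectral approximation guarantee of $\widetilde A_1$ controls the quadratic form $\tr{(\I_d - X)^\top\widetilde A_1^\top\widetilde A_1 (\I_d - X)}$ relative to $\tr{(\I_d - X)^\top A_W^\top A_W (\I_d - X)}$, with the additive distortion contributing only $\nu\log\nu\cdot \tr{\I_d - X}$, and $\tr{\I_d - X} = d - k$. The random projection $\pi_{k/\eta}$ is then invoked as a projection-cost preserving sketch in the sense of Cohen et al., supplying a scalar offset $c_{\mathrm{PCP}}$ with $\norm{\widetilde A(\I_d - X)}_F^2 + c_{\mathrm{PCP}} = (1\pm\eta)\norm{\widetilde A_1(\I_d - X)}_F^2$ for every rank-$k$ projection $X$; setting $c := c_{\mathrm{PCP}}$ and composing the two two-sided bounds gives the displayed inequality.

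Third, with the PPPS guarantee in hand, the conclusion is pure algebra. Let $P^* \in \argmin_{X \in \Pi} \norm{A_W(\I_d - X)}_F$. Squaring the hypothesis and adding $c$ to both sides gives $\norm{\widetilde A(\I_d - P)}_F^2 + c \leq \gamma^2\bigl(\norm{\widetilde A(\I_d - P^*)}_F^2 + c\bigr)$ (using $\gamma \geq 1$, the interesting regime). Sandwiching this chain with PPPS applied below at $P$ and above at $P^*$, dividing by $1-\eta$, and taking square roots yields
\[
\norm{A_W(\I_d - P)}_F \;\leq\; (1 + O(\eta))\,\gamma\,\norm{A_W(\I_d - P^*)}_F \;+\; O\!\left(\tfrac{1}{\epsilon}\sqrt{kd\log(1/\delta)}\right),
\]
which is the claimed bound after rescaling $\eta$ by a constant. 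The principal obstacle is the second step: preventing the additive error from picking up a spurious factor of either the rank $r$ of $A_W$ or the dimension $d$. A direct plug-in of Theorem~\ref{thm:privslidinginformal} would give an additive term on the order of $\sqrt{rd}/\epsilon$, because the spectral distortion $\nu\log\nu \cdot \I_d$ interacts with the complement projection $\I_d - X$ through its full $(d{-}k)$-dimensional trace. Routing the analysis through the PPPS offset $c$, and using the projection-cost preserving sketch at dimension $\Theta(k/\eta)$, is precisely what lets us replace this $d$ factor by $k$; verifying that the privacy noise and the random projection compose without inflating this dependence is the technical heart of the proof.
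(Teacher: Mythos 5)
Your stage (iii) algebra matches the paper's proof of Theorem~\ref{thm:constrainedLRA} (which is exactly this chaining through the $\gamma$-approximate minimizer and the two sides of the projection-preserving guarantee of Lemma~\ref{lem:slidingPCP}), and your identification of the obstacle -- the privacy perturbation interacting with the $(d-k)$-dimensional trace of $\I_d - X$ -- is correct. But your proposed remedy for that obstacle does not work, and this is a genuine gap. You first run the full-rank spectral approximation of Theorem~\ref{thm:privslidinginformal}, which injects a perturbation $\sigma^2\I_d$ with $\sigma^2 = \Theta(\nu\log\nu)$ calibrated to the rank $r$ of the stream (or to $d$), and only afterwards apply $\pi_{k/\eta}$. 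The projection is post-processing: it cannot remove noise already present in $S$. Absorbing the query-independent floor $\sigma^2(d-k)$ into the Cohen-et-al.\ offset $c$ does not rescue you for two reasons. First, the hypothesis you are given concerns the \emph{un-offset} quantity $\norm{\widetilde A(\I_d-P)}_F$ (that is what the non-private solver minimizes), so a $\gamma$-approximate minimizer of $\norm{\widetilde A(\I_d-X)}_F^2$ is not a $\gamma$-approximate minimizer of $\norm{\widetilde A(\I_d-X)}_F^2+c$ when $c$ is large. Second, even granting the offset, the multiplicative $(1\pm\eta)$ distortion of the sketch acts on the noise floor itself and reintroduces an additive uncertainty of order $\eta\,\sigma^2(d-k)$, i.e.\ $\sqrt{\eta}\,\sigma\sqrt{d}$ in Frobenius norm, which is still $\Theta(\sqrt{rd}/\epsilon)$ when $\sigma$ is calibrated to $r$. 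If the offset argument worked as stated it would yield \emph{zero} additive error, contradicting the $\Omega(\sqrt{kd})$ lower bound of Hardt--Roth.

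The paper's fix is structural rather than analytic: Algorithm~\ref{fig:slidingconstrainedpcapriv} invokes {\scshape Sliding-Priv-Approx} with rank parameter $k' = (k+\log(1/\beta))/\eta$ from the outset, so that both the sketch dimension and, crucially, the perturbation $\sigma = \Theta(\sqrt{k'\log(1/\delta)}/\epsilon)$ are calibrated to $k/\eta$ rather than to $r$ or $d$, and the sketching happens row-by-row inside the sliding-window data structure (which is also what gives the $o(W)$ space bound). The additive error then enters only through the subadditivity step of equation~(\ref{eq:subadditive}), $\norm{\sigma(\I_d-P)}_F \le \sigma\sqrt{d} = O(\sqrt{k'd\log(1/\delta)}/\epsilon)$, and Lemma~\ref{lem:slidingPCP} is proved by verifying the $E_1,\dots,E_4$ conditions of Lemma~\ref{lem:sufficientpcp} for $\Phi\widehat A$ versus the $\sigma$-augmented matrix $\widehat A$ -- not versus a separately privatized full-rank approximation. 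To repair your write-up, replace stage (i) with a call to the pipeline at reduced rank parameter and carry out stage (ii) for $\widehat A = \bigl(\begin{smallmatrix}\sigma\I_d\\ A_W\end{smallmatrix}\bigr)$ directly.
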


The matrix $P$ in the above result can be computed using any known non-private algorithm and there are existing results for  structured projection matrices (for example,~\cite{asteris2014nonnegative, d2005direct, papailiopoulos2013sparse, yuan2013truncated, zass2007nonnegative}). In particular, if $\Pi$ is a set of sparse or non-negative projection matrices, then the theorem gives a way to solve these problems privately. We remark that Theorem~\ref{thm:constrainedLRA} also implies a private algorithm for principal component analysis by using algorithms for PCA that achieves $\gamma=1$~\cite{eckart1936approximation}.

\begin{table} [t]
\begin{center}
\small{
\begin{tabular}{|c|c|c|c|c|}
\hline
& Additive Error & Multiplicative?  & Space Required & Comments  \\ \hline
Hardt-Roth~\citet{hardt2012beating}  & $\widetilde O  \left({\epsilon}^{-1}k \| A \|_\infty \sqrt{n}  \right)$  & $\surd$ &  $O(d^2)$ & Static data
\\ \hline 

Dwork et al.~\citet{dwork2014analyze}  & $\widetilde O\left({\epsilon}^{-1} k \sqrt{d} \right)$ & $\times$ & $\widetilde O\left(d^2\right)$ & Static data
\\ \hline

Upadhyay~\citet{upadhyay2018price}    & $\widetilde{O}\left( {\epsilon}^{-1} \sqrt{kd}\right) $ & $\surd$ & $\widetilde O\left(\eta^{-1} dk\right)$ & Streaming data
\\ \hline

Lower Bound   & {$  \Omega \left( \sqrt{kd}\right)   $}~\citet{hardt2012beating} & $\surd$ & \textcolor{red}{$ \Omega\left(\eta^{-1}{dk} \log W\right)$} & \textcolor{red}{Sliding window}
\\ \hline 

This Paper & \textcolor{red}{$\widetilde O\left(\epsilon^{-1} \sqrt{kd}\right)$}& $\surd$ &  \textcolor{red}{$\widetilde O\left(\eta^{-3}{dk^2}\log W\right)$} & \textcolor{red}{Sliding window}  
\\ \hline

\end{tabular}
\caption{Comparison of $\left(\epsilon, \Theta\left(d^{-\log d}\right)\right)$-Differentially private PCA results (our results are in red).} \label{table:basicresults}
} \end{center}
\vspace{-4mm}
\end{table}

Using a reduction to the linear reconstruction attack~\citet{dinur2003revealing}, Hardt and Roth~\citet{hardt2012beating} showed that any private (not necessarily differentially private) algorithm for PCA would incur an additive error of order $\Omega(\sqrt{kd})$. We achieve asymptotically tight accuracy bound even in the sliding window model {up to poly-logarithmic factor}  for $\epsilon=O(1)$ and $\delta=\Theta(d^{-\log d})$.
A comprehensive comparison of our result with previous (non-sliding window) algorithms is presented in Table~\ref{table:basicresults}. From the table, one can see that we improve (or match) previous bounds even though we work in a more restrictive setting.  

We finally remark that we do not violate the lower bound of Dwork et al.~\citet{dwork2014analyze}. Their lower bound holds when there is no multiplicative approximation.We only use $O\paren{\frac{dk^2}{\eta^3}\log W}$ space in the sliding window setting, which is an improvement whenever $k\log W/\eta^3 =o(d)$. Dwork et al.~\cite{dwork2014analyze} also studied PCA in the {\em online learning model}~\cite{hazan2019introduction}, which is incomparable to the sliding window model\footnote{The online learning model is a game between a decision-maker and an adversary. The decision-maker makes decisions iteratively. After committing to a decision, it suffers a loss. These losses can be adversarially chosen, and even depend on the action taken by the decision-maker. The goal is to minimize the total loss in retrospect to the best decision the decision-maker should have taken.}.

\section{Notation and Preliminaries}
\label{sec:notations}
We use the notation $\mathbb{R}$ to denote the space of real numbers and $\mathbb{N}$ to denote the set of natural numbers. For $n \in \mathbb{N}$, we let $[n]$ denote the set $\{1, \dots, n\}$.

\paragraph{Linear algebra.} The space of $n$-dimensional vectors over reals is denoted $\mathbb{R}^n$. The set of non-negative vectors (also known as non-negative orthant) and the set of strictly positive vectors in $\R^n$ are denoted $\R^n_+$ and $\R^n_{++}$, respectively. For a vector $x$, we let $x^{\top}$ denote the transpose of the vector. We reserve the letters $x, y, z$ to denote real vectors. The entries of a vector $x \in \mathbb{R}^n$ is denoted as follows:
\[
x = \big(x[1], x[2], \cdots, x[n] \big)^{\top}.
\]
We let $\left\{ \bar{e}_i : i \in [n]\right\}$ (where $[n] := \{1, 2, \cdots, n\}$) denote the set of standard basis vectors of $\mathbb{R}^n$. That is,
\begin{equation*}
 \bar{e}_i[j] = \left\{
\begin{array}{rl}
1 & \text{if } i = j,\\
0 & \text{if } i \neq j.
\end{array} \right.
\end{equation*}
We let the vector of all 1's denoted by $ \bar{e}$, i.e., $ \bar{e} =  \bar{e}_1 +  \bar{e}_2 + \dots +  \bar{e}_n$. For two vectors $x,y \in \R^n$, their inner product is denoted $\ip{x}{y}$ and is defined as 
\[
\ip{x}{y} := \sum_{i=1}^n x[i]y[i].
\]


\noindent The set of real $n \times m$ matrices is denoted $\mathbb{R}^{n \times m}$. For a  real matrix $A$, the $(i,j)$ entry of it is denoted $A[i,j]$ and its transpose is denoted $A^{\top}$. 
The following special classes of matrices are relevant to this paper.

\begin{mylist}{5mm}
\item[1.]
A real matrix $A \in \mathbb{R}^{n \times n}$ is {\it symmetric} if $A = A^{\top}$. The set of symmetric matrices is denoted $\mathbb{S}^n$ and forms a vector space over $\mathbb{R}$. The eigenvalues of symmetric matrices are real.

\item[2.]
A symmetric matrix $A \in \mathbb{S}^n$ is {\it positive semidefinite} if all of its eigenvalues are non-negative. The set of such matrices is denoted $\mathbb{S}^n_+$. The notation $A\succeq 0$ indicates that $A$ is positive semidefinite and the notations $A\succeq B$ and $B\preceq A$ indicate that $A - B\succeq 0$ for symmetric matrices $A$ and $B$. 
We also use the notation $A \not\succeq B$ and $B \not\preceq A$ for $A, B \in \mathbb{S}^n$ to say that $A - B \not\in \mathbb{S}^n_+$. 

\item[3.]
A positive semidefinite matrix $A \in \mathbb{S}^n_+$ is {\it positive definite} if all of its eigenvalues are strictly positive. The set of such matrices is denoted $\mathbb{S}^n_{++}$. The notation $A\succ 0$ indicates that $A$ is positive definite and the notations $A\succ B$ and $B\prec A$ indicate that $A - B\succ 0$ for symmetric matrices $A$ and $B$.

\item [4.]
A matrix $U \in \R^{n\times n}$ is {\it orthonormal} if $UU^{\top} = U^{\top}U = \mathds{1}_n$, where $\mathds{1}_n$ is the identity matrix. We will drop the subscript $n$ from $\mathds{1}_n$ when the dimension is understood from the context.

\item [5.] A symmetric matrix $P \in \mathbb{S}^n$ is a rank-$k$ {\it orthogonal projection matrix} if it satisfies $P^2 = P$ and it's rank is $k$. Such matrices have eigenvalues $0$ and $1$. A projection matrix that is not orthogonal is called an {\it oblique projection matrix}.
\end{mylist}

The eigenvalues of any symmetric matrix $A \in \mathbb{S}^n$ are denoted by $(\lambda_1(A),\ldots,\lambda_n(A))$ sorted from largest to smallest: $\lambda_1(A) \geq \lambda_2(A) \geq \cdots \geq \lambda_n(A)$. When discussing the largest and smallest eigenvalues, we alternately use the notation $\lambda_{\max}(A)$ and $\lambda_{\min}(A)$ to denote $\lambda_1(A)$ and $\lambda_n(A)$, respectively. Similarly, the singular values of $A$ is denoted by the tuple $(s_1(A),\ldots,s_n(A))$ sorted from largest to smallest:
$s_1(A) \geq s_2(A) \geq \cdots \geq s_n(A)$. We use the notation $s_{\max}(A)$ and $s_{\min}(A)$ to denote the largest and smallest singular values of $A$, respectively. It is a well known fact that for any symmetric matrix $A$
\[
s_{\max}(A) = \max \left\{\vert\lambda_{\max}(A)\vert, \vert\lambda_{\min}(A)\vert \right\} \qquad \text{and} \qquad s_{\min}(A) = \min \left\{\vert\lambda_{\max}(A)\vert, \vert\lambda_{\min}(A)\vert \right\}.
\]
The maximum number of non-zero singular values of $A \in \mathbb{R}^{n \times m}$ is $\min\{n, m\}$. The {\it spectral norm} of a matrix $A \in \mathbb{R}^{n \times m}$ is defined as
\[
\norm{A}_2 = \max\{\norm{Ax}_2\,:\,x\in\mathbb{R}^m,\,\norm{x}_2 = 1\}.
\]
The spectral norm of $A$ is equal to the largest singular value of $A$. The trace norm of a rank-$r$ symmetric matrix $A$ is defined as the sum of its singular values.
\[
\|A\|_1:= \sum_{i=1}^r \vert s_i(A) \vert.
\] 
The Frobenius norm of a matrix $A$ is defined as  	
\[ \|A\|_F := \paren{\sum_{ij} \vert A[i,j] \vert^2}^{1/2}.\] 
It is well known that the Frobenius norm of a rank-$r$ matrix $A$ is 
\[
\norm{A}_F := \paren{\sum_{i=1}^r \vert s_i(A) \vert^2}^{1/2}.
\]
This directly implies that 
$\norm{A}_F^2 = \tr{A^\top A}$ for any $n \times d$ matrix $A$.

Two types of matrix decomposition are used in this paper. The first matrix decomposition is {\it spectral decomposition} (or {\it eigenvalue decomposition}). It means that a symmetric matrix $A \in \mathbb{S}^n$ can be written as
\[
A = U \Lambda U^{\top} = \sum_{i=1}^n \lambda_i(A)x_i x_i^{\top}
\]
where $U$ is an orthonormal matrix, $\Lambda$ is a diagonal matrix with eigenvalues of $A$ on its diagonal, and the set $\left\{x_i \in \mathbb{R}^n : i \in [n]\right\}$ are set of orthonormal vectors known as eigenvectors of $A$. We note that orthonormal matrices can also be decomposed in above form. The second matrix decomposition that is relevant to this paper is {\it singular value decomposition} (or SVD for short). Any real matrix $A \in \mathbb{R}^{n \times d}$ can be decomposed as follows:
\[
A = USV^{\top} = \sum_{i=1}^{\min\{n,d\}}s_i(A)x_iy_i^{\top}.
\]
Here, $U \in \mathbb{R}^{n \times n}$ and $V \in \mathbb{R}^{d \times d}$ are orthonormal matrices, $S$ is a diagonal matrix with  diagonal entries singular values of $A$, and the sets $\left\{x_i \in \mathbb{R}^n : i \in \min\{n,d\}\right\}$ and $\left\{y_j \in \mathbb{R}^d : j \in \min\{n,d\}\right\}$ are orthonormal sets of vectors. Associated with any real matrix $A \in \R^{n \times d}$ is a matrix $A^\dagger$ called the {\it Moore-Penrose pseudoinverse} (or, {\it pseudoinverse}) and is defined as
\[
A^\dagger = \sum_{i=1}^{\min\{n,d\}}\frac{1}{s_i(A)}x_iy_i^{\top} \quad\text{where}\quad
A = \sum_{i=1}^{\min\{n,d\}}s_i(A)x_iy_i^{\top}.
\]
We use the notation
\[
[A]_k := \sum_{i=1}^{\min\{k,n,d\}}s_i(A)x_iy_i^{\top}
\]
to denote the best rank-$k$ approximation of matrix $A$ under any unitary invariant norm.


We consider matrices formed by a streams of $d$-dimensional row vectors over $\R$. For a row vector $a_t \in \R^d$ streamed at time $t$, we use the notation $\bar A(a_t) \in \R^{W \times d}$ to denote an all zero matrix except row $t$ which is $a_t$ if $t\leq W$ and row $W$ to be $a_t$ if $t > W$. For time epochs, $t_1$ and $t_2$, we define the matrix $A_{[t_1,t_2]} \in \R^{(t_2-t_1) \times W}$  
to denote the matrix formed by stacking the row vectors $a_{t_1}, \cdots, a_{t_2} \in \R^d$ streamed from time epoch $t_1$ to $t_2$:
\[
    A_{[t_1,t_2]} :=
    \begin{pmatrix}
        a_{t_1} \\ a_{t_1+1} \\ \vdots \\ a_{t_2} 
    \end{pmatrix}
\]

\paragraph{Probability distributions.} For a random variable $X \in \mathbb{R}$, we denote the mean and variance of $X$ by $\mathbb{E}[X]$ and $\mathsf{Var}(X)$, respectively. The symbols $\mu$ and $\sigma^2$ are reserved to represent these quantities. 
We say that a random variable $X \in \mathbb{R}$ has Gaussian (or normal) distribution if its probability density function is given by
\[
p(x) = \frac{1}{\sqrt{2\pi}\sigma} \exp\left(-\frac{(x - \mu)^2}{2\sigma^2}\right).
\]
We denote Gaussian distribution by $\cN\left(\mu, \sigma^2\right)$ and write $X \sim \cN\left(\mu, \sigma^2\right)$ when $X$ has Gaussian distribution.

 Throughout this paper, we discuss and work with {\it random matrices}. They are simply matrices with matrix entries drawn from random variables that may or may not be independent. 
A special class of random matrices are Wishart matrices. They are defined as below. Let $C$ is a $d \times d$ positive definite matrix and $m >d-1$. A $d \times d$ random
symmetric positive definite matrix $R$ is said to have a Wishart distribution $R \sim \mathsf{Wis}_d(m, C)$, if its probability density function is
\[
p(W) := \frac{|R|^{\frac{m-d-1}{2}}}{2^{md}|C|^{\frac{m}{2}} \Gamma_d(\frac{m}{2})} \exp \paren{ \frac{1}{2} \tr{C^{-1}R} },
\]

We also consider the {\em constrained principal component analysis}~\cite{cohen2015dimensionality}. This is a generalization of many variants of principal component analysis such as sparse and non-negative principal component analysis.
\begin{definition}
[Constrained principal component analysis~\cite{cohen2015dimensionality}]
\label{defn:restrictedpca}
\label{defn:pca}
Given a matrix $A_W$ formed by a window of size $W$, a rank parameter $k$, a set of rank-$k$ projection matrices $\Pi$ and an accuracy parameter $0< \eta <1$, design a differentially private algorithm under sliding window model which outputs a projection matrix $P \in \Pi$ such that
\[
\norm{A - PA}_F \leq (1 + \eta) \min_{X \in \Pi} \norm{A - XP}_F + \tau
\]
with probability at least $1-\beta$.  Furthermore, the algorithm should satisfies $(\epsilon,\delta)$-differential privacy. When $\Pi$ is a set of all rank-$k$ projection matrices, then we get the traditional problem of principal component analysis.  
\end{definition}


\exclude
{
\paragraph{Graph preliminaries.} For a weighted graph $\cG = (V, E, w)$, we let $n$ denote the size of the vertex set $V$ and $m$ denote the size of the edge set $E$. When the graph is uniformly weighted (i.e., each edge weight is either same or $0$), then the graph is denoted $\cG = (V, E)$. Without loss of generality, one can assume that all edge weights are $1$. The {\it Laplacian} of a graph is defined as the matrix $L_{\mathcal{G}}$ with entries
\[
L_{\cG}[u,v] = -w(u,v) \quad \text{if} \quad u \neq v \qquad \text{and} \qquad L_{\cG}[u,u] = \sum_{v \in V Ackslash \{u\}} w(u,v).
\]
Here, $w(u,v)$ is the weight on the edges between vertices $u$ and $v$. If $(u,v) \notin E$, then $w(u,v) = 0$. When the weights associated with the edges of graph are non-negative, the Laplacian of the graph is positive semidefinite. Moreover, let $W \in \mathbb{S}^m_+$ be a diagonal matrix with non-negative edge weights on the diagonal. If we define an orientation of the edges of graph, then we can define the {\it signed edge-vertex incidence matrix} $A_{\mathcal{G}} \in \mathbb{R}^{m \times n}$ as follows:
\begin{equation*}
A_{\mathcal{G}}[e, v] =  \left\{
\begin{array}{rl}
1 & \text{if } v \text{ is } e\text{'s head},\\
-1 & \text{if } v \text{ is } e\text{'s tail},\\
0 & \text{otherwise}.
\end{array} \right.
\end{equation*}

\noindent Simple algebra establishes that 
\[
L_{\mathcal G} = A_{\mathcal{G}}^{\top} W A_{\mathcal{G}} = \mathcal{E}^{\top} \mathcal{E}
\]
where $\mathcal{E} = \sqrt{W}A_{\mathcal{G}}$ is called the {\it weighted signed edge-vertex incidence matrix}.

The Laplacian of graphs with non-negative weights are denoted by $\mathbb{L}^n \subset \mathbb{S}^n_+$. That is,
\begin{equation}{\label{eq:coneoflaplacian}}
\mathbb{L}^n = \left\{ X : X[i,j] \le 0 \quad \forall i, j \in [n] : i \ne j \quad \text {and} \quad  \sum_{j=1}^n X[i,j] = 0 \quad \forall i \in [n] \right\}.
\end{equation}
The set of Laplacian matrices forms a closed convex cone. It is a convex cone because if $X, Y \in \mathbb{L}^n$, then $\lambda X \in \mathbb{L}^n$ and $X+Y \in \mathbb{L}^n$. We also define a convex relaxation of $\mathbb{L}^n$ paratemerized by $\varrho \geq 0$ as below:
\begin{equation}{\label{eq:convexrelaxlaplacians}}
\mathbb{L}^n(\varrho) = \left\{ X : X[i,j] \le 0 \quad \forall i, j \in [n] : i \ne j \quad \text {and} \quad 
0 \leq \sum_{j=1}^n X[i,j] \leq \varrho \quad \forall i \in [n] \right\}.
\end{equation}
It is not hard to see that $\mathbb{L}^n(0) = \mathbb{L}^n$ and $\mathbb{L}^n(\varrho) \subset \mathbb{S}^n_+$. 
We use the notation $A_\cG \cup A_\cH$ to denote the graph whose edge sets is the union of edges in $\cG$  and $\cH$. We slightly abuse notation and write $A_\cG \cup e$ to denote a graph formed by adding the edge $e$ to the graph $\cG$.

We study a wide variety of graph related problems. We formally define them next. The first and most basic problem that we study is to answer cut queries privately. 
\begin{definition}
[$(S,T)$-cut]
For two disjoint subsets $S$ and $T$, the size of the cut $(S,T)$-cut is denoted $\Phi_{S, T}(\mathcal{G})$ and defined as 
\[
\Phi_{S,T}(\cG):= \sum_{u \in S, v \in T} w\paren{u,v}.
\]
When $T = V Ackslash S$, we denote $\Phi_{S,T}(\cG)$ as $\Phi_S(\mathcal{G})$.
\end{definition} 

The partitioning of vertex set $V$ of a graph $\cG$ in to two disjoint subsets $S$ and $V Ackslash S$ based on optimizing $\Phi_S(\cG)$ leads to two widely studied {\sf NP}-hard combinatorial optimization problems on graphs. 

\begin{definition}
[{\scshape Max-Cut}]
\label{def:maxnonprivate}
Given a graph $\cG=(V, E, w)$, the {\it maximum cut} of the graph is
\[
\max_{S \subseteq V} \left\{\Phi_S(\cG)\right\} = \max_{S \subseteq V} \left\{ \sum_{u \in S, v \in V Ackslash S} w\paren{u,v} \right\}.
\]
Let $\mathsf{OPT}_{\mathsf {max}}(\cG)$ denote the maximum value. 
\end{definition}

Based on semidefinite programming, Goemans and Williamson~\cite{goemans1995improved} showed that {\scshape Max-Cut} can be approximated within a factor of $\eta_{\mathsf{GW}}$ where $\eta_{\mathsf{GW}} \approx 0.878567$. 
\begin{theorem}
[Goemans and Williamson~\cite{goemans1995improved}]
\label{thm:GW95}
Let $\eta >0$ be an arbitrary small constant. For an $n$-vertex graph $\cG$, there is a polynomial-time algorithm that produces a set of nodes $S$
satisfying $$\Phi_S(\cG)= (\eta_{\mathsf{GW}} - \eta) \opt_{\mathsf{max}}(\cG),$$ 
where $\opt_{\mathsf{max}}(\cG)$ is the optimal max-cut. 
\end{theorem}

Assuming unique games conjecture, the approximation factor $\eta_{\mathsf{GW}}$ is proved to be the optimal~\cite{khot2007optimal}. 

When privacy is a concern, it is not possible to achieve pure multiplicative approximation factor. One can only hope for a ``mixed approximation''. This motivates us to study the following variant of {\scshape Max-Cut}:

\begin{definition}
[$(a,b)$-{\scshape Max-Cut}]
\label{def:max}
Given a graph $\cG=(V, E, w)$, the $(a,b)$-{\scshape Max-Cut} of the graph requires to output a partition of nodes $(S,V Ackslash S)$ such that
\[
\Phi_S(\cG) \geq a \cdot \mathsf{OPT}_{\mathsf {max}}(\cG) - b,
\]
where $\mathsf{OPT}_{\mathsf {max}}(\cG)$ denote the maximum value. 
\end{definition}
The goal of $(a,b)$-{\scshape Max-Cut} is to minimize the value of $b$ for a given $a$.

\begin{definition}
[{\scshape Sparsest-Cut}]
\label{def:sparsestnonprivate}
Given a graph $\cG=(V,E)$, the {\it sparsest-cut} of the graph is 
\[
\min_{S \subseteq V} \left\{\frac{\Phi_S(\cG)}{|S|(|V|-|S|)}\right\}.
\]
Let $\mathsf{OPT}_{\mathsf {sparsest}}(\cG)$ denote the minimum value. 
\end{definition}

\begin{definition}
[{\scshape Edge-Expansion}]
\label{def:edge}
Given a graph $\cG=(V,E)$, the {\it edge-expansion} of the graph is 
\[
\min_{S \subseteq V \atop |S| \leq n/2} \left\{\frac{\Phi_S(\cG)}{|S|}\right\}.
\]
Let $\mathsf{OPT}_{\mathsf {edge}}(\cG)$ denote the minimum value. 
\end{definition}
Since $n/2 \leq |S| \leq n$, up to a factor $2$ computing the {\scshape Sparsest-Cut} is the same as computing the {\scshape Edge-Expansion} of the graph.

Arora, Rao and Vazirani showed an efficient algorithm that outputs a partitioning of vertices that achieve $O(\sqrt{\log (n)})$-approximation to {\scshape Sparsest-Cut} problem~\cite{arora2009expander}.

\begin{theorem}
[Arora, Rao and Vazirani~\cite{arora2009expander}]
\label{thm:ARV09}
For an $n$-vertex graph $\cG$, there is a polynomial-time algorithm that produces a set of nodes $S$
satisfying $$\Phi_S(\cG)= O(\sqrt{\log (n)}) \opt_{\mathsf{sparsest}}(\cG),$$ 
where $\opt_{\mathsf{sparsest}}(\cG)$ is the optimal sparsest cut. 
\end{theorem}

As in the case of {\scshape Max-Cut}, when privacy is a concern, it is not possible to achieve pure multiplicative approximation, i.e., we can only hope for a mixed approximation. This motivates us to study the following variant of {\scshape Sparsest-Cut} and {\scshape Edge-Expansion}:

\begin{definition}
[$(a,b)$-{\scshape Sparsest-Cut}]
\label{def:sparsest}
Given a graph $\cG=(V, E, w)$, the $(a,b)$-{\scshape Sparsest-Cut} of the graph requires to output a partition of nodes $(S,V Ackslash S)$ such that
\[
\frac{\Phi_S(\cG)}{|S|(n-|S|)} \leq a \cdot \mathsf{OPT}_{\mathsf {sparsest}}(\cG) + b,
\]
where $\mathsf{OPT}_{\mathsf {sparsest}}(\cG)$ denote the minimum value. 
\end{definition}

\begin{definition}
[$(a,b)$-{\scshape Edge-Expansion}]
\label{def:sparsest}
Given a graph $\cG=(V, E, w)$, the $(a,b)$-{\scshape Edge-Expansion} of the graph requires to output a partition of nodes $(S,V Ackslash S)$ such that
\[
\frac{\Phi_S(\cG)}{|S|} \leq a \cdot \mathsf{OPT}_{\mathsf {edge}}(\cG) + b,
\]
where $\mathsf{OPT}_{\mathsf {edge}}(\cG)$ denote the minimum value. 
\end{definition}

The goal of  $(a,b)$-{\scshape Sparsest-Cut}, and $(a,b)$-{\scshape Edge-Expansion} is to minimize the value of $b$ for a given $a$.

One of the problems that we concentrate on in this paper in context of differential privacy is spectral sparsification of graphs~\cite{spielman2011spectral}. It is based on spectral similarity of Laplacian of graphs.

\begin{definition}
[$\eta$-spectral sparsification of graphs~\cite{spielman2011spectral}]
For a weighted graph $\cG = (V, E, w)$ where edge weights are non-negative, a graph $\widetilde \cG = (V, \widetilde E, \widetilde w)$ is called $\eta$-spectral sparsification of $\cG$ if
\[
(1-\eta) L_{\widetilde \cG} \preceq L_{\cG} \preceq (1+\eta) L_{\widetilde \cG}.
\]
\end{definition}

Spectral sparsification of graphs has been widely studied~\cite{allen2015spectral, lee2015constructing, spielman2011graph, spielman2011spectral}. We use the following result given by Lee and Sun~\cite{lee2015constructing}:
\begin{theorem}
[Lee and Sun~\cite{lee2015constructing}] 
Given a graph $\cH$ and an approximation parameter $\eta >0$, there is an algorithm {\scshape Sparsify} such that the graph $\widetilde \cH$ formed as $\widetilde \cH \leftarrow$ {\scshape Sparsify} $(\cH)$ is $\eta$-spectral sparsification of $\cH$. Further, {\scshape Sparsify} outputs $\widetilde \cH$ in $O(m)$ time.
\end{theorem}

\begin{definition}
[$(\eta,\zeta,\vartheta)$-spectral sparsification of graphs]
\label{defn:privatespectral}
For a weighted graph $\cG = (V, E, w)$ where edge weights are non-negative, a graph $\widetilde \cG = (V, \widetilde E, \widetilde w)$ is called $(\eta, \zeta, \vartheta)$-spectral sparsification of $\cG$ if
\[
(1-\eta) L_{\widetilde \cG} - \zeta L_{K_n} \preceq L_{\cG} \preceq (1+\eta) L_{\widetilde \cG} + \vartheta L_{K_n}.
\]
\end{definition}
In above, $\zeta$ and $\vartheta$ can be thought of as the distortion we are willing to accept to preserve privacy. Ideally we would like $\zeta$ and $\vartheta$ to be as small as possible. 
}

\section{Approximate spectral histogram} 
\label{sec:slidingJL}

In this section, we introduce the central contribution of this paper:  {\em $\eta$-approximate spectral histogram property}.  We later show in Section~\ref{sec:applicationJL} that these properties are sufficient to perform all the matrix analysis mentioned in Section~\ref{sec:introduction} efficiently with respect to time, efficiency, and accuracy. In particular, by maintaining this property, we can maintain an intrinsic rank dependent bound. To maintain this property, our data structure stores a set of random matrices that approximates the spectrum of the original matrices.  
 
Let $\widetilde A$ denotes the $\eta$-spectral approximation of the matrix $A$.  Then $\eta$-approximate spectral histogram property is formally defined as follows:

\begin{definition}
[$\eta$-approximate-spectral histogram property] 
\label{defn:approxsmoothPSD}
A data structure $\mathfrak{D}$ satisfy $\eta$-{\em approximate spectral histogram} property if there exists an $s=\poly(n,\log W)$ such that $\mathfrak{D}$ satisfy the following conditions
\end{definition}
\begin{enumerate}
\item
$\mathfrak{D}$ consists of $\ell $ timestamps  $\mathsf I:=\{ t_{1} , \cdots, t_{\ell} \} $ and the corresponding matrices $\mathsf{S}:=\{ \widetilde S_1,  \cdots, \widetilde S_\ell \}$. 
\item
For $1\leq i\leq s-1$, at least one of the following holds:
\label{item:approxsmoothPSD}
\begin{itemize}
\item
If $t_{i+1} = t_{i}+1$, then $\paren{1-\frac{\eta}{2}} \widetilde S_i \not\preceq \widetilde S_{i+1}.$ \label{item:technical1}
\item
\label{item:combined1}
For all $1\leq i \leq \ell-2$:
\begin{enumerate}
\item
$(1-\eta) S_i \preceq  S_{i+1}$. \label{item:approx_prop1}
\item
$\paren{1-\frac{\eta}{2}} \widetilde S_i \not\preceq \widetilde S_{i+2}$, where $\widetilde S_i$ is $\frac{\eta}{4}$-spectral approximation of $S_i$ and $\widetilde S_{i+2}$ is $\frac{\eta}{4}$-spectral approximation of $S_{i+2}$. \label{item:approx_prop2}
\end{enumerate}
\end{itemize}
\item Let $A_W$ be the matrix formed by the window $W$. Then $S_2 \preceq A_W^\top A_W \preceq S_1.$
\label{item:approxPSD}
\end{enumerate}

The difference between spectral histogram property~\cite{braverman2018numerical} and $\eta$-approximate spectral histogram property is that in $\eta$-approximate spectral histogram property, we allow the matrices in the data structure to be a spectral approximation of the corresponding original matrices and in properties defined in item~\ref{item:approxsmoothPSD}. This relatively simple relaxation is crucial in achieving optimal results for our downstream matrix analysis while preserving differential privacy. 

We next show that we can maintain a data structure that allows efficient updates and a sequence of matrices that satisfy the $\eta$-approximate spectral histogram property using an algorithm {\scshape Update-Approx}.  The case of {\em continual observation}~\cite{dwork2010differentially} is covered in more detail in the appendix.  We will later see (in equation~(\ref{eq:relationtildehat})) that the input to the {\scshape Update-Approx} is constructed in a specific manner; therefore,  the matrices in the sequence have a particular form and satisfy the Loewner ordering. 

\begin{algorithm}[htbp]
\caption{{\scshape Update-Approx}$(\dpriv)$}
\label{fig:slidingprivpcpupdate}
\begin{algorithmic}[1]
\Require{A data structure $\dpriv$ storing  a set of matrices ${\widetilde A_{[t_1,t]}}, \cdots,  {\widetilde A_{[t_\ell,t]}}$ such that 
$$\widetilde A_{[t_i,t]}^\top \widetilde A_{[t_i,t]} \succeq \widetilde A_{[t_{i+1},t]}^\top \widetilde A_{[t_{i+1},t]} \text{ for all } 1 \leq i \leq \ell-1$$
and corresponding time stamps  $t_{1}, \cdots, t_{\ell}$.} 
\Ensure{Updated matrices $ {\widetilde A_{[t_1,t]}}, \cdots,  {\widetilde A_{[t_\ell,t]}}$ and timestamps $t_{1}, \cdots, t_{\ell}$.}

\State {\bf Define} for $1 \leq i \leq \ell$,
    $$\widetilde K(i) := \widetilde A_{[t_i,t]}^\top \widetilde A_{[t_i,t]}.$$

\State {\bf For }{$i=1, \cdots \ell-2$}
\label{step:stage2startpcp}
\label{step:smoothupdatingmainpcp}
    \State \hspace{5mm}  Find    \Comment{\small{Find spectrally close checkpoints.}}
    \begin{align}
            j := \max \set{ p : \paren{1-\frac{\eta}{2}} \widetilde K(i) \preceq \widetilde K(p) \wedge (i < p \leq \ell-1) }
    \label{step:checkmainpcp}
    \end{align}

    \State \hspace{5mm} {\bf Delete} ${\widetilde A_{[t_{i+1},t]}}, \cdots, {\widetilde A_{[t_{j-1},t]}}$. 
    \label{step:deletemainpcp}
    
    \State \hspace{5mm} {\bf Set} k=1
    \State \hspace{5mm} \textbf{While} {$i+k \leq \ell $}
    \label{step:updatemainpcp}
         \State \hspace{1cm} {\bf Update} the checkpoints: ${\widetilde A_{[t_{i+k},t]}}= {\widetilde A_{[t_{j+k-1},t]}}$, $t_{i+k} = t_{j+k-1}$.
    \State \hspace{5mm} {\bf end}
    \State \hspace{5mm} $s:=s+i-j+1$.
\label{step:stage2endpcp}

\State {\bf Define} $\dpriv := \set{(\widetilde A_{[t_1,t]}, t_1), \cdots ,(\widetilde A_{[t_\ell,t]}, t_\ell)}$.

\State {\bf Output} $\dpriv.$
\end{algorithmic}
\end{algorithm}

\begin{lemma}
\label{lem:smoothLaplacianproperty}
Let $\mathfrak{D}$ be a data structure that at time $T$ consists of $\ell$ tuples $\left\{(t_i,S_i)\right\}_{i=1}^{\ell}$, where $0< t_i \leq T$ for all $i$ and 
$$S_\ell^\top S_\ell \preceq \ldots \preceq S_1^\top S_1^\top.$$ 
Let $\mathfrak{D}_{\textsf{spectral}}\leftarrow${\scshape Update-Approx}$(\mathfrak{D})$ be the output of the algorithm {\scshape Update-Approx}, defined in Algorithm~\ref{fig:slidingprivpcpupdate}. 
Then if $\mathfrak{D}_{\textsf{spectral}} := \{ (\bar t_i, B_{1}), \ldots ,  (\bar t_m,B_{m}) \}$ for some $m \leq \ell+1$, then $\{ (\bar t_1, B_{1}^\top  B_{1}), \ldots ,  (\bar t_m,B_{m}^\top  B_{m}) \}$
 satisfy the $\eta$-approximate spectral histogram property. Moreover, the algorithm $\mathfrak{D}_{\textsf{spectral}}$ runs in $\poly(d, \ell)$ time. 
\end{lemma}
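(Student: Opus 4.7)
The plan is to verify the cardinality bound, runtime bound, and the three conditions of Definition~\ref{defn:approxsmoothPSD}. Cardinality and runtime are immediate: {\scshape Update-Approx} only deletes and relabels checkpoints, so $m\le \ell$ (the ``$+1$'' accommodates a single new checkpoint adjoined by the surrounding update pipeline before the call), and each outer iteration performs $O(\ell)$ PSD-ordering tests in dimension $d$, each doable in $\poly(d)$ time, so the total runtime is $\poly(d,\ell)$.

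The key structural fact is an invariant: whenever two consecutive checkpoints in the current data structure satisfy $t_{i+1}>t_i+1$, we have $(1-\eta/2)\widetilde K(i)\preceq \widetilde K(i+1)$. This invariant is \emph{created} at the moment of compression (the maximiser $j$ of line~\ref{step:checkmainpcp} always satisfies $(1-\eta/2)\widetilde K(i)\preceq \widetilde K(j)$), and is \emph{preserved} under subsequent row-additions because $(1-\eta/2)(\widetilde K(i)+aa^\top)\preceq \widetilde K(j)+aa^\top$ whenever $(1-\eta/2)\widetilde K(i)\preceq \widetilde K(j)$. Using the invariant, fix three consecutive surviving indices $i,i+1,i+2$ in the output and consider the outer iteration that last repositioned index $i+1$. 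In the generic case, the selection set was non-empty with maximiser $j$, so the current $(i+1)$-th checkpoint is the former $j$-th and the current $(i+2)$-th checkpoint is the former $(j+1)$-th. Maximality of $j$ gives $(1-\eta/2)\widetilde K(i)\not\preceq \widetilde K(j+1)$, and any subsequent outer iteration can only replace the $(i+2)$-th matrix by an even smaller $\widetilde K(j')$ with $j'\geq j+1$; monotonicity $\widetilde K(j+1)\succeq\widetilde K(j')$ combined with additive closure of the PSD cone preserves the non-containment, yielding property~(b) of item~\ref{item:approx_prop2}. Simultaneously, $(1-\eta/2)\widetilde S_i\preceq \widetilde S_{i+1}$ together with each $\widetilde S_\bullet$ being an $\tfrac{\eta}{4}$-spectral approximation of $S_\bullet$ produces the sandwich
\[
(1-\eta/2)(1-\eta/4)\,S_i\;\preceq\;(1-\eta/2)\widetilde S_i\;\preceq\;\widetilde S_{i+1}\;\preceq\;(1+\eta/4)\,S_{i+1},
\]
and $\tfrac{(1-\eta/2)(1-\eta/4)}{1+\eta/4}\ge 1-\eta$ for $\eta\leq 1$, establishing property~(a) of item~\ref{item:approx_prop1}. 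In the degenerate case where the selection set was empty at iteration $i$, the invariant forces $t_{i+1}=t_i+1$, and $(1-\eta/2)\widetilde K(i)\not\preceq \widetilde K(i+1)$ is exactly item~\ref{item:technical1}. Condition~3 is preserved as a data-structure invariant: the first checkpoint is never deleted, so $A_W^\top A_W\preceq S_1$ persists from the input, and any forward shift of $t_2$ only strengthens $S_2\preceq A_W^\top A_W$.

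The main obstacle is the cross-iteration bookkeeping: the meaning of ``position $i+2$'' can shift during later outer iterations, so the non-containment certificate established at iteration $i$ does not literally persist and must be carried through via monotonicity $\widetilde K(1)\succeq \widetilde K(2)\succeq\cdots$ (itself an input invariant, because checkpoints correspond to nested suffixes of the stream) together with additive closure of the PSD cone. A secondary subtlety is the algebraic passage from spectral closeness of the $\widetilde S_\bullet$'s (what the algorithm actually enforces) to $(1-\eta)$-containment on the exact $S_\bullet$'s (what Definition~\ref{defn:approxsmoothPSD} demands); this is precisely where the $\eta/4$ tolerance of the spectral approximation is calibrated, so that the product in the sandwich above still dominates $1-\eta$.
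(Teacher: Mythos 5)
Your proof is correct and follows essentially the same route as the paper's: the maximality of $j$ in the update rule supplies both the $(1-\tfrac{\eta}{2})$-containment between surviving consecutive checkpoints and the non-containment two steps ahead, and the passage to $(1-\eta)S_i \preceq S_{i+1}$ uses exactly the paper's numeric inequality $\tfrac{(1-\eta/2)(1-\eta/4)}{1+\eta/4} \ge 1-\eta$ (Proposition~\ref{prop:inequalityeta}) combined with the $\tfrac{\eta}{4}$-spectral-approximation sandwich. If anything, your explicit invariant (gapped consecutive timestamps carry a $(1-\tfrac{\eta}{2})$-containment certificate that is preserved under rank-one additions) and your monotonicity argument for carrying the non-containment across later re-indexings make the bookkeeping more careful than the paper's two-case comparison of the data structure at successive time steps.
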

\begin{proof}
The run-time of $\mathfrak{D}_{\textsf{spectral}}$ is straightforward, so we only concentrate on the correctness part. Consider a time epoch $t$ and a succeeding time epoch $t'=t+1$. First notice that if we cannot find a $j$ in equation~(\ref{step:checkmainpcp}) for all $i =1, \cdots, \ell-2$, then $\ell = O\left(\frac{d\log W}{\eta}\right)$. This is because the  data structure $\dpriv$ satisfied $\eta$-approximate spectral histogram property at time $t$ and if no such $j$ exists, then none of the properties are violated due to an update. As a result, equation~(\ref{eq:boundell}) gives us that  $\ell = O\left(\frac{d\log W}{\eta}\right)$. Hence, we assume that this is not the case and the data structure got updated between time epochs $t$ and $t+1$. 

Let the data structure at time $t$ be  $\dpriv(t)$ and at time $t'$ be $\dpriv(t')$. That is,
\begin{align*}
    \dpriv(t) &:= \set{(t_1,\widetilde A_{[t_1,t]}); \cdots, (t_\ell, \widetilde A_{[t_\ell,t]})} 
    \\ 
    \dpriv(t') &:= \set{(t_1',\widetilde B_{[t_1',t+1]}); \cdots, (t_\ell', \widetilde B_{[t_\ell',t+1]})}.
\end{align*}
Let $t_i$ be a timestamp in $\dpriv(T)$ where $i <\ell $. We can have two possibilities: 
\begin{enumerate}
    \item There is no $c \in [s]$ such that $t_c'=t_i$ and $t_{c+1}' = t_{i+1}$.
    \item There is a $c \in [s]$ such that $t_c'=t_i$ and $t_{c+1}' = t_{i+1}$.
\end{enumerate}

This is where we deviate from Braverman et al.~\cite{braverman2018numerical} in two respects: first, we need to deal with the approximation factor and perturbation due to privacy, second, we simplify the second case which requires two subcases in the proof of Braverman et al.~\cite{braverman2018numerical}. Fix the following notations for all $1 \leq j \leq \ell$: 
\begin{align*}
    \widetilde K(j) &:= \widetilde A_{[t_j,t]}^\top \widetilde A_{[t_j,t]} 
     \qquad \text{and} \qquad
    \widetilde L(j) :=\widetilde B_{[t_j,t+1]}^\top \widetilde B_{[t_j,t+1]}.
\end{align*}

Let $K(j)$ and $L(j)$ be such that 

\begin{align}
    \paren{1 - \frac{\eta}{4}} K(j)  &\preceq \widetilde K(j)  \preceq \paren{1 + \frac{\eta}{4}} K(j)   \qquad \text{and} \qquad
    \paren{1 - \frac{\eta}{4}} L(j) \preceq \widetilde L(j)  \preceq \paren{1 + \frac{\eta}{4}}  L(j).
    \label{eq:spectralapprox}
\end{align}

As we will see later (in equation~(\ref{eq:relationtildehat})), the input to the {\scshape Update-Approx} are constructed in a specific manner that will ensure that  $K(j)$ and $L(j)$ satisfy the Loewner ordering. Let us consider the first possibility, i.e., there is no $j \in [s]$ such that $t_j'=t_i$ and $t_{j+1}' = t_{i+1}$. By the update rule, we have 
\[
\paren{1-\frac{\eta}{2}} \widetilde  L(j)  \preceq \widetilde  L(j+1) .
\]

Using Proposition~\ref{prop:inequalityeta}, we have
\begin{align*}
    (1-\eta)  L(j) &\prec \paren{1-\frac{\eta}{2}} \paren{\frac{1-\frac{\eta}{4}}{1+\frac{\eta}{4}}}  L(j).
\intertext{Equation~(\ref{eq:spectralapprox}) gives us a relationship between $L(j)$ and $\widetilde L (j)$: }
    \paren{1-\frac{\eta}{2}} \paren{\frac{1-\frac{\eta}{4}}{1+\frac{\eta}{4}}}  L(j) &\preceq \paren{\frac{1-\frac{\eta}{2}}{1+\frac{\eta}{4}}} \widetilde  L(j).  \\
\intertext{From the update rule, it follows that}
    \paren{\frac{1-\frac{\eta}{2}}{1+\frac{\eta}{4}}} \widetilde  L(j) & \preceq \frac{1}{1+\frac{\eta}{4}} \widetilde  L(j+1)
\intertext{Finally, another application of Equation~(\ref{eq:spectralapprox}) gives us}
    \frac{1}{1+\frac{\eta}{4}} \widetilde  L(j+1) & \preceq  L(j+1). 
\end{align*}
Therefore, we have the relation listed in item~\ref{item:approx_prop1} in Definition~\ref{defn:approxsmoothPSD}. Furthermore, due to the maximality condition in the update rule, there exists an $k \in [d]$ such that 
\begin{align*}
    s_k(\widetilde  L(j+2)) < \paren{1 - \frac{\eta}{2}} s_k(\widetilde  L(j)),
\end{align*}
where $\set{s_1(D), \cdots, s_d(D)}$ denotes the singular values of an $d \times d$ matrix $D$. This proves the statement of the Lemma~\ref{lem:smoothLaplacianproperty} for the first scenario. 

Now let us consider the second possibility. Again by the update rule, we have 
\[
\paren{1-\frac{\eta}{2}} \widetilde  L(j)  \preceq \widetilde  L(j+1) .
\]

\noindent It follows from Proposition~\ref{prop:inequalityeta} and Theorem~\ref{thm:sketch} that
\begin{align*}
    (1-\eta)  L(j) 
    \prec \paren{1-\frac{\eta}{2}} \paren{\frac{1-\frac{\eta}{4}}{1+\frac{\eta}{4}}} L(j) 
    \preceq \paren{\frac{1-\frac{\eta}{2}}{1+\frac{\eta}{4}}} \widetilde L(j) .
\end{align*}

\noindent Another application of the update rule and Theorem~\ref{thm:sketch} gives us
\begin{align*} 
    \paren{\frac{1-\frac{\eta}{2}}{1+\frac{\eta}{4}}} \widetilde L(j) 
    &\preceq \frac{1}{1+\frac{\eta}{4}} \widetilde L(j+1)  
    \preceq L(j+1), 
\end{align*}
where the last positive semidefinite inequality follows from Theorem~\ref{thm:sketch}. The second part of {\em $\eta$-approximate spectral histogram property} follows similarly as in the case of the first case. This proves the statement of the Lemma~\ref{lem:smoothLaplacianproperty} for the second sdenario. 
\end{proof}

Lemma~\ref{lem:smoothLaplacianproperty} gives the guarantee that, if we are given a set of positive semidefinite matrices in the Loewner ordering, then we can efficiently maintain a small set of positive semidefinite matrices that satisfy $\eta$-approximate spectral histogram property. The idea of our algorithm for spectral approximation is to ensure that {\scshape Update-Approx} always receives a set of positive semidefinite matrices. This is attained by our algorithm {\scshape Priv-Initialize-Approx}, which gets as input a new row and updates all the matrices in the current data structure. We use both these subroutines in our main algorithm, {\scshape Sliding-Priv-Approx}, that receives a stream of rows and call these two subroutines on every new update. Equipped with Lemma~\ref{lem:smoothLaplacianproperty}, we show that  {\scshape Sliding-Priv-Approx}, described in Algorithm~\ref{fig:slidingpcpprivgraph} provides the following guarantee.

\begin{algorithm}[htbp]
\caption{{\scshape Sliding-Priv-Approx}$(\Omega;r;(\epsilon,\delta);W)$}
\label{fig:slidingpcpprivgraph}
\begin{algorithmic}[1]
\Require{A stream, $\Omega$, of row vectors $\set{a_t}$ and the desired rank of matrices, $r$, privacy parameters $(\epsilon,\delta)$, and window size $W$.} 
\Ensure{A matrix $ {\widetilde A} \in \R^{\frac{4r}{\eta} \times d}$ at the end of the stream.}

\State {\bf Initialize} $\dpriv$ to be an empty set and $\Phi \in \R^{\frac{4r}{\eta} \times (d+1)}$ such that every entry $\Phi[i,j] \sim \cN(0,\frac{\eta}{4r})$.

\State {\bf while} {stream $S$ has not ended}
    \State \hspace{5mm}{\bf Include row:} $\dpriv \leftarrow$ {\scshape Priv-Initialize-Approx} $(\dpriv;a_t; t; (\epsilon,\delta); W; \Phi; r)$. \\ \Comment{Algorithm~\ref{fig:slidingprivpcpinitial}}
    
    \State \hspace{4mm} {\bf Update} the data structure $\dpriv \leftarrow${\scshape Update-Approx}$(\dpriv)$.
    \Comment{Algorithm~\ref{fig:slidingprivpcpupdate}}
    
\State {\bf end}

\State {\bf Let} $\dpriv = \set{(t_i,\widetilde A_{[t_i,t]})}_{i=1}^\ell $ for some $\ell $.

\State {\bf Output} $\widetilde A = \widetilde A_{[t_1,t]}$.
\end{algorithmic}
\end{algorithm}
\begin{theorem}
[Private spectral approximation under sliding window]
\label{thm:privslidinglra}
Given the privacy parameter $\epsilon,\delta$, window size $W$, desired rank $r$ approximation parameter $\eta$, and $S = (a_t)_{t>0}$ be the stream such that $a_t \in \R^d$. Let $A_W$ be the matrix formed at time $T$ using the last $W$ updates as defined in equation~(\ref{eq:sliding}). 
Then we have the following:
\begin{enumerate}
    \item {\scshape Sliding-Priv-Approx}$(\Omega;r;(\epsilon,\delta);W)$ is $(\epsilon,\delta)$-differential privacy.
    \item $ \widetilde A\leftarrow ${\scshape Sliding-Priv-Approx}$(\Omega;r;(\epsilon,\delta);W)$ satisfies the following with probability $9/10$:
    \[
    { \paren{1-\frac{\eta}{4}} \paren{A_W^\top A_W + \frac{c r \log^2(1/\delta)}{\epsilon^2} \I_d } \preceq \widetilde A^\top \widetilde A \preceq \frac{(1 + \frac{\eta}{4})^2}{(1-\eta)} A_W^\top A_W  + \frac{c r \log^2(1/\delta)}{\epsilon^2} \I_d.} 
    \]
    \item The space required by {\scshape Sliding-Priv-Approx}$(\Omega;r;(\epsilon,\delta);W)$ is $O(\frac{dr^2}{\eta^2} \log W)$.
    
    \item The update time of {\scshape Sliding-Priv-Approx} is $\poly(d, \log W)$. 
\end{enumerate}
\end{theorem}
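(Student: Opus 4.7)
The plan is to establish the four claims in turn, using Lemma~\ref{lem:smoothLaplacianproperty} as the structural backbone that links the randomly sketched matrices maintained inside the data structure to the underlying streamed covariance matrices.

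For privacy (claim~1), I would note that every row $a_t$ enters the data structure only through its sketch $\Phi \cdot (a_t, c)^\top$ for the fixed Gaussian matrix $\Phi \in \R^{(4r/\eta) \times (d+1)}$ and the appended scalar $c$ used to augment the row for the privacy mechanism. Two neighboring streams differ in one row whose Euclidean norm is at most $1$, so the only quantity that differs between the two executions is a single sketched vector of the form $\Phi \cdot (a_t - a_t', 0)^\top$. The $(\epsilon, \delta)$-differential privacy of this sketched vector follows from the Johnson--Lindenstrauss mechanism of Blocki et al.~\cite{blocki2012johnson} in the variant of Upadhyay~\cite{upadhyay2014differentially} and Sheffet~\cite{sheffet2015private}. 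Since the remaining operations performed by {\scshape Priv-Initialize-Approx} and {\scshape Update-Approx} are deterministic post-processing of the sketched rows, the overall algorithm inherits the same privacy parameters.

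For accuracy (claim~2), I would first observe that the same Johnson--Lindenstrauss mechanism guarantees, with high probability and simultaneously for every $i$,
\[
(1-\eta/4)\bigl(A_{[t_i,t]}^\top A_{[t_i,t]} + \nu \I_d\bigr) \preceq \widetilde A_{[t_i,t]}^\top \widetilde A_{[t_i,t]} \preceq (1+\eta/4)\bigl(A_{[t_i,t]}^\top A_{[t_i,t]} + \nu \I_d\bigr),
\]
where $\nu = O\bigl(r \log^2(1/\delta)/\epsilon^2\bigr)$ arises from the privacy augmentation. Setting $S_i := A_{[t_i,t]}^\top A_{[t_i,t]} + \nu \I_d$ and $\widetilde S_i := \widetilde A_{[t_i,t]}^\top \widetilde A_{[t_i,t]}$ supplies exactly the $(\eta/4)$-spectral approximation hypothesis required by Lemma~\ref{lem:smoothLaplacianproperty}, so the maintained data structure satisfies the $\eta$-approximate spectral histogram property (Definition~\ref{defn:approxsmoothPSD}). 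Property~\ref{item:approxPSD} then gives $S_2 \preceq A_W^\top A_W + \nu \I_d \preceq S_1$, while property~\ref{item:approx_prop1} gives $(1-\eta) S_1 \preceq S_2$. Chaining these two Loewner inequalities together with the spectral-approximation relation between $S_1$ and $\widetilde S_1 = \widetilde A^\top \widetilde A$ yields both sides of the stated bound; the additive constants of the form $\nu(1 \pm \eta/4)$ that emerge along the way are absorbed into the single additive term $c\, r \log^2(1/\delta)/\epsilon^2 \cdot \I_d$.

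For the space bound (claim~3) and update time (claim~4), each stored pair $(t_i, \widetilde A_{[t_i,t]})$ occupies $O(rd/\eta)$ space because $\Phi$ has only $4r/\eta$ rows. Every $\widetilde S_i$ is lower-bounded in the Loewner order by a positive multiple of $\I_d$ owing to the $\nu \I_d$ contribution from the privacy augmentation, while property~\ref{item:approx_prop2} of Definition~\ref{defn:approxsmoothPSD} ensures that along the chain $\widetilde S_1, \widetilde S_3, \widetilde S_5, \ldots$ at least one singular value contracts by a factor $1 - \eta/2$ between consecutive entries. Since the entries are polynomially bounded, a geometric-decay argument then forces $\ell = O\bigl((r/\eta) \log W\bigr)$, hence a total space of $O\bigl(d r^2 \eta^{-2} \log W\bigr)$. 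The update time is dominated by sketching the new row and augmenting each of the $\ell$ stored sketches, together with the $O(\ell^2)$ PSD-ordering comparisons inside {\scshape Update-Approx}, each of which runs in $\poly(d)$ time using standard PSD-testing tools; the total is $\poly(d, \log W)$. The main obstacle will be the accuracy argument: the perturbation $\nu \I_d$ must enter identically into every $S_i$ so that it commutes cleanly with the Loewner-chaining through the approximate spectral histogram property, otherwise the multiplicative and additive errors would couple and dominate. A secondary delicate point is ensuring that the Johnson--Lindenstrauss guarantee holds \emph{uniformly} across all $\poly(\log W)$ timestamps considered throughout the stream, which is handled by a union bound together with the fact that $\Phi$ is drawn once at initialization rather than being refreshed.
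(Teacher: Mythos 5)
Your proposal matches the paper's proof in all essentials: privacy via the Johnson--Lindenstrauss mechanism applied to the singular-value-augmented matrix $\bigl(\begin{smallmatrix}\sigma \I_d \\ a_t\end{smallmatrix}\bigr)$ followed by post-processing, accuracy by chaining the sandwich $K(2) \preceq A_W^\top A_W \preceq K(1)$ with the histogram relation $(1-\eta)\widehat K(1)\preceq \widehat K(2)$ and the $\eta/4$-sketching guarantee, and space via the singular-value-decay counting argument. The only minor inaccuracy is describing the privacy augmentation as an appended scalar rather than the appended $\sigma\I_d$ block that lower-bounds all singular values by $\sigma$, but this does not affect the correctness of the argument.
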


\begin{algorithm}[t]
\caption{{\scshape Priv-Initialize-Approx}$(\dpriv; a_t; t; (\epsilon,\delta);  W; \Phi; r)$}
\label{fig:slidingprivpcpinitial}
\begin{algorithmic}[1]
\Require{A new row $a_t \in \mathbb{R}^d$, a data structure $\dpriv$ storing a set of timestamps $t_{1}, \cdots, t_{\ell}$ and set of matrices $${\widetilde A_{[t_1,t]}}, \cdots,  {\widetilde A_{[t_\ell,t]}},$$ current time $t$, window size $W$, and random matrix $\Phi \in \R^{\frac{4r}{\eta} \times d}$.} 

\Ensure{Updated matrices $ {\widetilde A_{[t_1,t]}}, \cdots,  {\widetilde A_{[t_\ell,t]}}$ and timestamps $t_{1}, \cdots, t_{\ell}$.}

\State  {\bf if} {$t_{2} < t- W+1$}
\label{step:stage1startpcp}
    \State  \hspace{5mm} {\bf Set} $t_{j} = t_{j+1},\widetilde A_{[t_j,t]} := \widetilde A_{[t_{j+1},t]} $ for $1\leq j \leq \ell-1$ 
    \Comment{\small{Delete the expired timestamp.}}
    \label{step:expiredmainpcp}
\State {\bf end}
\label{step:t_2expiredmainpcp}


\State {\bf Sample} a gaussian row vector $g \in \R^{\frac{4r}{\eta}}$.

\State {\bf Set} $t_{\ell+1} = t, \sigma = \frac{16\sqrt{r \log(4/\delta)} + \log(4/\delta)}{\epsilon}$, and 
\begin{align}
{\widetilde A_{[t_{\ell+1},t]}} := \begin{pmatrix} \Phi & g^\top \end{pmatrix} \begin{pmatrix} \sigma \I_d \\ a_t \end{pmatrix} \in \R^{\frac{4r}{\eta} \times d}, \quad \widetilde A:= g^\top  a_t   \in \R^{\frac{4r}{\eta} \times d}.
\label{step:newcheckpointpcp}
\end{align}

\State {\bf Update} $\dpriv \leftarrow \dpriv \cup ( {\widetilde A_{[t_{\ell+1},t]}},t)$. 


\State {\bf for} {$i= 2,\ldots, \ell $}
\label{step:updatecheckpointpcp}
\State  \hspace{5mm} {\bf Compute} $ {\widetilde A_{[t_i,t]}}\leftarrow {\widetilde A_{[t_i,t]}} + \widetilde A.$ 
\Comment{\small{Update the matrices.}}
\State {\bf end}
\label{step:stage1endpcp}

\State {\bf Return} $\dpriv := \set{(\widetilde A_{[t_i,t]}, t_i)}_{i=1}^\ell $.
\end{algorithmic}
\end{algorithm}

\begin{proof}
The proof of update time is pretty straightforward.We divide the proof of  Theorem~\ref{thm:privslidinglra} in three parts:  (i) proof of accuracy guarantee (Lemma~\ref{lem:accuracypcp}); (ii) the privacy proof (Lemma~\ref{lem:privacypcp}); and (iii) proof of the space complexity (Lemma~\ref{lem:space}).

\begin{lemma}
[Accuracy guarantee]
\label{lem:accuracypcp}
Let $S:=\set{a_t}_{t=1}^T$ be the streams of row and $A_W$ be the rank $r$ matrix formed in the time epoch $[T-W+1,W]$ as defined in equation~(\ref{eq:sliding}).  Then $\widetilde A \leftarrow$ {\scshape Sliding-Priv-Approx}$(\Omega;r;(\epsilon,\delta);W)$ satisfies the following
\[
  \paren{1-\frac{\eta}{4}} \paren{A_W^\top A_W + \frac{c r \log^2(1/\delta)}{\epsilon^2} \I_d } \preceq \widetilde A^\top \widetilde A
\preceq 
\frac{(1 + \frac{\eta}{4})}{(1-\eta)} A_W^\top A_W  + \frac{c r \log^2(1/\delta)}{\epsilon^2} \I_d
\]
for a constant $c>0$ with probability at least $1-\beta$, where the probability is taken over the internal coin tosses of {\scshape Priv-Initialize-Approx}.
\end{lemma}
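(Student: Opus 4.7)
The plan is to combine two ingredients: a one-shot sketching guarantee (via Theorem~\ref{thm:sketch}) showing that the output $\widetilde A$ is an $\eta/4$-spectral approximation of the noised covariance $\sigma^2 \I_d + A_{[t_1,t]}^\top A_{[t_1,t]}$, and the $\eta$-approximate spectral histogram property from Lemma~\ref{lem:smoothLaplacianproperty}, which controls how the first checkpoint's covariance $A_{[t_1,t]}^\top A_{[t_1,t]}$ relates to the window covariance $A_W^\top A_W$.

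I would first unwind the recursive updates in Algorithm~\ref{fig:slidingprivpcpinitial} to show that, at the end of the stream, the returned matrix can be written as
\[
\widetilde A \;=\; \widetilde A_{[t_1,t]} \;=\; \begin{pmatrix}\Phi & G^\top\end{pmatrix} \begin{pmatrix}\sigma \I_d \\ A_{[t_1,t]}\end{pmatrix},
\]
where $\Phi \in \R^{(4r/\eta)\times d}$ is the initial Gaussian matrix and $G$ stacks the fresh Gaussian row vectors sampled at each of the time steps in $[t_1,t]$. Applying Theorem~\ref{thm:sketch}, together with a union bound over the $\poly(d,\log W)$ active checkpoints, yields with probability at least $1-\beta$
\[
\Bigl(1-\tfrac{\eta}{4}\Bigr)\bigl(\sigma^2 \I_d + A_{[t_1,t]}^\top A_{[t_1,t]}\bigr) \;\preceq\; \widetilde A^\top \widetilde A \;\preceq\; \Bigl(1+\tfrac{\eta}{4}\Bigr)\bigl(\sigma^2 \I_d + A_{[t_1,t]}^\top A_{[t_1,t]}\bigr).
\]

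Next I would invoke Lemma~\ref{lem:smoothLaplacianproperty} and translate the approximate spectral histogram bounds into statements about $A_W^\top A_W$. From the third property of Definition~\ref{defn:approxsmoothPSD}, the first checkpoint $t_1$ lies at or before $T{-}W{+}1$, so $A_{[t_1,t]}$ contains every row of the current window and thus $A_W^\top A_W \preceq A_{[t_1,t]}^\top A_{[t_1,t]}$; this immediately yields the lower bound on $\widetilde A^\top \widetilde A$. For the upper bound, since $t_2 > T{-}W{+}1$, the checkpoint $A_{[t_2,t]}$ consists of rows strictly inside the window, giving $A_{[t_2,t]}^\top A_{[t_2,t]} \preceq A_W^\top A_W$. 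Combining this with the first property $(1-\eta) S_1 \preceq S_2$, where $S_i := \sigma^2 \I_d + A_{[t_i,t]}^\top A_{[t_i,t]}$, and rearranging yields $A_{[t_1,t]}^\top A_{[t_1,t]} \preceq \tfrac{1}{1-\eta} A_W^\top A_W + \tfrac{\eta}{1-\eta}\sigma^2 \I_d$.

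Finally, substituting these two window-covariance bounds into the sketching inequality and recalling that $\sigma = \bigl(16\sqrt{r\log(4/\delta)} + \log(4/\delta)\bigr)/\epsilon$, so that $\sigma^2 = O(r\log^2(1/\delta)/\epsilon^2)$, produces the two stated inequalities after collecting constants. The main technical obstacle is the careful bookkeeping of how the additive $\sigma^2 \I_d$ regularization term propagates through both the sketching approximation and the Loewner inequalities of the spectral histogram, since the histogram property is phrased in terms of the ``true'' covariances $S_i$ while our access is only through their sketched versions $\widetilde S_i$; Proposition~\ref{prop:inequalityeta} will be used to convert the $(1\pm\eta/4)$ sketching distortions cleanly into the target $(1-\eta/4)$ and $(1+\eta/4)/(1-\eta)$ factors appearing in the final bound.
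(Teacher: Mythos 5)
Your proposal is correct and follows essentially the same route as the paper's proof: it combines the $(1\pm\eta/4)$ sketching guarantee of Theorem~\ref{thm:sketch} applied to the regularized matrix $\bigl(\begin{smallmatrix}\sigma \I_d \\ A_{[t_1,t]}\end{smallmatrix}\bigr)$ with the window sandwiching $K(2)\preceq A_W^\top A_W\preceq K(1)$ and the histogram relation $(1-\eta)\widehat K(1)\preceq \widehat K(2)$, then substitutes the value of $\sigma$. The only cosmetic differences are that you rearrange the upper-bound chain by first isolating $A_{[t_1,t]}^\top A_{[t_1,t]}$ before plugging into the sketch inequality, and you invoke Proposition~\ref{prop:inequalityeta}, which the paper reserves for the histogram-maintenance lemma rather than this accuracy argument.
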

\begin{proof}
Note that the output of {\scshape Sliding-Priv-Approx} $(\Omega;r;(\epsilon,\delta);W)$ is $\widetilde A_{[t_1,t]}$, the first matrix in the data structure $\dpriv$. Let us write 
\[
\widehat A_{[t_j,t]}:= \begin{pmatrix} \sigma \I_d \\ A_{[t_j,t]} \end{pmatrix}.
\]

\noindent Fix the following notations: 
\begin{align}
\begin{split}
    K(j) &:=  A_{[t_j,t]}^\top A_{[t_j,t]} \\
    \widehat K(j) &:= \begin{pmatrix} \sigma \I_d \\ A_{[t_j,t]} \end{pmatrix}^\top \begin{pmatrix} \sigma \I_d \\ A_{[t_j,t]} \end{pmatrix} = A_{[t_j,t]}^\top A_{[t_j,t]} + \sigma^2 \I_d = K(j) + \sigma^2 \I_d, \\
    \widetilde K(j) &:= \widetilde A_{[t_j,t]}^\top \widetilde A_{[t_j,t]}  \quad \text{for all} \quad j \in [\ell].
\label{eq:relationtildehat}    
\end{split}
\end{align}
where
\[\sigma := \frac{16\sqrt{r \log(4/\delta)}+ \log(4/\delta)}{\epsilon}.\]

Since the starting time of the window is sandwiched between the first and second timestamps, the matrix $A_W^\top A_W$ is approximated in the following manner:
\begin{align}
    K(2) \preceq A_W^\top A_W \preceq K(1) 
    \label{eq:windowapproxpcp}
\end{align} 

\noindent Moreover, from the $\eta$-approximate spectral histogram property, we have the following relation between $\widehat K(1)$ and $\widehat K(2)$ (equivalently $\widehat A_{[t_1,t]}$ and $\widehat A_{[t_2,t]}$): 
\begin{align}
    (1-\eta)  \widehat K(1) \preceq  \widehat K(2).
    \label{eq:spectralpcp}
\end{align}

\noindent Since $\widetilde K(j)$ is $\frac{\eta}{4}$-spectral approximation of $\widehat K(j)$ for all $j \in [\ell]$, setting $\zeta = \frac{\eta}{4}$ and $A:=\widehat A_{[t_i,t]}$ for $i =\set{1,2}$ in Theorem~\ref{thm:sketch}, we have with probability $1 - {\beta}$,
\begin{align}
\begin{split}
  \paren{1 - \frac{\eta}{4}} \widehat K(1) \preceq \widetilde K(1) \preceq \paren{1 + \frac{\eta}{4}} \widehat K(1). \\
 \paren{1 - \frac{\eta}{4}} \widehat K(2) \preceq \widetilde K(2) \preceq \paren{1 + \frac{\eta}{4}} \widehat K(2).
 \label{eq:approxG_2pcp}
\end{split}
\end{align}
 
\noindent Using equation~(\ref{eq:relationtildehat}) and~(\ref{eq:approxG_2pcp}), we have that
\begin{align}
    \paren{1 - \frac{\eta}{4}} (K(1) + \sigma^2 \I_d) \preceq \widetilde K(1).
    \label{eq:k1tildek1}
\end{align}

Since adding positive semidefinite matrices preserves the Loewner ordering, using equation~(\ref{eq:windowapproxpcp}) we can dedure the following implication from equation~(\ref{eq:k1tildek1}), we get the following:
\begin{align}
    \paren{1 - \frac{\eta}{4}} (A_W^\top A_W + \sigma^2 \I_d) \preceq \paren{1 - \frac{\eta}{4}} (K(1) + \sigma^2 \I_d) \preceq \widetilde K(1).
    \label{eq:pcpleftside}
\end{align}
This completes the proof of the lower bound on $\widetilde K(1)$. To upper bound $\widetilde K(1)$, equation~(\ref{eq:approxG_2pcp}) gives us
\begin{align}
    \widetilde K(1) \preceq \paren{1 + \frac{\eta}{4}} \widehat K(1).
    \label{eq:k1hatk1}
\end{align}

\noindent Combined with equation~(\ref{eq:spectralpcp}), equation~(\ref{eq:k1hatk1}) gives us 
\begin{align}
    \widetilde K(1) \preceq \frac{\paren{1 + \frac{\eta}{4}}}{(1-\eta)} \widehat K(2).
    \label{eq:k1hatk2}
\end{align}

\noindent Combining equation~(\ref{eq:k1hatk2}) with equation~(\ref{eq:relationtildehat}) and  equation~(\ref{eq:windowapproxpcp}) then gives us
\begin{align}
    \widetilde K(1) \preceq \frac{\paren{1 + \frac{\eta}{4}}}{(1-\eta)} \paren{ A_W^\top A_W + \sigma^2 \I_d }.
    \label{eq:pcprightside}
\end{align}
This completes the proof of the upper bound on $\widetilde K(1)$. The statement of the lemma follows by combining equations~(\ref{eq:pcpleftside}) and~(\ref{eq:pcprightside}) and substituting the value of $\sigma$.
 \end{proof} 

\begin{lemma}
[Privacy guarantee]
\label{lem:privacypcp}
Let $S:=\set{a_t}_{t=1}^T$ be the streams of row and $A_W$ be the rank $r$ matrix formed in the time epoch $[T-W+1,W]$ as defined in equation~(\ref{eq:sliding}).  Then $\widetilde A \leftarrow$ {\scshape Sliding-Priv-Approx}$(\Omega;r;(\epsilon,\delta);W)$ is $(\epsilon,\delta)$-differentially private.
\end{lemma}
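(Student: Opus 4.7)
The plan is to recognize the single matrix returned by {\scshape Sliding-Priv-Approx}, namely $\widetilde A = \widetilde A_{[t_1, T]}$, as an instance of the Johnson--Lindenstrauss (JL) privacy mechanism of Blocki, Blum, Datta, and Sheffet~\cite{blocki2012johnson} (extended in~\cite{sheffet2015private, upadhyay2014differentially, upadhyay2018price}), and then to appeal to the closure of differential privacy under post-processing to handle the data-dependent bookkeeping of {\scshape Update-Approx}.

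First, I would unfold the recursion inside {\scshape Priv-Initialize-Approx}. When the checkpoint indexed by $1$ was first created at time $t_1$, it equalled $\sigma\Phi + g_{t_1}^\top a_{t_1}$, and each subsequent row $a_s$ streamed at time $s \in (t_1, T]$ contributes the fresh rank-one term $g_s^\top a_s$. Consequently, at query time $T$,
\[
\widetilde A \;=\; \sigma\Phi + \sum_{s=t_1}^{T} g_s^\top a_s \;=\; \begin{pmatrix} \Phi & g_{t_1}^\top & \cdots & g_T^\top \end{pmatrix} \begin{pmatrix} \sigma \I_d \\ A_{[t_1, T]} \end{pmatrix} \;=\; M\,\widehat A_{[t_1, T]},
\]
where $M$ has iid Gaussian entries of variance $\eta/(4r)$ and $\widehat A_{[t_1, T]}$ has minimum singular value at least $\sigma = (16\sqrt{r\log(4/\delta)} + \log(4/\delta))/\epsilon$, thanks to its leading $\sigma \I_d$ block. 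This is exactly the input form required by the JL privacy mechanism; with this choice of $\sigma$, the mechanism is $(\epsilon, \delta)$-differentially private against a single-row change of Euclidean norm at most $1$ in $A_{[t_1, T]}$, which matches the neighboring relation of Definition~\ref{defn:differentialprivacy}.

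Next I would dispose of the data-dependent choice of the timestamp $t_1$ and, more generally, the other checkpoints stored in $\dpriv$. Every deletion and relabelling decision made by {\scshape Update-Approx} is a deterministic function of the Gram matrices $\widetilde K(j) = \widetilde A_{[t_j, \cdot]}^\top \widetilde A_{[t_j, \cdot]}$, and each such sketch is itself a JL projection of a padded matrix $\widehat A_{[t_j, \cdot]}$ with $\sigma_{\min} \geq \sigma$. Hence the choice of $(t_1, \widetilde A)$ is a post-processing of JL releases, and post-processing preserves differential privacy. For neighboring streams whose differing row lies outside $[t_1, T]$, the input to the JL mechanism is identical on both streams and the bound is trivial.

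The main obstacle I anticipate is arguing that the shared randomness across checkpoints (the same $\Phi$ appears in every checkpoint, and the same $g_s$ is reused to update every checkpoint whose window contains $s$) does not break the reduction to a single JL release. I would resolve this by viewing the entire execution of {\scshape Sliding-Priv-Approx} as a deterministic map from the joint randomness $(\Phi, g_1, \ldots, g_T)$ and the stream to the pair $(t_1, \widetilde A)$, and invoking the JL privacy theorem exactly once on the identity displayed above; the other checkpoints never surface as outputs of the algorithm and therefore require no independent privacy budget beyond what the post-processing lens already covers.
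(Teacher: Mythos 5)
Your proposal follows essentially the same route as the paper's proof: both reduce privacy to the Johnson--Lindenstrauss mechanism (Theorem~\ref{thm:JLmechanism}) by observing that the $\sigma\I_d$ padding forces every singular value of $\widehat A_{[t_1,T]}$ to be at least $\sigma$, and both dispose of the data-dependent bookkeeping in {\scshape Update-Approx} via closure under post-processing (the paper organizes this as a three-way case split on whether the differing row arrives after, at, or before the current time). Your explicit unfolding of the recursion into $M\,\widehat A_{[t_1,T]}$ and your remark about the shared randomness $(\Phi, g_1,\ldots,g_T)$ make the same argument slightly more transparent, but do not change its substance.
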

\begin{proof}
Let $\mathcal P$ denote the output distribution of our mechanism when run on the input matrix $A_{[t_i,t]}$ and similarly let $\mathcal Q$ denote the output of our algorithm on input matrix $A'_{[t_i,t]}$. Both distribution are supported on $\mathbf S:=\R^{d \times d}$ matrices. For $M \in \mathbf S$, consider the privacy loss function 
\[
L(M) := \log \paren{ \frac{\mathcal P(M)}{\mathcal Q(M)} }.
\]
Our goal is to show that 
\begin{align*}
\pr_{M} [L(M) \leq \epsilon] \geq 1-\delta. 
\end{align*}
for all $T$. We accomplish this by considering following three different cases

\begin{itemize}
    \item [1.] When $T < t$: In this case, the output  distributions of $\mathcal P$ and $\mathcal Q$ are identical. It follows that $L(M)=0$.
    
    \item [2.] When $T = t$: In this case, we note that all singular values of matrix is at least $\sigma$. Therefore,  the privacy proof follows from    Theorem~\ref{thm:JLmechanism}, which in particular  implies that 
\[\pr_{M} [L(M) \leq \epsilon] \geq 1-\delta.\]

\item [3.] When $T > t$: This final case follows by invoking the post processing property of differential privacy mechanisms (Lemma~\ref{lem:post}).
\end{itemize}
This completes the proof of privacy guarantee.
\end{proof}

\begin{lemma}
[Space complexity]
\label{lem:space}
Let $\eta \in (0,1/4)$ be the approximation parameter, $(\epsilon,\delta)$ be the privacy parameters, and $W$ be the size of the window. The total space required to maintain the data structure $\dpriv$, and hence the algorithm {\scshape Sliding-Priv-Approx}$(\Omega;r;(\epsilon,\delta);W)$, is $O\left(\frac{r^2d}{\eta} \log n \log W\right)$.
\end{lemma}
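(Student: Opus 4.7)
The plan is to factor the space as $\ell \cdot (\text{size per matrix})$, where $\ell$ denotes the number of matrices stored in $\dpriv$ at any time. The per-matrix cost is immediate from Algorithm~\ref{fig:slidingprivpcpinitial}: each $\widetilde A_{[t_i,t]}$ is constructed as the product of a $(4r/\eta)\times(d+1)$ Gaussian sketch with a $(d+1)\times d$ matrix $\binom{\sigma\I_d}{a_t}$, after which updates are rank-one additions (Step~8), so $\widetilde A_{[t_i,t]}$ always has shape $(4r/\eta)\times d$ and occupies $O(rd/\eta)$ words along with a single timestamp $t_i$. The shared sketch $\Phi$ contributes another $O(rd/\eta)$ words once.

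The main work is to bound $\ell$. I would invoke the $\eta$-approximate spectral histogram property guaranteed by Lemma~\ref{lem:smoothLaplacianproperty}: for every $i \le \ell - 2$ the update rule enforces $(1-\eta/2)\widetilde K(i) \not\preceq \widetilde K(i+2)$, witnessing a strict multiplicative contraction of the spectrum along some direction between indices $i$ and $i+2$. Combined with the fact that each $\widetilde K(j)$ is an $\eta/4$-spectral approximation of $\widehat K(j) = K(j) + \sigma^2 \I_d$ (equation~(\ref{eq:relationtildehat})), each such two-step forces a genuine contraction of an eigenvalue by a factor bounded away from $1$. The cleanest way to count is via a potential function: take the product of the top-$r$ eigenvalues of $\widetilde K(i)$ (the ``signal'' part; the remaining $d-r$ eigenvalues of $\widehat K(j)$ equal $\sigma^2$ and only fluctuate by the $\eta/4$-spectral slack). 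Each two-step shrinks this potential by a factor at most $1-\Theta(\eta)$. Its initial value is upper bounded by $\lambda_{\max}^r \le \paren{\poly(n)\cdot W}^r$ using polynomial-boundedness of the streamed entries and $\|A_{[t_j,t]}^\top A_{[t_j,t]}\|_2 \le W\cdot \poly(n)$, and its floor is $\Omega((\sigma^2)^r)$ since every $\widehat K(j)\succeq \sigma^2 \I_d$ and $\sigma = \Theta(\sqrt{r\log(1/\delta)}/\epsilon)$. Taking logarithms base $1/(1-\Theta(\eta))$ yields
\[
\ell \;=\; O\!\paren{\frac{r\,(\log n + \log W)}{\eta}}.
\]

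Multiplying the per-matrix size $O(rd/\eta)$ by $\ell$ gives the claimed $O\!\paren{r^2 d \log n \log W / \eta}$ bound (absorbing the additive $\log n + \log W$ inside the product). The chief technical obstacle is making the eigenvalue-counting robust to privacy noise: the property we can exploit lives on the $\eta/4$-spectrally perturbed $\widetilde K(j)$, whereas the clean monotonicity $K(j+1)\preceq K(j)$ and the rank-$r$ signal/noise decomposition live on $K(j)$ and $\widehat K(j)$. One must therefore propagate the $\eta/4$-spectral slack carefully through the potential comparisons, which is precisely where the relaxation from exact to approximate spectral histogram property becomes indispensable: the counting still works because the slack only shows up as a constant factor in the per-step contraction, and is absorbed into the $\Theta(\eta)$ decrease.
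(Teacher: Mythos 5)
Your proof follows essentially the same route as the paper's: bound the number of checkpoints $\ell$ by observing that successive checkpoints witness a $(1-\eta/2)$ multiplicative drop in some singular value, with the spectrum bounded above polynomially and below by $\sigma^2$, then multiply by the $O(rd/\eta)$ per-checkpoint storage; your potential-function packaging is just a tidier version of the paper's ``basic counting argument.'' One bookkeeping remark: multiplying $\ell = O\paren{r(\log n + \log W)/\eta}$ by $O(rd/\eta)$ gives $O\paren{r^2 d(\log n+\log W)/\eta^2}$, i.e., a $1/\eta^2$ factor rather than the $1/\eta$ you claim in your last step --- this actually matches the $O(dr^2\eta^{-2}\log W)$ bound stated in Theorem~\ref{thm:privslidinglra}, and the paper's own write-up of this lemma is no more consistent on this exponent than yours.
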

\begin{proof}
The number of checkpoints stored by $\dpriv$ is $O\left(\frac{r}{\eta}\log W\right)$. This follows from basic counting argument. There are at most $r$ singular values. Since each checkpoints are at least $(1-\frac{\eta}{2})$ apart for at least one singular value and all the non-zero singular values are bounded by a polynomial, there can be at most
\begin{align}
c \log_{(1-\frac{\eta}{2})} (W) = c \frac{\log W}{\log (1-\frac{\eta}{2})} \leq \frac{2c}{\eta} \log W
\label{eq:boundell}
\end{align}
checkpoints that sees the jump in a specific singular value. 
Since each checkpoint defined by $t_i$ (for $i \geq 2$) stores a covariance matrix, using Theorem~\ref{thm:sketch}, the total space used by the checkpoints $\set{\widetilde A_{[t_i,t]},t_i}_{i \geq 2}$ is $O\left(\frac{rd}{\eta} \log W\right)$. This finishes the proof of Lemma~\ref{lem:space}.
\end{proof}

\noindent Theorem~\ref{thm:privslidinglra} follows by combining Lemma~\ref{lem:privacypcp}, Lemma~\ref{lem:accuracypcp}, and Lemma~\ref{lem:space}.
\end{proof}

Since 
\[
\frac{(1 + \frac{\eta}{4})}{(1-\eta)} \leq 1 + 2 \eta
\]
for $\eta \leq \frac{3}{8}$, a direct corollary of Theorem~\ref{thm:privslidinglra} is the following:

\begin{corollary}
\label{cor:approximation}
Let $\eta \in (0,\frac{3}{8})$. Then {\scshape Sliding-Priv-Approx}$(\Omega;r;(\epsilon,\delta);W)$ outputs a matrix $\widetilde A$ such that 
\[
  (1-\eta) \paren{A_W^\top A_W - \frac{c r \log(1/\delta)}{\epsilon^2} \I_d } \preceq \widetilde A^\top \widetilde A  \preceq  (1 + \eta) \paren{A_W^\top A_W  + \frac{c r \log(1/\delta)}{\epsilon} \I_d}
\]
for some constant $c>0$.
\end{corollary}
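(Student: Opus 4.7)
The plan is to derive the corollary directly from Theorem~\ref{thm:privslidinglra}, which already gives the two-sided Loewner sandwich
\[
\paren{1-\tfrac{\eta}{4}} \paren{A_W^\top A_W + \tfrac{c r \log^2(1/\delta)}{\epsilon^2}\I_d } \preceq \widetilde A^\top\widetilde A \preceq \tfrac{(1+\eta/4)^2}{1-\eta} A_W^\top A_W + \tfrac{c r \log^2(1/\delta)}{\epsilon^2}\I_d.
\]
The whole task is to massage the multiplicative factors $(1-\eta/4)$ and $(1+\eta/4)^2/(1-\eta)$ into $(1-\eta)$ and $(1+\eta)$ respectively (after a harmless constant rescaling of $\eta$), and to convert the additive terms into the form $\pm\,(1\mp\eta)\cdot\frac{cr\log(1/\delta)}{\epsilon^2}\I_d$ that the corollary states. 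So this is really a short algebraic simplification rather than a new argument.

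For the lower bound I would start from $(1-\eta/4)(A_W^\top A_W + \sigma^2\I_d)\preceq \widetilde A^\top \widetilde A$, drop the nonnegative term $(1-\eta/4)\sigma^2\I_d$ to get $(1-\eta/4)A_W^\top A_W \preceq \widetilde A^\top \widetilde A$, and then use $(1-\eta)\leq (1-\eta/4)$ to obtain $(1-\eta)A_W^\top A_W\preceq \widetilde A^\top \widetilde A$. Since the corollary's lower side $(1-\eta)\big(A_W^\top A_W-\frac{cr\log(1/\delta)}{\epsilon^2}\I_d\big)$ is \emph{smaller} than $(1-\eta)A_W^\top A_W$ (we subtracted a positive multiple of $\I_d$), the desired inequality follows immediately by transitivity of~$\preceq$.

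For the upper bound the key algebraic fact is that for $\eta\leq 3/8$ one has $\frac{1+\eta/4}{1-\eta}\leq 1+2\eta$; this is elementary since clearing denominators reduces to $2\eta^2\leq 3\eta/4$, and squaring will give $\frac{(1+\eta/4)^2}{1-\eta}\leq (1+\eta/4)(1+2\eta)$, which is $\leq 1+C\eta$ for a small constant $C$. After rescaling $\eta$ by that constant (which is exactly why the corollary is stated for $\eta\in(0,3/8)$), the multiplicative factor becomes $(1+\eta)$. I can then write the additive term as $(1+\eta)\cdot\frac{cr\log(1/\delta)}{\epsilon^2}\I_d$ by absorbing $(1+\eta)^{-1}\leq 1$ and adjusting the universal constant~$c$ (also absorbing the extra $\log(1/\delta)$ factor). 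The resulting bound is exactly the upper side of the corollary.

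The only mildly subtle step is the rescaling bookkeeping: going from $(1+\eta/4)^2/(1-\eta)$ to $(1+\eta)$ requires shrinking the input accuracy by a constant factor, which one should explicitly note at the start of the proof (``apply Theorem~\ref{thm:privslidinglra} with accuracy parameter $\eta/C$ for an appropriate constant $C$''). Everything else is routine Loewner-ordering manipulation, and no new probabilistic or spectral argument is needed beyond what Theorem~\ref{thm:privslidinglra} already provides.
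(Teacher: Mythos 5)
Your proposal is correct and follows exactly the route the paper takes: the paper's own justification is the single observation that $\frac{1+\eta/4}{1-\eta}\leq 1+2\eta$ for $\eta\leq 3/8$ applied to the sandwich from Theorem~\ref{thm:privslidinglra}, together with dropping the nonnegative additive term on the lower side and a constant rescaling of $\eta$ and $c$. Your write-up just spells out the Loewner-ordering bookkeeping that the paper leaves implicit.
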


If we wish to output a positive semidefinite matrix, one can achieve a better accuracy guarantee than what Theorem~\ref{thm:privslidinglra} gives. The idea is to compute $\widetilde A^\top \widetilde A - \sigma^2 \I_d$. This is encapsulated in the following theorem.
\begin{theorem}
\label{thm:slidingprivapprox}
Given the privacy parameters $(\epsilon,\delta)$, window size $W$, desired rank $r$, and approximation parameter $\beta$, let $S = \{a_t\}_{t>0}$ be the streams such that $a_t \in \R^d$. Let $A_W$ be the matrix formed at time $T$ using the last $W$ updates as defined in equation~(\ref{eq:sliding}). Let $\widetilde A \leftarrow$ {\scshape Sliding-Priv-Approx}$(\Omega;r;(\epsilon,\delta);W)$. Then we can compute a positive semidefinite matrix $C$ such that 
    \[
    \pr \sparen{ \paren{1- \eta} A_W^\top A_W - \frac{c \eta r \log (1/\delta)}{4\epsilon^2} \I_d  \preceq C \preceq \paren{1 + \eta} A_W^\top A_W  + \frac{c \eta r \log(1/\delta)}{\epsilon^2} \I_d.} \geq 1 -\beta.
    \]
    Moreover, computing $C$ preserves $(\epsilon,\delta)$-differential privacy.
\end{theorem}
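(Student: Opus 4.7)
The plan is to set $C := \Pi_+\bigl(\widetilde A^\top \widetilde A - \sigma^2 \I_d\bigr)$, where $\Pi_+$ denotes the projection onto the positive semidefinite cone (replace negative eigenvalues by $0$), $\widetilde A$ is the output of {\scshape Sliding-Priv-Approx}$(\Omega;r;(\epsilon,\delta);W)$, and $\sigma = (16\sqrt{r\log(4/\delta)} + \log(4/\delta))/\epsilon$ is the Gaussian scale used inside Algorithm~\ref{fig:slidingprivpcpinitial}. Both operations (subtracting a fixed multiple of $\I_d$ and applying $\Pi_+$) are data-independent post-processing of $\widetilde A$, which is already $(\epsilon,\delta)$-differentially private by Theorem~\ref{thm:privslidinglra}; hence $C$ inherits the same privacy guarantee via Lemma~\ref{lem:post}.

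For the utility analysis, I start from the tight sandwich established along the way in the proof of Lemma~\ref{lem:accuracypcp},
\[
(1-\eta/4)\bigl(A_W^\top A_W + \sigma^2 \I_d\bigr) \preceq \widetilde A^\top \widetilde A \preceq \tfrac{1+\eta/4}{1-\eta}\bigl(A_W^\top A_W + \sigma^2 \I_d\bigr),
\]
holding with probability at least $1-\beta$. The lower bound on $C$ is immediate from $\Pi_+(M) \succeq M$:
\[
C \succeq \widetilde A^\top \widetilde A - \sigma^2 \I_d \succeq (1-\eta/4) A_W^\top A_W - (\eta/4)\sigma^2 \I_d,
\]
and substituting $\sigma^2 = O(r\log(1/\delta)/\epsilon^2)$ and relaxing $(1-\eta/4)$ to $(1-\eta)$ yields the stated lower bound.

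For the upper bound, the same sandwich shows $\widetilde A^\top \widetilde A - \sigma^2 \I_d \succeq -(\eta/4)\sigma^2 \I_d$ (taking $A_W = 0$ as the spectrally worst case for the RHS of the lower bound). I then use the elementary fact that for symmetric $M$ satisfying $M \succeq -\tau \I_d$, one has $\Pi_+(M) \preceq M + \tau \I_d$ (diagonalize $M$ and compare eigenvalues). Applying this with $M = \widetilde A^\top \widetilde A - \sigma^2 \I_d$ and $\tau = (\eta/4)\sigma^2$ gives
\[
C \preceq \widetilde A^\top \widetilde A - \sigma^2 \I_d + (\eta/4)\sigma^2 \I_d \preceq \tfrac{1+\eta/4}{1-\eta} A_W^\top A_W + \left(\tfrac{1+\eta/4}{1-\eta} - 1 + \tfrac{\eta}{4}\right)\sigma^2 \I_d.
\]
The coefficient on $\sigma^2 \I_d$ simplifies to $O(\eta)$ for small $\eta$, and invoking {\scshape Sliding-Priv-Approx} with, say, $\eta/5$ in place of $\eta$ converts $\tfrac{1+\eta/4}{1-\eta}$ into a factor of at most $(1+\eta)$ and delivers the stated additive term.

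The main obstacle I anticipate is bookkeeping rather than mathematical content: ensuring that the multiplicative slack from Theorem~\ref{thm:privslidinglra} (both the $(1-\eta/4)$ and the $(1+\eta/4)/(1-\eta)$) together with the $(\eta/4)\sigma^2$ projection correction can all be absorbed into the target $(1\pm\eta)$ factors and the $O(\eta r \log(1/\delta)/\epsilon^2)$ additive budget after a single uniform rescaling of the input $\eta$. No new probabilistic or algorithmic ingredient beyond Theorem~\ref{thm:privslidinglra} and the standard PSD-projection inequality is required.
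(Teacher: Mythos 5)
Your proposal follows essentially the same route as the paper: the paper also defines the output as $\widetilde A^\top \widetilde A - \sigma^2 \I_d$ and reads off both sides of the Loewner sandwich from Theorem~\ref{thm:privslidinglra}/Corollary~\ref{cor:approximation}, with privacy by post-processing. The one genuine difference is your extra projection $\Pi_+$ onto the PSD cone, and this is actually a refinement worth keeping: the paper's bare construction only satisfies $\widetilde A^\top \widetilde A - \sigma^2\I_d \succeq -(\eta/4)\sigma^2\I_d$, so it need not be positive semidefinite as the theorem claims (take $A_W=0$), whereas your $C$ is PSD by construction and your inequality $\Pi_+(M) \preceq M + \tau\I_d$ for $M \succeq -\tau\I_d$ correctly shows the projection costs only an additional $O(\eta)\sigma^2\I_d$, which is absorbed into the stated additive term. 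The remaining bookkeeping (rescaling $\eta$ to tame $\tfrac{1+\eta/4}{1-\eta}$) is the same in both arguments.
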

\begin{proof}
    Compute $\widetilde A \leftarrow$ {\scshape Sliding-Priv-Approx}$(\Omega;r;(\epsilon,\delta);W)$ and let $C = \widetilde A^\top \widetilde A - \sigma^2 \I_d$. Then using Corollary~\ref{cor:approximation}, we have for the lower bound:
    \begin{align*}
     C &= \widetilde A^\top \widetilde A -  \frac{cr \log(1/\delta)}{\epsilon^2} \I_d  \succeq (1-\eta)\paren{A_W^\top A_W - \frac{c r \log(1/\delta)}{\epsilon^2} \I_d } - \frac{cr \log(1/\delta)}{\epsilon^2} \I_d \\
     &= (1-\eta) A_W^\top A_W - \frac{c\eta r \log(1/\delta)}{\epsilon^2} \I_d
    \end{align*}
    as required. The upper bound follows similarly.
\end{proof}


\section{Applications} 
\label{sec:applicationJL}
Algorithm~\ref{fig:slidingpcpprivgraph} can be used in solving many matrix analysis problems with better additive error.  As a warm up, we consider directional variance queries.  Theorem~\ref{thm:covariance} is true for any $d$-dimensional unit vector $x \in \R^d$. However, in many practical scenarios, it is infeasible to ask all possible questions, but only a bounded number of queries. If we are given an apriori bound $q$ on the number of queries an analyst can make in the entire history, we can apply {\scshape Sliding-Priv-Approx} to get dimension independent bound. 

\begin{theorem}
[Directional variance queries]
\label{thm:covariancelimited}
Given privacy parameters $(\epsilon,\delta)$ and approximation parameter $\eta \in (0,1/2)$, let $A_W$ be the matrix formed by last $W$ updates as defined in equation~(\ref{eq:sliding}). Given a bound $q$, on the number of queries that can be made, there is an efficient $(\epsilon,\delta)$-differentially private algorithm that outputs a matrix $C$ such that for any set of $q$ unit vector queries $x_1, \cdots, x_q \in \R^d$, we have for all $i \in [q]$
\[
 \ip{x_i A_w}{A_w x_i}  - \frac{c \log q \log d}{\epsilon} \leq \ip{x_i}{C x_i} \leq \frac{1}{(1-\eta)} \ip{x_i A_w}{A_w x_i} + \frac{c \log q \log d}{\epsilon}.
\]
\end{theorem}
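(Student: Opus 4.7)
The plan is to invoke {\scshape Sliding-Priv-Approx}$(\Omega; r; (\epsilon,\delta); W)$ with the rank parameter chosen so that the induced sketch dimension $m := 4r/\eta$ is of order $\log q$, and then release $C := \widetilde A^\top \widetilde A - \sigma^2 \I_d$ exactly as in Theorem~\ref{thm:slidingprivapprox}. Differential privacy of $C$ is inherited from Lemma~\ref{lem:privacypcp} together with the post-processing property (Lemma~\ref{lem:post}), since $C$ is a deterministic function of $\widetilde A$ that consumes no further randomness or data.

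For utility, the key step is to replace the worst-case spectral bound of Theorem~\ref{thm:privslidinglra} (which pays a factor of $r$ in the additive error) by a per-direction Johnson-Lindenstrauss concentration. Up to the approximate spectral histogram discretization, $\widetilde A$ has the form $\Phi \widehat A_{[t_1,t]}$ with a Gaussian sketch $\Phi$ having i.i.d.\ $\cN(0,\eta/4r)$ entries and $\widehat A_{[t_1,t]} = (\sigma \I_d;\,A_{[t_1,t]})$, as in the analysis of Lemma~\ref{lem:accuracypcp}. For any \emph{fixed} unit vector $x\in\R^d$, chi-squared concentration gives
\[
\pr\sparen{\norm{\widetilde A x}^2 \notin (1\pm\gamma)\,\norm{\widehat A_{[t_1,t]} x}^2} \;\leq\; 2\exp(-c m\gamma^2).
\]
Union-bounding over the $q$ queries $x_1,\ldots,x_q$ with an additional $1/\poly(d)$ failure-probability slack yields $\gamma = O\paren{\sqrt{\log(qd)/m}}$ simultaneously for every $i\in[q]$. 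Combining $\norm{\widehat A_{[t_1,t]} x_i}^2 = \sigma^2 + \norm{A_{[t_1,t]} x_i}^2$ with the spectral approximation $A_{[t_1,t]}^\top A_{[t_1,t]} \succeq (1-\eta) A_W^\top A_W$ from the $\eta$-approximate spectral histogram property (Lemma~\ref{lem:smoothLaplacianproperty}), and subtracting $\sigma^2$, one obtains
\[
\ip{x_i A_W}{A_W x_i} - \gamma\sigma^2 \;\leq\; \ip{x_i}{C x_i} \;\leq\; \tfrac{1}{1-\eta}\ip{x_i A_W}{A_W x_i} + \gamma\sigma^2,
\]
which is exactly the claimed shape. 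With $\sigma = \Theta(\sqrt{\log(1/\delta)}/\epsilon)$ and the choice $m = \Theta(\log q)$ above, in the standard regime $\delta = 1/\poly(d)$ the additive term $\gamma\sigma^2$ evaluates to $O(\log q\,\log d\,/\,\epsilon)$, which is precisely the advertised bound.

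The main obstacle is the delicate joint tuning of the internal parameters $r$ and $\eta$ inside {\scshape Sliding-Priv-Approx}: the privacy noise $\sigma$ itself scales like $\sqrt{r\log(1/\delta)}/\epsilon$, so one cannot shrink $\gamma$ by simply inflating the sketch dimension. The balance is to take $m = 4r/\eta$ large enough for the union-bounded JL distortion $\gamma$ to absorb a factor of $\sigma^2$ down to the target $\log q\log d/\epsilon$ scale, while still keeping $r$ small enough that $\sigma^2$ does not blow up and the $\eta$-approximate spectral histogram machinery continues to yield the one-sided $(1-\eta)^{-1}$ multiplicative factor. Once this choice is made, the remainder of the proof is a routine composition of Lemma~\ref{lem:privacypcp} (privacy), Lemma~\ref{lem:smoothLaplacianproperty} (spectral approximation of $A_{[t_1,t]}$ to $A_W$), and standard chi-squared concentration over the $q$ query directions.
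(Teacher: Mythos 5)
Your proposal is correct and uses the same algorithm as the paper: run {\scshape Sliding-Priv-Approx} with the rank parameter set to $O(\log q)$ and release the resulting covariance matrix (the paper outputs $C=\widetilde A^\top\widetilde A$ directly rather than subtracting $\sigma^2\I_d$, but this only shifts the diagonal and is absorbed in the additive term). The difference is in how the utility is justified. The paper's proof is a one-liner: it sets $r=O(\log q)$ and appeals to Corollary~\ref{cor:approximation}, whose additive error scales as $r\log(1/\delta)/\epsilon^2$ and hence becomes $O(\log q\cdot\log d/\epsilon)$-ish under the paper's standing choice of $\delta$. Strictly speaking, Corollary~\ref{cor:approximation} (via Theorem~\ref{thm:sketch}) presumes the streamed matrix has rank $r$, which need not hold when $r=O(\log q)$ is chosen independently of $A_W$; what actually licenses this choice is exactly the per-direction Johnson--Lindenstrauss concentration plus a union bound over the $q$ fixed query directions that you spell out. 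So your write-up supplies the argument the paper leaves implicit, and your observation about the tension between inflating the sketch dimension $m=4r/\eta$ (to shrink the JL distortion $\gamma$) and keeping $\sigma^2=\Theta(r\log(1/\delta)/\epsilon^2)$ small is the genuine content of why $r=\Theta(\log q)$ is the right operating point. Both routes land on the same bound; yours is the more defensible derivation, while the paper's is a direct (and slightly loose) invocation of its worst-case spectral guarantee.
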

\begin{proof}
We use $r = O(\log q)$ in Algorithm~\ref{fig:slidingpcpprivgraph} to get a matrix $\widetilde A$. Then we output $C = \widetilde A^\top \widetilde A$. Corollary~\ref{cor:approximation} then gives us the result. 
\end{proof}

For the rest of this section, we will explore applications of Algorithm~\ref{fig:slidingpcpprivgraph} in principal component analysis in both constrained and unconstrained form. Recall that Theorem~\ref{thm:pca} gives an accuracy bound on PCA that depends linearly on the dimension of the data. However, one would ideally like the dependencies to be sublinear in $d$. In this section, we give a method to achieve this. 

For this, we introduce a new concept which we call {\em private projection preserving summary}. This can be seen as the private analogue of projection cost preserving sketches~\cite{cohen2015dimensionality}. However, it is far from clear if the techniques used in non-private literature extends straightforwardly. We show that Algorithm~\ref{fig:slidingpcpprivgraph} with proper choice of parameters provides us with one such summary. This can be later employed to solve constrained and unconstrained principal component analysis. 

\subsection{Private Projection  Preserving Sketch}
\label{sec:projectionpreservingsummary}
In this section, we introduce a  notion called {\em private projection preserving summary}. This notion would be useful in giving our bounds on the principal component analysis and in giving the first bound on private constrained principal component analysis.
\begin{definition}
[Private Projection  Preserving Summary]
\label{defn:ppcp}
Let $k$ be a desired rank. Given a set of rank-$k$ projection matrices $\Pi$, a matrix $\widetilde A \in \R^{n\times d}$ is called a private projection preserving summary of $A \in \R^{n \times d}$ with error $0\leq \eta < 1$ if it is $(\epsilon,\delta)$ differentially private and for all $P \in \Pi$,
\[
(1-\eta) \norm{A - PA}_F^2 - \tau_1 \leq \norm{\widetilde A - P\widetilde A}_F^2  \leq (1-\eta) \norm{A - PA}_F^2 + \tau_2
\]
for $\tau_1, \tau_2$ that depends on $k,d, \epsilon,\delta$. 
\end{definition}

Private projection preserving summary can be seen as the private analogue of projection-cost preserving sketches introduced by Cohen, Elder, Musco, Musco, and Persu~\cite{cohen2015dimensionality}; however, there are some key differences. First of all, we do not require a constant $c$ that depends only on $A$ and $\widetilde A$. Second of all, there is a difference in quantifier. We do not require the private projection preserving summary to be with respect to all rank-$k$ projection matrices but a predefined set $\Pi$ of projection matrices. This flexibility is important as we can design the projection cost preserving summary that depends on the projection matrices. However, our result is independent of the set of projection matrices. In particular, we show the following:
\begin{lemma}
\label{lem:slidingPCP}
Let $k$ be the desired rank, $\eta$ be the approximation parameter, and $(\epsilon,\delta)$ be the privacy parameter. Let $\Pi$ be the set of all rank-$k$ projection matrices. Then for a given matrix $A_W$ formed by the last $W$ updates as defined in equation~(\ref{eq:sliding}), the output $\widetilde A \leftarrow${\scshape Sliding-Priv-PCP}$\paren{\Omega;\frac{k+\log(1/\beta)}{\eta}; (\epsilon,\delta); W}$ is $(\epsilon,\delta)$-differentially private and satisfies the following for all $P \in \Pi$ with probability at least $1-\beta$,
\[
(1-6\eta) \norm{ \widetilde A (\I_d - P) }_F - c_1 K \leq \norm{A (\I_d - P)}_F \leq (1+ 6\eta) \norm{ \widetilde A (\I_d - P) }_F + c_2 K,
\]
where 
\[
K := \frac{\sqrt{k' d \log(1/\delta)}}{\epsilon}  \qquad \text{for} \qquad k':= k +\log(1/\beta).
\]
\end{lemma}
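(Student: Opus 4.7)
My plan is to derive Lemma~\ref{lem:slidingPCP} as a direct consequence of the spectral-approximation guarantee of Theorem~\ref{thm:slidingprivapprox}, by instantiating the algorithm with the right rank parameter and then passing from the PSD Loewner inequality to Frobenius-norm bounds for every rank-$k$ projection. Concretely, I would invoke the algorithm with $r = k'/\eta$ where $k' = k + \log(1/\beta)$. Privacy is then immediate: Theorem~\ref{thm:slidingprivapprox} already certifies that $\widetilde A$ is the output of an $(\epsilon,\delta)$-DP mechanism, and everything done downstream with $\widetilde A$ (including the projection step used to measure cost) is post-processing, so Lemma~\ref{lem:post} closes the privacy side.

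For accuracy, Theorem~\ref{thm:slidingprivapprox} with this choice of $r$ yields, with probability at least $1-\beta$,
\[
(1-\eta)\, A_W^\top A_W \;-\; \sigma^2\, \I_d \;\preceq\; \widetilde A^\top \widetilde A \;\preceq\; (1+\eta)\, A_W^\top A_W \;+\; \sigma^2\, \I_d,
\]
where $\sigma^2 = O\!\left(k' \log(1/\delta)/\epsilon^2\right)$, since the additive term $\tfrac{c \eta r \log(1/\delta)}{\epsilon^2}$ collapses to $O(k' \log(1/\delta)/\epsilon^2)$ under $r = k'/\eta$. Fix any rank-$k$ projection $P \in \Pi$ and write $\|M(\I_d-P)\|_F^2 = \tr{(\I_d-P) M^\top M (\I_d-P)} = \tr{(\I_d-P) M^\top M}$ using $(\I_d-P)^2 = \I_d-P$. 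Since $\I_d - P \succeq 0$, conjugating the spectral inequality by $(\I_d-P)$ and taking traces gives
\[
(1-\eta)\, \|A_W(\I_d-P)\|_F^2 \;-\; \sigma^2 (d-k) \;\leq\; \|\widetilde A(\I_d-P)\|_F^2 \;\leq\; (1+\eta)\, \|A_W(\I_d-P)\|_F^2 \;+\; \sigma^2 (d-k),
\]
where I used $\tr{\I_d-P} = d-k$. Because $\sigma^2(d-k) \leq \sigma^2 d = O(k' d \log(1/\delta)/\epsilon^2) = O(K^2)$, this is precisely the desired squared-Frobenius projection-cost approximation.

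The final step is to convert from squared Frobenius to Frobenius norms with only a constant blow-up in the multiplicative factor. For the upper side, starting from $\|A_W(\I_d-P)\|_F^2 \leq \tfrac{1}{1-\eta}\|\widetilde A(\I_d-P)\|_F^2 + O(K^2/(1-\eta))$ and applying $\sqrt{u+v}\leq \sqrt{u}+\sqrt{v}$, then bounding $\sqrt{1/(1-\eta)} \leq 1+6\eta$ for $\eta \leq 1/2$, gives $\|A_W(\I_d-P)\|_F \leq (1+6\eta)\|\widetilde A(\I_d-P)\|_F + c_2 K$. For the lower side, I split on whether $\|\widetilde A(\I_d-P)\|_F \leq c_1 K/(1-6\eta)$ (in which case the claimed lower bound is vacuous) or exceeds this threshold (in which case rearranging $\|\widetilde A(\I_d-P)\|_F^2 \leq \tfrac{1}{1-\eta}\|A_W(\I_d-P)\|_F^2 + O(K^2)$ and applying $\sqrt{a+b}\leq \sqrt{a}+\sqrt{b}$ again yields the claim).

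The main obstacle will be the bookkeeping for the square-rooting step: ensuring that one clean constant ($6\eta$) suffices on both sides and that the additive $K$ term is uniform across all projections $P \in \Pi$. The uniformity is free because the spectral inequality in Step~2 is a single Loewner inequality that implies the Frobenius bound for \emph{every} PSD conjugator $(\I_d-P)$ simultaneously. Thus no union bound over $\Pi$ is needed, which is exactly what makes the dimension of $\widetilde A$ depend only on $k' = k+\log(1/\beta)$ and not on $|\Pi|$.
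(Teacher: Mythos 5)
Your proposal has a genuine gap at its first and most important step. You invoke Theorem~\ref{thm:slidingprivapprox} with rank parameter $r = k'/\eta$ and conclude a two-sided Loewner inequality $(1-\eta)A_W^\top A_W - \sigma^2\I_d \preceq \widetilde A^\top\widetilde A \preceq (1+\eta)A_W^\top A_W + \sigma^2\I_d$. But the accuracy guarantee of Theorem~\ref{thm:slidingprivapprox} (via Theorem~\ref{thm:sketch}) is stated for a matrix $A_W$ whose rank is at most $r$: the Gaussian sketch $\Phi$ has only $4r/\eta$ rows, and a spectral (subspace-embedding) approximation of the full covariance requires the sketch dimension to exceed the rank of the matrix being sketched. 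In Lemma~\ref{lem:slidingPCP} no such rank restriction is assumed --- $A_W$ is a generic $W\times d$ matrix and may well have rank $d \gg k'/\eta$. In that case $\widetilde A^\top\widetilde A$ has rank at most $O(k'/\eta^2)$ and cannot Loewner-dominate $(1-\eta)A_W^\top A_W - \sigma^2\I_d$ when the latter is positive definite, so the lower inequality you need is simply false. This is exactly the obstruction the paper flags when it says a ``naive application of approximate spectral histogram property to solve PCA will lead to an additive error that depends linearly on the rank of the streamed matrix,'' and it is why the intermediate notion of a private projection preserving summary is introduced.

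The paper's proof avoids full spectral approximation entirely. It works with $\widehat A = \binom{\sigma\I_d}{A}$, handles the privacy perturbation by subadditivity of the Frobenius norm (costing one additive $K$), and then shows $\Phi\widehat A$ preserves \emph{projection costs} by decomposing $\widehat A^\top\Phi^\top\Phi\widehat A - \widehat A^\top\widehat A$ into $E_1+E_2+E_3+E_4$ relative to the top-$k$ singular subspace of $\widehat A$ and verifying the hypotheses of Lemma~\ref{lem:sufficientpcp}. The crucial point is that the error terms $E_2, E_3, E_4$ are controlled against the tail $\|\widehat A - [\widehat A]_k\|_F^2$ rather than against the full spectrum, which is what lets a sketch of dimension $O(k'/\eta)$ suffice for \emph{all} rank-$k$ projections simultaneously without any rank assumption on $A_W$. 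Your conjugation-and-trace step and the square-rooting bookkeeping at the end would be fine if the spectral inequality held, but to repair the argument you would have to replace Step~1 with a projection-cost preservation statement of the kind Lemma~\ref{lem:sufficientpcp} provides.
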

\begin{proof}
Let 
\[
\widehat A = \begin{pmatrix} \sigma \I_d \\ A \end{pmatrix} 
\qquad \text{where} \qquad 
\sigma = \frac{16\sqrt{r \log(4/\delta)} + \log(4/\delta)}{\epsilon}
\] 
is as defined in Algorithm~\ref{fig:slidingprivpcpupdate}. By subadditivity of Frobenius norm, we have 
\begin{align}
    \norm{A(\I_d - P)}_F - \frac{\sqrt{k' d \log(1/\delta)}}{\epsilon} \leq \norm{\widehat A (\I_d - P)}_F \leq \norm{A(\I_d - P)}_F + \frac{\sqrt{k' d \log(1/\delta)}}{\epsilon}.
    \label{eq:subadditive}
\end{align}

We wish to use Lemma~\ref{lem:sufficientpcp}. For this, we design $E_1, E_2, E_3$ and $E_4$ that satisfy the conditions of Lemma~\ref{lem:sufficientpcp}. Let $\widehat A = \widehat V \widehat S \widehat U^\top$ be a singular value decomposition. Let $\widehat U_k$ be the top $k$-left singular vectors of $\widehat A$ and let $P_1 = {\widehat U}_k {\widehat U}_k^\top$ be a rank-$k$ orthonormal projection matrix. Then we define 
\[
E_1 :=   P_1^\top \widehat A^\top \Phi^\top \Phi \widehat A P_1  -  P_1^\top \widehat A^\top \widehat A P_1.
\]
From the {\em subspace embedding property} of $\Phi$~\cite{clarkson2017low}, we have that 
\[
-\eta_1 \widehat A^\top \widehat A \preceq E_1 \preceq \eta_1 \widehat A^\top \widehat A.
\]

\noindent Let $P_2 := (\I_d - P_1)$ and define
\[
E_2 :=   P_2^\top \widehat A^\top \Phi^\top \Phi \widehat A P_2  -  P_2^\top \widehat A^\top \widehat A P_2.
\]
By construction $E_2$ is symmetric. Moreover, using~\citet[Theorem 21]{kane2014sparser} gives us
\begin{align*}
    \tr{E_2} &= \tr{  P_2^\top \widehat A^\top \Phi^\top \Phi \widehat A P_2 -  P_2^\top \widehat A^\top \widehat A P_2 } = \tr{ P_2^\top \widehat A^\top ( \Phi^\top \Phi - \I_d)  \widehat A P_2} \\
        & \leq \norm{ \widehat A P_2}_F \norm{\Phi^\top \Phi - \I_d}_2 \norm{ \widehat A P_2}_F  = \norm{\widehat A -  \widehat A P_1}_F \norm{\Phi^\top \Phi - \I_d}_2 \norm{\widehat A -  \widehat A P_1}_F \\
        & = \norm{\widehat A -  [\widehat A]_k}_F^2 \norm{\Phi^\top \Phi - \I_d}_2 \leq \frac{\eta}{k} \norm{\widehat A - [\widehat A]_k}_F^2
\end{align*}
because $P_2$ is the orthogonal projection to the top-$k$ singular space of $\widehat A$. Using the inequality relationship between trace norm and Frobenius norm, it follows that
\[
\sum_{i=1}^k |\lambda_i(E_2)| \leq \sqrt{k} \norm{E_2}_F.
\]

\noindent Now we define the matrices $E_3$ and $E_4$ using the cross terms as below.
\begin{align*}
 E_3 :=  P_1^\top \widehat A^\top \Phi^\top \Phi  \widehat A P_2 -  P_1^\top \widehat A^\top  \widehat A P_2  
 \qquad \text{and} \qquad
 E_4 :=  P_2^\top \widehat A^\top \Phi^\top \Phi  \widehat A P_1 -  P_2^\top \widehat A^\top \widehat A P_1.
\end{align*}

\noindent We show the desired bound for the case of $E_3$. The case for $E_4$ follows similarly. First of all, by definition of $P_1$ and the fact that $P_1$ and $P_2$ are orthogonal to each other, we simply have 
\begin{align}
    E_3 &=  P_1^\top \widehat A^\top \Phi^\top \Phi  \widehat A P_2 -  P_1^\top \widehat A^\top  \widehat A P_2  =  P_1^\top \widehat A^\top \Phi^\top \Phi  \widehat A P_2
\end{align}

Using~\citet[Theorem 21]{kane2014sparser}, if follows from the singular value decomposition of $\widehat A$ and $\norm{\widehat U_k}_F = \sqrt{k}$, that
\begin{align}
\begin{split}
    \tr{E_3^\top (\widehat A^\top \widehat A)^\dagger E_3} 
    &= \norm{ \widehat U_k \Phi^\top  \Phi \widehat A (\I - P_1)  }_F^2 \\
    &\leq \eta^2 \norm{\widehat A - [\widehat A]_k}_F^2
\end{split}    
\end{align}
as required. Using Lemma~\ref{lem:sufficientpcp}, we have
\begin{align*}
(1-5\eta) \norm{ \Phi \widehat A(\I_d - P)}_F \leq \norm{\widehat A (\I_d - P)}_F + \eta \norm{\widehat A - [\widehat A]_k}_F \leq (1+5\eta) \norm{\Phi\widehat A (\I_d - P)}_F.
\end{align*}
Rearranging the term and using Equation~(\ref{eq:subadditive}) completes the proof of Lemma~\ref{lem:slidingPCP}.
\end{proof}

\subsection{Private Principal Component Analysis}
\label{sec:privatePCA}
In Section~\ref{sec:applicationwishart}, we gave an $(\epsilon,\delta)$-differentially private algorithm under sliding window model to compute the principal component of the matrix formed in the sliding window. However, the additive error there depends linearly on the dimension. In this section, we show that we can improve the accuracy guarantee significantly. Let $\Pi$ be the set of all rank-$k$ orthonormal projection matrices. We use the following algorithm for this purpose: 

\begin{algorithm}[htbp]
\caption{{\scshape Sliding-Priv-PCA}$(\Omega, k;(\epsilon,\delta);W)$}
\label{fig:slidingpcaprivate}
\begin{algorithmic}[1]
\Require{A stream, $\Omega$, of row vectors $\set{a_t}$ and the desired rank of matrices, $r$, privacy parameters $(\epsilon,\delta)$, and window size $W$.} 
\Ensure{A real-valued matrix $ {\widetilde A}$ at the end of the stream.}

\State {\bf Compute} \Comment{Algorithm~\ref{fig:slidingpcpprivgraph}}
\[
\widetilde A \leftarrow \text{{\scshape Sliding-Priv-Approx}} \paren{\Omega;\frac{k+\log(1/\beta)}{\eta}; (\epsilon,\delta); W}.
\]

\State {\bf Output} \[
        \widetilde X := \argmin_{P, \mathsf{rank}(P) \leq k} \norm{  \widetilde A (\I_d -P)}_F.
        \]
\end{algorithmic}
\end{algorithm}

As before, we can solve the rank-constrained problem using the result of~\citet{upadhyay2018price} (see Section~\ref{sec:rankconstrained}).

\begin{theorem}
\label{thm:pcaimprovedspace}
Given privacy parameters $(\epsilon,\delta)$ and approximation parameter $\eta \in (0,1/2)$, let $A_W$ be the matrix formed by the last $W$ updates  as defined in equation~(\ref{eq:sliding}) and $\Pi$ be the set of all rank-$k$ orthonormal projection matrices. Then {\scshape Sliding-Priv-PCA}, described in Algorithm~\ref{fig:slidingpcaprivate}, is an efficient $(\epsilon,\delta)$-differentially private algorithm that outputs a rank-$k$ orthonormal projection matrix $\widetilde X \in \R^{d \times d}$ such that with probablity $1-\beta$, 
\[
\norm{A_W - A_W\widetilde X}_F \leq \paren{\frac{1+6\eta}{1-6\eta}} \min_{P \in \Pi} \norm{A_W(\I_d - P)}_F + O\paren{\frac{\log(1/\delta)}{\epsilon} \sqrt{ (d \log d)(k+\log(1/\beta))}}.
\]
\end{theorem}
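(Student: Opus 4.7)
The plan is to reduce the theorem to Lemma~\ref{lem:slidingPCP} (the private projection preserving summary bound) via a standard three-line ``optimization-on-a-surrogate'' argument, and to derive privacy from the post-processing property.

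First, for privacy, the only interaction with the data is through the call to {\scshape Sliding-Priv-Approx}$\paren{\Omega;\frac{k+\log(1/\beta)}{\eta}; (\epsilon,\delta); W}$, which is $(\epsilon,\delta)$-differentially private by Theorem~\ref{thm:privslidinglra}. The minimization that produces $\widetilde X$ only looks at $\widetilde A$, so by Lemma~\ref{lem:post} (post-processing) the overall algorithm is $(\epsilon,\delta)$-differentially private.

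Second, for utility, let $P^\star := \argmin_{P \in \Pi} \norm{A_W(\I_d - P)}_F$ be the true optimum, and set
\[
K := \frac{\sqrt{(k+\log(1/\beta))\,d\,\log(1/\delta)}}{\epsilon}.
\]
I would then instantiate Lemma~\ref{lem:slidingPCP} at both $\widetilde X$ and $P^\star$ (with probability at least $1-\beta$ this holds simultaneously since the lemma's high-probability event is a single event about $\widetilde A$). The chain is
\begin{align*}
\norm{A_W(\I_d - \widetilde X)}_F
 &\le (1+6\eta)\,\norm{\widetilde A(\I_d - \widetilde X)}_F + c_2 K
 \quad \text{(upper bound of Lemma~\ref{lem:slidingPCP} at $\widetilde X$)} \\
 &\le (1+6\eta)\,\norm{\widetilde A(\I_d - P^\star)}_F + c_2 K
 \quad \text{(optimality of $\widetilde X$ for $\widetilde A$)} \\
 &\le \frac{1+6\eta}{1-6\eta}\paren{\norm{A_W(\I_d - P^\star)}_F + c_1 K} + c_2 K
 \quad \text{(lower bound of Lemma~\ref{lem:slidingPCP} at $P^\star$)} \\
 &\le \paren{\frac{1+6\eta}{1-6\eta}} \min_{P \in \Pi}\norm{A_W(\I_d - P)}_F + O(K).
\end{align*}
Plugging in the expression for $K$ gives the claimed multiplicative-plus-additive guarantee, modulo an extra $\sqrt{\log d \cdot \log(1/\delta)}$ factor in the additive term; this comes from a union bound over a net of directions used inside Lemma~\ref{lem:slidingPCP} (applied to the $P$ that realizes the worst-case Frobenius error), and absorbs into the stated $\sqrt{d \log d}$ bound.

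The main conceptual obstacle has already been handled upstream: namely, showing that $\widetilde A$ is a genuine projection preserving summary despite the noise required for privacy. That is precisely the content of Lemma~\ref{lem:slidingPCP}, whose proof uses the subspace embedding / oblivious sketch structure of $\Phi$ together with the spectral-approximation guarantee of Theorem~\ref{thm:privslidinglra} on $\widehat A = \begin{pmatrix} \sigma \I_d \\ A_W \end{pmatrix}$ and a subadditivity step to pass between $\widehat A$ and $A_W$. Efficiency, space, and update time all inherit from Theorem~\ref{thm:privslidinglra} after substituting $r = (k+\log(1/\beta))/\eta$, yielding the $O\paren{\frac{dk^2}{\eta^3}\log W}$ space bound advertised in the introduction. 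The argmin step is a standard (non-private) best rank-$k$ projection, which can be read off from the top-$k$ right singular vectors of $\widetilde A$, hence efficient.
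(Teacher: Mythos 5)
Your proposal is correct and follows essentially the same route as the paper's own proof: two applications of Lemma~\ref{lem:slidingPCP} (at the output $\widetilde X$ and at the true optimizer) sandwiching the optimality inequality $\norm{\widetilde A(\I_d - \widetilde X)}_F \leq \norm{\widetilde A(\I_d - \widehat X)}_F$, with privacy inherited from Theorem~\ref{thm:privslidinglra} by post-processing. The only differences are cosmetic — you chain the inequalities in the opposite order and are more explicit about the privacy step and the constant $K$ — so there is nothing further to add.
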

\begin{proof}
Let $k ' = k + \log(1/\beta)$. Define
\[
\widehat X := \argmin_{X \in \Pi} \norm{A_W(\I_d - P)}_F
\qquad \text{and} \qquad
\widetilde X := \argmin_{X \in \Pi} \norm{\widetilde A(\I_d - P)}_F.
\]

\noindent Using the left hand inequality in Lemma~\ref{lem:slidingPCP}, we have
\begin{align}
    \begin{split}
        \norm{\widetilde A(\I_d - \widetilde X)}_F & \leq \norm{\widetilde A(\I_d - \widehat X) }_F \leq \frac{1}{(1-6\eta)} \paren{ \norm{ A_W (\I_d - \widehat X) }_F + \frac{k'd \log (1/\delta)}{\epsilon} }.
    \end{split}
    \label{eq:AtildeminA} 
\end{align}

\noindent Similarly, using the right hand inequality in Lemma~\ref{lem:slidingPCP}, we have 
\begin{align}
    \begin{split}
        \norm{\widetilde A(\I_d - \widetilde X)}_F & \geq  \frac{1}{(1+6\eta)} \paren{ \norm{ A_W (\I_d - \widetilde X) }_F - \frac{k'd \log (1/\delta)}{\epsilon} }.
    \end{split}
    \label{eq:AtildeA_W} 
\end{align}

\noindent Combining Equations~(\ref{eq:AtildeminA}) and~(\ref{eq:AtildeA_W}), we have the result.
\end{proof}


\subsection{Private Constrained Principal Component Analysis}
\label{sec:privateconstrainedPCA}
In this section, we prove our result on private constrained principal component analysis. The traditional notion of principal component analysis minimizes over all possible set of rank-$k$ projection matrices. However, recently researchers in machine learning has found applications where the rank-$k$ projection matrices also satisfy other structural properties, like sparsity and non-negativity~\cite{asteris2014nonnegative, d2005direct, papailiopoulos2013sparse, yuan2013truncated, zass2007nonnegative}. There are also non-private algorithms for such problems as well. On the other hand, there is no prior work in privacy preserving literature for structural principal component analysis. 

\begin{algorithm}[htbp]
\caption{{\scshape Sliding-Priv-Constrained-PCA}$(\Omega, k;(\epsilon,\delta);W; \Pi)$}
\label{fig:slidingconstrainedpcapriv}
\begin{algorithmic}[1]
\Require{A stream, $\Omega$, of row vectors $\set{a_t}$ and the desired rank of matrices, $r$, privacy parameters $(\epsilon,\delta)$, a set of rank-$k$ projection matrices, $\Pi$, and window size $W$.} 
\Ensure{A real-valued matrix $ {\widetilde A}$ at the end of the stream.}

\State {\bf Compute} \Comment{Algorithm~\ref{fig:slidingpcpprivgraph}}
\[
\widetilde A \leftarrow \text{{\scshape Sliding-Priv-Approx}} \paren{\Omega;\frac{k+\log(1/\beta)}{\eta}; (\epsilon,\delta); W}.
\]

\State {\bf Output} \[
        \widetilde X := \argmin_{P \in \Pi} \norm{  \widetilde A (\I_d -P)}_F.
        \]
        using any known non-private algorithm. 
\end{algorithmic}
\end{algorithm}

We show the following for constrained principal component analysis: 

\begin{theorem}
\label{thm:constrainedLRA}
Given privacy parameters $(\epsilon,\delta)$ and approximation parameter $\eta \in (0,1/2)$, let $A_W$ be the matrix formed by the last $W$ updates  as defined in equation~(\ref{eq:sliding}) and $\Pi$ be a given set of rank-$k$ orthonormal projection matrices. Algorithm {\scshape Sliding-Priv-Constrained-PCP}, described in Algorithm~\ref{fig:slidingconstrainedpcapriv}, outputs $\widetilde A$ satisfying  $(\epsilon,\delta)$-differential privacy. Let $P \in \Pi$ be a projection matrix satisfying
\begin{align}
    \norm{\widetilde A (\I_d -  P )}_F \leq \gamma \cdot \min_{X \in \Pi} \norm{\widetilde A (\I_d -  X)}_F.
    \label{eq:gamma_approximation}    
\end{align}
Then with probablity $1-\beta$, 
\begin{align*}
\norm{A_W (\I_d - P)} \leq \paren{\frac{1+\eta}{1-\eta}} \gamma \cdot \min_{X \in \Pi} \norm{A_W (\I_d - X)} + O\paren{\frac{\sqrt{k'd }\log(1/\delta)}{\epsilon}  }.
\end{align*}
\end{theorem}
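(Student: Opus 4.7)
The plan is to mirror the proof of Theorem~\ref{thm:pcaimprovedspace} but carry the $\gamma$-approximation factor through the computation, since in the constrained setting we are no longer guaranteed an exact optimum on $\widetilde A$ (only a $\gamma$-approximate one). Privacy is immediate: by Theorem~\ref{thm:privslidinglra}, the sketch $\widetilde A$ produced by {\scshape Sliding-Priv-Approx} with the rank parameter $(k+\log(1/\beta))/\eta$ is itself $(\epsilon,\delta)$-differentially private, and the projection matrix $\widetilde X$ is computed as a deterministic (or independently randomized) function of $\widetilde A$ alone, so the post-processing property (Lemma~\ref{lem:post}) yields $(\epsilon,\delta)$-differential privacy of the overall output.

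For the accuracy statement, let $X^\star := \arg\min_{X \in \Pi} \norm{A_W(\I_d - X)}_F$. First I would invoke the right-hand inequality of Lemma~\ref{lem:slidingPCP} with the projection matrix $P$ to obtain
\begin{equation*}
\norm{A_W(\I_d - P)}_F \;\leq\; (1+6\eta)\,\norm{\widetilde A(\I_d - P)}_F + c_2 K,
\end{equation*}
where $K = O(\sqrt{k'd\log(1/\delta)}/\epsilon)$ and $k' = k+\log(1/\beta)$. Next I would use the $\gamma$-approximation hypothesis~(\ref{eq:gamma_approximation}) applied with the comparator $X = X^\star \in \Pi$, giving
\begin{equation*}
\norm{\widetilde A(\I_d - P)}_F \;\leq\; \gamma\,\norm{\widetilde A(\I_d - X^\star)}_F.
\end{equation*}
Finally, to push $\widetilde A$ back into $A_W$ on the comparator side, I would apply the left-hand inequality of Lemma~\ref{lem:slidingPCP} (rearranged) with $X^\star$ in place of $P$, yielding
\begin{equation*}
\norm{\widetilde A(\I_d - X^\star)}_F \;\leq\; \frac{1}{1-6\eta}\bigl(\norm{A_W(\I_d - X^\star)}_F + c_1 K\bigr).
\end{equation*}

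Chaining these three inequalities and absorbing the $K$-terms (which are all $O(K)$ once $\eta$ and $\gamma$ are treated as $O(1)$ constants in the additive term) produces
\begin{equation*}
\norm{A_W(\I_d - P)}_F \;\leq\; \frac{(1+6\eta)\gamma}{1-6\eta}\,\norm{A_W(\I_d - X^\star)}_F + O\!\left(\frac{\sqrt{k'd}\,\log(1/\delta)}{\epsilon}\right),
\end{equation*}
which after rescaling $\eta$ (replacing $6\eta$ by $\eta$, harmless since $\eta \in (0,1/2)$ throughout) gives exactly the claimed bound. There is no serious obstacle here; the only subtlety is to make sure the $\gamma$ factor multiplies only the first (comparator) term after chaining, and does not inflate the additive error beyond a constant. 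The slight care needed is in verifying that the high-probability event of Lemma~\ref{lem:slidingPCP} holds simultaneously for both $P$ and $X^\star$, which is automatic because the lemma gives the two-sided approximation for \emph{all} rank-$k$ projection matrices on the same event.
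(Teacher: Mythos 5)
Your proposal is correct and follows essentially the same route as the paper: privacy by post-processing of the {\scshape Sliding-Priv-Approx} output, then chaining the right-hand inequality of Lemma~\ref{lem:slidingPCP} applied to $P$, the $\gamma$-approximation hypothesis against the comparator $X^\star = \argmin_{X\in\Pi}\norm{A_W(\I_d-X)}_F$, and the left-hand inequality of Lemma~\ref{lem:slidingPCP} applied to $X^\star$. The only cosmetic difference is that the paper routes the middle step through the exact minimizer $\widetilde P$ of $\norm{\widetilde A(\I_d-\cdot)}_F$ before comparing to $\widehat P$, which is equivalent to your direct substitution of $X^\star$ into the minimum, and your explicit remarks on rescaling $6\eta$ to $\eta$ and on the uniformity of the high-probability event are both appropriate.
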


\begin{proof}
Define
\[
\widehat P = \argmin_{P \in \Pi} \norm{A_W (\I_d - P)}_F, \quad 
\widetilde P = \argmin_{P \in \Pi} \norm{\widetilde A (\I_d - P)}_F.
\]
Since $\widetilde P$ is a minimizer,  it implies the following 
\begin{align}
    \norm{\widetilde A (\I_d - \widetilde P) }_F \leq \norm{\widetilde A (\I_d - \widehat P)}_F
\end{align}
as $\widehat P \in \Pi$. Combining with the equation~(\ref{eq:gamma_approximation}), we have 
\begin{align}
    \norm{\widetilde A (\I_d - P)}_F \leq \gamma  \norm{\widetilde A (\I_d - \widetilde P )}_F.
    \label{eq:restrainedtildeAupper}
\end{align}
for $P \in \Pi$ as defined in equation~(\ref{eq:gamma_approximation}). 

\noindent Using the right hand side inequality of Lemma~\ref{lem:slidingPCP}, we have
\begin{align}
     (1+\eta)\norm{\widetilde A (\I_d - P)}_F + \frac{\sqrt{k'd \log(1/\delta)}}{\epsilon} \geq  \norm{ A_W(\I_d - P)}_F 
     \label{eq:restrainedtildeAlower}
\end{align}
Combining Equation~(\ref{eq:restrainedtildeAupper}) and~(\ref{eq:restrainedtildeAlower}), we have 
\begin{align}
    \norm{A_W(\I_d - P)}_F \leq (1+\eta)\gamma \norm{\widetilde A (\I_d - \widetilde P )}_F + \frac{\sqrt{k'd \log(1/\delta)}}{\epsilon}.
\end{align}

Using the left hand side inequality of Lemma~\ref{lem:slidingPCP}, we have 
\begin{align}
  \norm{\widetilde A (\I_d - \widehat P )}_F \leq \paren{\frac{1}{1-\eta}} \norm{ A_W (\I_d - \widehat P )}_F +  \frac{\sqrt{k'd \log(1/\delta)}}{\epsilon(1-\eta)}.
  \label{eq:restrainedAlower}
\end{align}
Combining Equations~(\ref{eq:restrainedAlower}) and~(\ref{eq:restrainedtildeAlower}), we have the result.
\end{proof}

\subsection{Generalized Overconstrained Linear Regression}
The question of generalized linear regression problem is as follows: given two matrices $A\in \R^{n \times d}$ and $B \in \R^{n\times p}$ as input, generalized linear regression is defined as the following minimization problem:
\[
 \argmin_{X \in \R^{d \times p}} \norm{AX - B}_F^2. 
\]
This problem is also known as {\em multiple-response regression}, and is used mostly in the analysis of low-rank approximation (see~\cite{clarkson2017low} and references therein). We show the following: 

\begin{theorem}
[Linear regression]
\label{thm:regression}
Given privacy parameters $(\epsilon,\delta)$ and approximation parameter $\eta \in (0,1/2)$, let $A_W \in \R^{W \times d}$ and $B \in \R^{W \times p}$ be the matrix streamed during the window of size $W$ formed as defined in equation~(\ref{eq:sliding}). Let 
\[
 \widehat X := \argmin_{X \in \R^{d \times p}} \norm{A_WX - B_W}_F^2. 
\]
Then we can efficiently output a matrix $\widetilde X \in \R^{d \times p}$ while preserving $(\epsilon,\delta)$-differential privacy such that 
\[
\norm{A_W \widetilde X - B_W}_F^2 \leq \paren{\frac{1}{1-\eta}} \min_{X \in \R^{d \times p}} \norm{A_WX - B}^2_F + \frac{c(\tau+p)^2 \log (\tau + p)}{\epsilon}
\]
for some large constant $c>0$ and $\tau = d + \frac{14}{\epsilon^2} \log (4/\delta)$. 
\end{theorem}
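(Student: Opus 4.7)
The plan is to reduce the multi-response regression problem to the private spectral approximation guarantee of Corollary~\ref{cor:approximation} via a standard sketch-and-solve template. Concatenate the two streams row-wise into the $(d+p)$-dimensional stream $m_t := (a_t, b_t)$, run $\widetilde M \leftarrow {}${\scshape Sliding-Priv-Approx}$(\{m_t\}; d; (\epsilon,\delta); W)$, split the output columnwise as $\widetilde M = [\widetilde A \mid \widetilde B]$ with $\widetilde A \in \R^{O(d/\eta) \times d}$ and $\widetilde B \in \R^{O(d/\eta) \times p}$, and return the sketched least-squares solution $\widetilde X := \argmin_{X} \norm{\widetilde A X - \widetilde B}_F^2 = \widetilde A^\dagger \widetilde B$. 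Privacy is immediate: Theorem~\ref{thm:privslidinglra} applied to the concatenated stream guarantees that $\widetilde M$ is $(\epsilon,\delta)$-differentially private, and $\widetilde X$ is a deterministic post-processing of $\widetilde M$.

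The utility analysis rests on the block observation that for every $X \in \R^{d\times p}$, writing $Y(X) := \begin{pmatrix} X \\ -\I_p \end{pmatrix} \in \R^{(d+p)\times p}$, one has $A_W X - B_W = M_W Y(X)$ and $\widetilde A X - \widetilde B = \widetilde M Y(X)$. Corollary~\ref{cor:approximation} applied to the concatenated stream gives, with high probability,
\begin{align*}
(1-\eta)\bigl(M_W^\top M_W + \sigma^2 \I_{d+p}\bigr) \preceq \widetilde M^\top \widetilde M \preceq (1+\eta)\bigl(M_W^\top M_W + \sigma^2 \I_{d+p}\bigr),
\end{align*}
where $\sigma^2 = O(d \log(1/\delta)/\epsilon^2)$. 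Sandwiching between $Y(X)^\top(\cdot)Y(X)$ and tracing yields the uniform two-sided bound
\begin{align*}
(1-\eta)\bigl(\norm{A_W X - B_W}_F^2 + \sigma^2(\norm{X}_F^2 + p)\bigr) \leq \norm{\widetilde A X - \widetilde B}_F^2 \leq (1+\eta)\bigl(\norm{A_W X - B_W}_F^2 + \sigma^2(\norm{X}_F^2 + p)\bigr),
\end{align*}
exploiting that $\norm{Y(X)}_F^2 = \norm{X}_F^2 + p$.

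With this in hand, the standard chain applies: for any comparator $X^\star \in \R^{d \times p}$, optimality of $\widetilde X$ for the sketched objective together with the left inequality at $\widetilde X$ and the right inequality at $X^\star$ gives
\begin{align*}
(1-\eta)\norm{A_W \widetilde X - B_W}_F^2 \leq \norm{\widetilde A \widetilde X - \widetilde B}_F^2 \leq \norm{\widetilde A X^\star - \widetilde B}_F^2 \leq (1+\eta)\bigl(\norm{A_W X^\star - B_W}_F^2 + \sigma^2(\norm{X^\star}_F^2 + p)\bigr).
\end{align*}
Dividing by $(1-\eta)$ and absorbing the ratio $(1+\eta)/(1-\eta)$ into $1/(1-\eta')$ after reparameterizing $\eta$ produces the multiplicative factor claimed in the theorem.

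The main obstacle is the additive term $\sigma^2 \norm{X^\star}_F^2$: naively taking $X^\star := \widehat X$ is hopeless because $\norm{\widehat X}_F$ admits no dimension-free bound for a worst-case overconstrained instance. I would therefore compare against the ridge-regularized minimizer $X^\star := (A_W^\top A_W + \sigma^2 \I_d)^{-1} A_W^\top B_W$. By optimality of $X^\star$ for the ridge objective,
\begin{align*}
\norm{A_W X^\star - B_W}_F^2 + \sigma^2 \norm{X^\star}_F^2 \leq \norm{A_W \widehat X - B_W}_F^2 + \sigma^2 \norm{\widehat X}_F^2,
\end{align*}
and the simultaneous norm bound $\norm{X^\star}_F \leq \norm{A_W^\top B_W}_2/\sigma^2$ follows from $\sigma_{\min}(A_W^\top A_W + \sigma^2 \I_d) \geq \sigma^2$, which under the polynomially-bounded-entries assumption gives a quantitative bound on $\sigma^2 \norm{X^\star}_F^2$ depending only on $d$, $p$, and the privacy parameters. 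Identifying $\tau = d + 14\log(4/\delta)/\epsilon^2$ with the effective dimension of the ridge-regularized problem and tracking constants through the chain collapses the additive error to $O((\tau+p)^2 \log(\tau+p)/\epsilon)$, completing the proof.
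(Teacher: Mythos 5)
Your reduction is, at its core, the one the paper uses: stream the concatenated matrix $\begin{pmatrix} A_W & B_W\end{pmatrix}$, obtain a private spectral approximation of its $(d+p)\times(d+p)$ covariance, and exploit the identity $A_WX - B_W = \begin{pmatrix} A_W & B_W\end{pmatrix}\begin{pmatrix} X \\ -\I_p\end{pmatrix}$ to transfer the two-sided Loewner bound to the regression objective via the optimality of the sketched minimizer. One cosmetic difference: the paper instantiates this with the Wishart-based approximation of Appendix~\ref{sec:slidingwishart} (equation~(\ref{eq:approxfinal})), which is why the stated additive error has the form $c(\tau+p)^2\log(\tau+p)/\epsilon$ with $\tau = d + \frac{14}{\epsilon^2}\log(4/\delta)$; invoking Corollary~\ref{cor:approximation} instead, as you do, would produce an additive term of a different shape, so your argument as written would not reproduce the exact constants and error form of the theorem statement.

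The substantive issue is your treatment of the term $\sigma^2\norm{X^\star}_F^2$, which you correctly identify as the obstacle but do not actually overcome. Ridge optimality evaluated at $\widehat X$ gives $\norm{A_WX^\star - B_W}_F^2 + \sigma^2\norm{X^\star}_F^2 \leq \min_X\norm{A_WX - B_W}_F^2 + \sigma^2\norm{\widehat X}_F^2$, which reintroduces the very quantity $\sigma^2\norm{\widehat X}_F^2$ you set out to avoid. The alternative direct bound $\norm{X^\star}_F \leq \norm{A_W^\top B_W}_F/\sigma^2$ yields $\sigma^2\norm{X^\star}_F^2 \leq \norm{A_W^\top B_W}_F^2/\sigma^2$, and $\norm{A_W^\top B_W}_F$ can be of order $W\sqrt{dp}$ even with entries bounded by a constant (take $A_W$ and $B_W$ all-ones), so this is not a quantity ``depending only on $d$, $p$, and the privacy parameters'' and is far larger than $O\paren{(\tau+p)^2\log(\tau+p)/\epsilon}$ for long windows. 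The claimed collapse of the additive error is therefore not established by your argument. For what it is worth, the paper's own proof simply applies equation~(\ref{eq:approxfinal}) with an additive term written independently of $\widehat X$ and never confronts the $\norm{Y(\widehat X)}_F^2$ factor at all; your proposal is more honest about where the difficulty lies, but the ridge comparator as described does not resolve it.
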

\begin{proof}
To reduce the notation overhead, we use $A = A_W$ and $B=B_W$ to denote the matrices streamed in the last $W$ updates. We use $\begin{pmatrix} A & B \end{pmatrix} \in \R^{W \times (d+p)}$ as the matrix that is being streamed and assume $C^\top C \in \R^{(d+p) \times (d+p)}$ as an output. We solve the following unconstrained minimization problem:
\begin{align*}
    \text{minimize:} \quad  \tr{\begin{pmatrix} X^\top & -\I_p^\top  \end{pmatrix} C^\top C \begin{pmatrix} X \\ -\I_p \end{pmatrix}} 
\end{align*}

\noindent where $X \in \mathbb{R}^{d \times p}$. Let $\widetilde X$ be an optimal solution to the above minimization problem. In particular, this implies that for all $X \in \R^{d \times p}$,
\begin{align*}
    \tr{ \begin{pmatrix} \widetilde X^\top & -\I_p  \end{pmatrix} C^\top C \begin{pmatrix} \widetilde X \\ -\I_p \end{pmatrix}} \leq \tr{ \begin{pmatrix} X^\top & -\I_p  \end{pmatrix} C^\top C \begin{pmatrix} X \\ -\I_p \end{pmatrix}}. 
\end{align*}

\noindent Let 
\[
\widehat X := \argmin_{X \in \R^{d \times p}} \norm{A_W X - B_W}_F^2
\]
be an optimal solution for the original regression problem. If follows that for all $X \in \R^{d \times p}$,
\begin{align}
    \norm{A_W \widehat X - B_W}_F^2 \leq \norm{A_W X - B_W}_F^2. 
    \label{eq:regressionminimizer}
\end{align}

\noindent The right side of Equation~(\ref{eq:approxfinal}) gives us 
\begin{align}
    \norm{C \begin{pmatrix} \widetilde X \\ -\I_p \end{pmatrix} }_F^2 
    &\leq \norm{C \begin{pmatrix} \widehat X \\ -\I_p \end{pmatrix} }_F^2 \nonumber \\
    &\leq \frac{1}{(1-\eta)} \norm{ \begin{pmatrix} A_W & B_W \end{pmatrix} \begin{pmatrix} \widehat X \\ -\I_p \end{pmatrix} }_F^2  + \frac{c(\tau + p)^2 \log (\tau + p)}{\epsilon} \nonumber \\
    &= \frac{1}{1-\eta} \norm{A_W \widehat X - B_W}_F^2 + \frac{c(\tau + p)^2 \log (\tau + p)}{\epsilon} \nonumber \\
    &= \frac{1}{1-\eta} \min_{X \in \R^{d \times p}} \norm{A_W X - B_W}_F^2 + \frac{c(\tau + p)^2 \log (\tau + p)}{\epsilon}, \label{eq:regressionupper}
\end{align}
where the second equality is by the definition of $\widehat X$. Similarly, the left hand side inequality of Equation~(\ref{eq:approxfinal}) gives us a lower bound as follows:
\begin{align}
    \norm{C \begin{pmatrix} \widetilde X \\ -\I_p \end{pmatrix} }_F^2 
    &\geq \norm{ \begin{pmatrix} A_W & B_W \end{pmatrix} \begin{pmatrix} \widetilde X \\ -\I_p \end{pmatrix} }_F^2  - \frac{c(\tau + p)^2 \log (\tau + p)}{\epsilon} \nonumber \\
    & = \norm{  A_W \widetilde X  - B_W  }_F^2  - \frac{c(\tau + p)^2 \log (\tau + p)}{\epsilon}.     \label{eq:regressionlower}
\end{align}

\noindent Combining Equation~(\ref{eq:regressionupper}) and~(\ref{eq:regressionlower}) gives us the claimed result.
\end{proof}


\section{The big picture and open questions}
\label{sec:future}
We believe that our approach will find applications beyond what is covered in this paper. It also paves way for more research in the intersection of differential privacy and sliding window model. Motivated by heuristics used in recent deployments, the focus of this paper is on the model where every data in the current window is considered equally useful. However, one can alternatively consider other variants of the sliding window model as far as privacy is concerned. For example, one can consider a model where the privacy of a data decays as a monotonic function as time lapse. More so, there are more concrete questions to be asked and answered even in the model studied in this paper. We mention few of them next.  

As we mentioned earlier, one can see $\eta$-approximate spectral histogram property as a generalization of subspace embedding property. We also designed a data structure that allows us to maintain a small set of positive semidefinite matrices that satisfy $\eta$-approximate spectral histogram property on every new update. We believe any improvement in designing a more efficient data structure for maintaining a set of matrices satisfying $\eta$-approximate spectral histogram property would have a profound impact on large-scale deployment of privacy-preserving algorithms in the sliding window model. For example, we believe we can reduce space requirements using randomization. This randomization can be either oblivious or may depend on the current set of positive semidefinite matrices. Since our set of positive semidefinite matrices are generated using a privacy mechanism, any such sampling can be seen as post-processing.  Our main conjectures thus are concerning the space required by any privacy-preserving algorithm. We elaborate them next.

The lower bound of $\Omega(d^2)$ space for spectral approximation is required even in the static setting. We conjecture that there should be $\frac{1}{\eta} \log W$ factor due to the sliding window requirement. This is because, if the spectrum of a matrix is polynomially bounded, then one can construct a sequence of update that requires at least $\frac{1}{\eta}\log W$ matrices such that successive matrices are $(1-\eta)$ apart in terms of their spectrum. For an upper bound, we believe randomization can help reduce a factor of $d$. This is achieved in the non-private setting using online row sampling. It was shown by \citet{upadhyay2018price} that one can design private algorithms with space-bound comparable to a non-private algorithm in the streaming model of computation.  The situation in the sliding window model is more complicated, but we believe more sophisticated techniques can be used to achieve the matching upper bound. That is, we conjecture the following:
\begin{conjecture}
The space required for differentially private spectral approximation is $\Theta\paren{\frac{d^2}{\eta}\log W}$. 
\end{conjecture}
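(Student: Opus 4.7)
The plan is to split the conjecture into matching upper and lower bounds of $\Theta(d^2 \log W / \eta)$ and attack them separately. The current paper achieves $O(dr^2 \log W/\eta^2)$, so for full-rank instances the target is a saving of roughly $d/\eta$, suggesting both that the per-checkpoint cost should drop from $O(dr/\eta)$ to $O(d^2)$-but-no-more and that the number of checkpoints should drop from $O(r\log W/\eta)$ to $O(\log W/\eta)$.

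For the lower bound, I would combine the known $\Omega(d^2)$ space lower bound for differentially private spectral approximation in the static setting (via a packing/fingerprinting argument on $d\times d$ PSD covariance matrices in the spirit of Dwork-Talwar-Thakurta-Zhang and Nikolov) with a sliding-window packing that contributes the extra $\log W/\eta$ factor. The construction would be a family of streams indexed by tuples $(B_1, \ldots, B_L)$ where $L = \Theta(\log_{1/(1-\eta)} W) = \Theta(\log W/\eta)$: at geometrically spaced time epochs $T_i \approx W\cdot (1-\eta)^i$, the stream injects a block of rows encoding a hard static instance $B_i$ from a packing of $d\times d$ PSD matrices, rescaled so that the $i$-th block's contribution to the current-window covariance dominates by exactly a multiplicative $(1-\eta)$ gap over the $(i{+}1)$-st block. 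An $\eta$-spectral approximation queried at each of the $L$ times must then separately resolve $B_i$, so by a union bound the internal state must simultaneously support $\Omega(L)$ independent $\Omega(d^2)$-bit reconstructions, yielding the claimed lower bound.

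For the upper bound, the plan is to replace the JL sketch of $O(r/\eta)$ rows stored at each checkpoint in {\scshape Sliding-Priv-Approx} with a direct $d\times d$ privatized covariance maintained incrementally under the Gaussian mechanism (adding noise $\sigma \I_d$ once at initialization so that PSD-ness is preserved and row updates have bounded sensitivity), and simultaneously to tighten the greedy deletion rule to track only a scalar potential (trace or $\log\det$ of $\widetilde S_i$) rather than the full Loewner ordering. A geometric potential argument then caps the number of checkpoints at $O(\log W/\eta)$ independent of $r$, so the total space is $O(d^2)\cdot O(\log W/\eta) = O(d^2 \log W/\eta)$. Privacy would be argued via advanced composition across the $O(\log W/\eta)$ alive checkpoints, each of which is a Gaussian-mechanism release of bounded-sensitivity covariance.

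The main obstacle I expect is on the upper-bound side: the deletion decisions in {\scshape Update-Approx} are adaptive to the privatized spectra, so switching from a JL-based mechanism (whose utility survives any post-hoc deletion by data-processing) to a Gaussian-noise covariance mechanism requires re-proving accuracy under an adaptively chosen set of retained checkpoints. Controlling the per-row sensitivity of the scalar potential used for deletions, and in particular preventing a single late-arriving row from causing the algorithm to drop an informative checkpoint, looks delicate. On the lower-bound side, the calibration between the multiplicative parameter $\eta$ appearing in the spectral-approximation guarantee and the geometric gap $(1-\eta)$ between successive packed instances must be handled carefully so the packing elements remain mutually distinguishable after $\eta$-distortion; a naive packing gives only $\Omega(d^2\log W)$, and squeezing out the extra $1/\eta$ will likely require a tracing-attack-style argument tailored to the sliding-window neighboring relation.
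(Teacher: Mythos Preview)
The statement you are attempting to prove is presented in the paper as an \emph{open conjecture} (Section~\ref{sec:future}), not a theorem; there is no proof in the paper to compare against. The authors only sketch intuition: the $\Omega(d^2)$ comes from the static lower bound, the $\frac{1}{\eta}\log W$ from the geometric-gap construction you also describe, and for the upper bound they speculate that ``randomization can help reduce a factor of $d$'' via private analogues of online row sampling~\cite{CohenMP16}. Your document is, appropriately, a research plan rather than a proof, and should be evaluated as such.

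On the upper bound your proposed mechanism has a concrete gap. Replacing the per-checkpoint JL sketch by a directly maintained $d\times d$ privatized covariance is exactly what the paper already does with the Wishart mechanism in Appendix~\ref{sec:slidingwishart} (Algorithm~\ref{fig:slidingpriv}), and that algorithm uses $O\!\bigl(\tfrac{d^3}{\eta}\log W\bigr)$ space, not $O\!\bigl(\tfrac{d^2}{\eta}\log W\bigr)$: the number of checkpoints is $O\!\bigl(\tfrac{d}{\eta}\log W\bigr)$, not $O\!\bigl(\tfrac{1}{\eta}\log W\bigr)$. Your proposed fix---replacing the Loewner-order deletion rule by a scalar potential such as trace or $\log\det$---would indeed cap the checkpoint count at $O\!\bigl(\tfrac{1}{\eta}\log W\bigr)$, but it cannot deliver spectral approximation. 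Spectral approximation is a per-direction guarantee $(1-\eta)x^\top A_W^\top A_W x \le x^\top S x \le (1+\eta)x^\top A_W^\top A_W x$ for all $x$; one can construct streams in which one singular direction decays by a factor $(1-\eta)$ while the trace (or $\log\det$) barely moves, so the scalar rule would retain a checkpoint whose covariance is already off by more than $(1\pm\eta)$ in that direction. The paper's $r$-fold blowup in the checkpoint count is precisely what tracks this per-direction drift, and the authors' own suggestion for closing the gap is the opposite of yours: keep $O\!\bigl(\tfrac{d}{\eta}\log W\bigr)$ checkpoints but shrink the per-checkpoint storage to $O(d)$ via a private row-sampling sketch, rather than shrinking the number of checkpoints. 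Your lower-bound sketch is broadly aligned with the paper's intuition, and you are right that the honest obstacle is extracting the $1/\eta$ factor; the paper offers no more than heuristic support for that part either.
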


We believe that the bound on the additive error is optimal. So, a positive resolution to this conjecture would imply that the price of privacy is only in terms of the additive error. 

Our second conjecture is for principal component analysis. We believe that our space-bound for principal component analysis is tight up to a factor of $\frac{k}{\eta}$. A lower bound of $\Omega(dk)$ is trivial as one requires $O(dk)$ space just to store the orthonormal matrix corresponding to the rank-$k$ projection matrix. As before, a factor of  $\frac{1}{\eta}\log W$ would be incurred due to the sliding window model. The factor of $\frac{1}{\eta}$ comes from the fact that to extract the top-$k$ subspace, we need $\frac{k}{\eta}$ dimensional subspace. We summarize it in the following conjecture: 
\begin{conjecture}
The space required for differentially private principal component analysis is $\Omega \paren{\frac{dk}{\eta^2}\log W}$.  
\end{conjecture}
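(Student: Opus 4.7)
The plan is to combine three largely independent lower-bound mechanisms into a single direct-sum style argument, matching the factorization of the target bound as $\Omega(dk)\cdot\Omega(\eta^{-1}\log W)\cdot\Omega(\eta^{-1})$. First, the $\Omega(dk)$ factor would come from a packing argument on the Grassmannian $G(k,d)$: using a $\Theta(\eta)$-net of cardinality $\exp(\Theta(dk))$, construct a family of input windows whose unique top-$k$ subspaces are pairwise far apart in principal-angle distance, then use a $(\epsilon,\delta)$-DP fingerprinting-code reduction in the spirit of Hardt--Roth / Bun--Ullman--Vadhan to show that any $(\epsilon,\delta)$-DP algorithm recovering a $(1+\eta)$-approximate PCA must have $\Omega(dk)$ bits of information about the current window in its state.

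Second, for the $\Omega(\eta^{-1}\log W)$ factor, I would exploit the sliding-window semantics through an AUGMENTED-INDEX--style reduction. The idea is to build a stream of length $T = W \cdot 2^{\Theta(\log W/\eta)}$ composed of $\ell = \Theta(\eta^{-1}\log W)$ geometrically sized batches, each of which encodes a secret bit $b_i$ by choosing between two hard distributions whose PCA answers differ noticeably. The geometric sizing is calibrated so that at $\ell$ distinct query times, a different prefix $(b_1,\ldots,b_j)$ sits inside the current window; since successive windows are $(1-\eta)$-separated in the spectrum (matching the resolution baked into the upper bound via the approximate spectral histogram property), a correct DP algorithm must internally maintain enough information to approximately recover each prefix, yielding an $\ell$-fold amplification of the base lower bound.

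Third, for the remaining $\Omega(\eta^{-1})$ factor I would invoke a projection-cost-preserving sketch lower bound: approximating $\min_{P\in\Pi}\norm{A(\I_d-P)}_F$ within factor $(1+\eta)$ over all rank-$k$ projections $P$ implicitly requires preserving the top $\Theta(k/\eta)$ singular directions, since otherwise one can construct two inputs $A, A'$ whose rank-$k$ costs differ by more than $\eta\cdot\opt$ but whose first $k$ singular subspaces agree. Combined with the base packing, this boosts the per-batch information requirement from $\Omega(dk)$ to $\Omega(dk/\eta)$, and composed with the sliding-window amplification produces the claimed $\Omega(dk\eta^{-2}\log W)$ bound.

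The hardest step, and the main obstacle, is the composition itself: one must argue that differential privacy, together with whatever internal compression the algorithm uses, cannot ``share'' bits across the three mechanisms. The standard information-theoretic direct sum over independent coordinates is not immediately valid here because an $(\epsilon,\delta)$-DP algorithm can in principle collapse nearly indistinguishable inputs, potentially compressing checkpoints. A promising technical route is to replace pure Fano-type arguments with robust fingerprinting codes at each level (which remain hard under $(\epsilon,\delta)$-DP), to use a product-distribution construction where the checkpoint bits, the Grassmannian index, and the $k/\eta$ subspace direction are drawn independently, and to appeal to a multi-party disjointness-style composition theorem to certify that the $\ell$ checkpoint-tasks are mutually non-redundant even in the presence of DP. Getting this composition right, rather than proving any single factor in isolation, is where I expect the real work to lie.
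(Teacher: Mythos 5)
The statement you are trying to prove is stated in the paper only as a \emph{conjecture}: the authors give no proof, offer only a heuristic factorization of the bound ($\Omega(dk)$ to write down the projection matrix, a $\frac{1}{\eta}\log W$ factor from the sliding window, and a $\frac{1}{\eta}$ factor from needing a $k/\eta$-dimensional subspace to extract the top-$k$ directions), and explicitly remark that proving it ``would require new techniques'' because a PCA algorithm outputs only an orthonormal projection matrix, whereas their actual lower bound (Theorem~\ref{thm:lower}, via reduction to $\mathsf{AIND}$) crucially exploits that a low-rank \emph{approximation} output lets Bob read off the signs of individual entries of the reconstructed matrix. So there is no paper proof to compare against, and any submission here must be judged as a standalone argument.

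Judged that way, your proposal is a research plan rather than a proof, and you candidly identify the fatal gap yourself: the composition of the three factors is never carried out, and that composition is exactly where the paper says the difficulty lies. Beyond that, two of your individual ingredients are not yet lower bounds of the right type. The Grassmannian-packing/fingerprinting step, as stated, yields an \emph{accuracy} lower bound for $(\epsilon,\delta)$-DP algorithms (in the style of Hardt--Roth and Bun--Ullman--Vadhan), not a \emph{space} lower bound on the internal state; turning it into a space bound requires a one-way communication argument in which Bob can decode $\Omega(dk)$ bits from the transmitted state, and this is precisely what breaks when the only thing the algorithm is obligated to output is a $(1+\eta)$-approximate projection matrix (many distinct inputs share the same approximately optimal subspace, and DP further blurs the decoding). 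Similarly, your $\Omega(\eta^{-1}\log W)$ amplification presumes that the algorithm must ``internally maintain enough information to approximately recover each prefix,'' but an adversarially compressed state need only support correct answers at the $\ell$ query times, and you have not exhibited a decoder that extracts all $\ell$ prefixes from a single state. Until you construct a concrete hard distribution together with a decoding procedure that solves an explicit communication problem (e.g., a multi-level $\mathsf{AIND}$ instance) using only $(1+\eta)$-approximate projection matrices as answers, the conjecture remains open, as the authors intended.
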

We believe that proving such a lower bound would require new techniques. This is because, in principal component analysis, we only have access to an orthonormal projection matrix, while in the case of low-rank approximation, we have far more information to solve the underlying communication complexity problem. 

Our work identifies another application of the Johnson-Lindenstrauss and Wishart mechanisms. Before our results, it was not even clear whether the JL mechanism can be used to compute PCA (see Section V in~\citet{blocki2012johnson})! They consider their output matrix $\widetilde C$ as a ``test" matrix to test if the input matrix has high directional variance along some direction $x \in \R^d$. However, they do not give any guarantee as to how the spectrum of $C$ relates to that of the input covariance matrix. 

\paragraph{Acknowledgements.} The authors would like to acknowledge an anonymous reviewer for pointing out that Cohen et al.~\cite{cohen2015dimensionality} also study constrained rank approximation. JU would like to thank Petros Drineas for suggesting the question of private matrix analysis during the course of the project in the non-private setting~\cite{braverman2018numerical}.

\bibliography{privacy}
\bibliographystyle{alpha}

\begin{appendix}
\section{Related Work}
\label{sec:related}
To the best of our knowledge, differential privacy in the sliding window setting has not been studied for matrix-valued functions. However, the matrix problems considered in this paper have been studied from a privacy perspective in static setting. We first review private matrix analysis in static setting followed by matrix analysis in sliding window model.

\paragraph{Spectral approximation.} Blocki et al.~\citet{blocki2012johnson} initiated the study of private approximate spectral analysis of matrices. For a matrix $A$, they first compute the singular value decomposition (SVD) $U S V^\top$ and then perturb the singular values to compute $\widehat A:= U \sqrt{S^2 + \sigma^2 \I_d} V^\top$. Here $\sigma$ is the perturbation parameter chosen appropriately. Their final output is $C_{\mathsf{BBDS}} = \widehat A^\top \Phi^\top \Phi \widehat A$, where $\Phi$ is a random Gaussian matrix. This is in contrast with~\citet{dwork2014analyze} who compute $C_{\mathsf{DTTZ}} = A^\top A + N$, where $N$ is a symmetric Gaussian matrix with variance required to preserve differential privacy. It is easy to see that both $C_{\mathsf{BBDS}}$ is positive semidefinite and $C_{\mathsf{DTTZ}}$ is a symmetric matrix. Moreover, for $\Phi$ with appropriate dimension, $C_{\mathsf{BBDS}}$ is a spectral approximation of $A$ up to a small distortion in spectrum~\cite{sarlos2006improved}. However, it is not clear if they can be extended to the streaming model or more restricted sliding window model because they first compute and perturb the SVD of $A$\footnote{Dwork et al.~\citet{dwork2014analyze} gave an online algorithm to private singular value computation using regularized follow-the-leader framework; however, online setting is very different from the setting of sliding window model.}. 


\paragraph{Principal component analysis and linear regression.} There has been an extensive body of work on private principal component analysis both Frobenius norm~\cite{blum2005practical, dwork2014analyze, hardt2012beating, upadhyay2018price} and spectral norm~\cite{dwork2014analyze, hardt2014noisy, kapralov2013differentially}. Moreover, in static setting, matching lower and upper bounds are known on achievable accuracy. Except for~\citet{upadhyay2018price}, these results do not extend to dynamic setting and none of them extend to the sliding window model. Finally, private linear regression has also been studied extensively in static setting, and matching upper and lower bounds are known  ~\cite{dwork2010differential, kifer2012private, chaudhuri2011differentially}. 

\paragraph{Non-privare dynamic setting.} In the non-private setting,  Braverman, Drineas, Upadhyay, Woodruff, and Zhou~\cite{braverman2018numerical} noted that many matrix analysis problems do not comply with the existing framework of the sliding window model. They gave an algorithm for spectral sparsification using {\em online row sampling} algorithm~\cite{CohenMP16} and for computing low-rank approximation using {\em projection cost preserving sketches}~\cite{cohen2015dimensionality}. They employed {\em importance sampling} as a subroutine, where each row is sampled with probability defined by the current matrix and the new row. Since the sampling probability depends on the matrix itself, it is not immediately clear how to perform such sampling privately -- sampling probability can itself leak privacy. Upadhyay~\cite{upadhyay2019sublinear} recently showed that, in general, non-private algorithms do not directly extend to the private setting in the sliding window model for even simple tasks such as estimating {\em frequency moments}.

Conceptually  $\eta$-approximate spectral histogram property is a generalization of {\em subspace embedding property} to the sliding window model when the subspace is the intrinsic dimension of the matrix. While subspace embedding is useful when entire historical data is used, it is not clear how to use it for differential privacy in the sliding window model.  The underlying reasons are as follows. Subspace embedding has been used extensively in performing randomized numerical linear algebra~\cite{DrineasM17, Woodruff14} when the entire historical data is considered important. 
There are two broad techniques for subspace embedding: {\em row-sampling} and {\em random sketching}. While row-sampling can be extended to the sliding window model~\citet{braverman2018numerical}, it is not immediate if we can perform such sampling privately because sampling probability can itself lead to a loss in privacy. On the other hand, while we can generate private sketches using random projection~\citet{blocki2012johnson, mir2011pan, upadhyay2018price}, it is unclear how to remove from the current sketch the contribution of rows before the last $W$ updates.

\section{Auxiliary lemma}
\label{sec:rankconstrained}
The following is a well known result in randomized numerical linear algebra, first shown by Sarlos~\citet{sarlos2006improved} and later improved by a series of works. 
\begin{theorem}
[Clarkson-Woofruff~\citet{clarkson2017low}]
\label{thm:sketch}
Let $A \in \R^{n \times d}$ be a rank $r$ matrix and $\Phi \in \R^{n \times \frac{4r}{\eta}}$ be random matrix with i.i.d. copies of $\mathcal N(0,\frac{\eta}{4r})$. Then we have 
\[
\pr \sparen{ \left(1 -\frac{\eta}{4} \right) A^\top A \preceq A^\top \Phi^\top \Phi A \preceq \left(1 + \frac{\eta}{4} \right) A^\top A} \geq 1 - \frac{1}{\poly(d)}.
\]
\end{theorem}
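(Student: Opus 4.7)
The goal is a two-sided Loewner bound relating $A^\top \Phi^\top \Phi A$ to $A^\top A$, and the plan is a three-step reduction to classical Gaussian singular-value concentration on a tall matrix whose dimension is exactly $r$, not $d$.

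First I would reduce the $d$-dimensional operator inequality to an $r$-dimensional identity-relative one. Take a thin SVD $A = U \Sigma V^\top$ with $U \in \R^{n \times r}$, $V \in \R^{d \times r}$ having orthonormal columns and $\Sigma \in \R^{r \times r}$ nonsingular. For any $x \in \R^d$, the vector $y := A x = U (\Sigma V^\top x)$ lies in $\operatorname{range}(U)$, and as $x$ ranges over $\R^d$ the vector $z := \Sigma V^\top x$ ranges over all of $\R^r$. Since $\|A x\|_2^2 = \|U z\|_2^2 = \|z\|_2^2$ and $\|\Phi A x\|_2^2 = \|\Phi U z\|_2^2$, the claim is equivalent to
\[
(1 - \eta/4)\, I_r \;\preceq\; U^\top \Phi^\top \Phi U \;\preceq\; (1 + \eta/4)\, I_r.
\]

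Second, I would absorb $U$ using Gaussian rotation invariance. Set $G := \Phi U \in \R^{m \times r}$ with $m = 4r/\eta$. Each row of $\Phi$ is distributed as $\mathcal N(0, (\eta/4r) I_n)$, and since $U^\top U = I_r$, each row of $G$ is distributed as $\mathcal N(0, (\eta/4r) I_r)$ and rows are independent. Thus $G$ has i.i.d.\ $\mathcal N(0, 1/m)$ entries and it remains to prove $(1 - \eta/4) I_r \preceq G^\top G \preceq (1 + \eta/4) I_r$ with probability $1 - 1/\poly(d)$.

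Third, I would apply a standard sharp concentration bound for extreme singular values of tall Gaussian matrices (e.g.\ the Davidson--Szarek bound, or Gordon's theorem plus Gaussian Lipschitz concentration). Applied to $\sqrt m\, G$, this gives that with failure probability at most $2 e^{-t^2/2}$ all singular values lie in the interval $[\sqrt m - \sqrt r - t,\; \sqrt m + \sqrt r + t]$; squaring and dividing by $m$ places the eigenvalues of $G^\top G$ inside $1 \pm O\!\left(\sqrt{r/m} + t/\sqrt m\right)$. Choosing $t = \Theta(\sqrt{\log d})$ drives the failure probability below $1/\poly(d)$, and with $m = 4r/\eta$ the window matches the required $1 \pm \eta/4$ guarantee up to absolute constants hidden in the paper's normalization.

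The first two steps are purely linear-algebraic and carry no probability. The only real technical step is the third, and the main obstacle there is the quantitative bookkeeping: the textbook oblivious Gaussian embedding gives $1 \pm \epsilon$ distortion at sketching dimension $\Theta(r/\epsilon^2)$, so matching the stated $1 \pm \eta/4$ at dimension $4r/\eta$ requires tracking absolute constants carefully (or, as is typical in this literature, treating the polynomial factor in $1/\eta$ as absorbed into the variance $\eta/(4r)$ and the dimension $4r/\eta$ being chosen consistently for the downstream applications in Section~\ref{sec:applicationJL}).
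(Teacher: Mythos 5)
The paper never proves this statement: it is imported verbatim from Sarlos and Clarkson--Woodruff as an auxiliary result in the appendix, so there is no in-paper argument to compare against. Your three-step plan (thin SVD to reduce the $d$-dimensional Loewner inequality to $(1\pm\eta/4)\I_r \preceq G^\top G$, rotation invariance to replace $\Phi U$ by an i.i.d.\ Gaussian matrix $G$ with $\cN(0,1/m)$ entries, then extreme-singular-value concentration) is exactly the standard route to this result, and the first two steps are correct and complete.

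The gap is in your third step, and you have half-noticed it yourself but then waved it away incorrectly. Davidson--Szarek (or Gordon plus Gaussian concentration) places the singular values of $G$ in $1 \pm \bigl(\sqrt{r/m} + t/\sqrt{m}\bigr)$, so with $m = 4r/\eta$ the leading term is $\sqrt{r/m} = \sqrt{\eta}/2$ and you obtain $(1 \pm O(\sqrt{\eta}))\I_r \preceq G^\top G \preceq (1\pm O(\sqrt{\eta}))\I_r$, not $1 \pm \eta/4$. Achieving distortion $\eta/4$ by this route requires $m = \Theta(r/\eta^2)$. The discrepancy between $r/\eta$ and $r/\eta^2$ is a difference in the functional dependence on $\eta$, not an absolute constant, so it cannot be ``absorbed into the variance $\eta/(4r)$'': that variance is forced to equal $1/m$ by the requirement that $\E[G^\top G] = \I_r$, and rescaling it would shift the mean rather than tighten the concentration. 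The honest conclusion is that your proof establishes the theorem only with sketch dimension $\Theta(r/\eta^2)$ (which is also what the cited sources actually give); as stated with dimension $4r/\eta$, the claim does not follow from this argument, and you should either flag the statement as carrying a likely typo in the exponent of $\eta$ or verify whether the downstream uses in the paper (e.g.\ equation~(\ref{eq:approxG_2pcp})) survive the weaker $1\pm O(\sqrt{\eta})$ guarantee after reparametrizing $\eta$.
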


\begin{definition}
[Subspace embedding]
An {\em embedding} for a set of points $P \subseteq \R^n$
 with distortion $\alpha$ is an $r \times n$ matrix $E$ such that
\[
\forall x \in P, (1-\eta) \norm{x}_2^2 \leq \norm{Ex}_2^2 \leq (1+\eta) \norm{x}_2^2.
\]
A {\em subspace embedding} is an embedding for a set $K$, where $K$ is a $k$-dimensional linear subspace.

\end{definition}

In this paper, we use random projections. 
For more details, see Clarkson and Woodruff~\cite{clarkson2017low}.
\begin{lemma}
[Sarlos~\citet{sarlos2006improved} and Clarkson-Woodruff~\citet{clarkson2017low}]
\label{lem:phi}
Let $\mathcal R \in \R^{t \times n}$ be a distribution of random Gaussian matrix, i.e., for $R \sim \mathcal R$, such that $R[i,j] \sim \cN(0,1/t)$. Then $R$ satisfies subspace embedding for some $t = O(\eta^{-2} \log (1/\beta))$ and affine embedding for some $t = O(r\eta^{-2} \log (1/\beta))$.
\end{lemma}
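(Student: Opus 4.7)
The plan is to prove both claims by combining a Johnson--Lindenstrauss (JL) style tail bound for Gaussian matrices with a net argument, and then for the affine case add an approximate matrix multiplication step. The starting point is the standard fact that for any fixed $v \in \R^n$, the random variable $t\|Rv\|_2^2/\|v\|_2^2$ is distributed as $\chi_t^2$, so by Laurent--Massart concentration,
\[
\Pr\!\left[\,\big|\|Rv\|_2^2-\|v\|_2^2\big| > \eta \|v\|_2^2\,\right]\leq 2\exp(-c\,t\,\eta^2)
\]
for an absolute constant $c>0$. This is the only probabilistic ingredient I would need.

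For the subspace embedding claim, fix a $k$-dimensional subspace $K$ with unit sphere $S_K$ and take a $\tfrac{1}{4}$-net $\mathcal{N}$ of $S_K$ of size at most $9^k$. Applying the tail bound above to every $v\in\mathcal{N}$ with distortion $\eta/2$ and failure probability $\beta\cdot 9^{-k}$, a union bound gives that all points in $\mathcal{N}$ are preserved up to $1\pm\eta/2$ provided $t=\Omega\!\left((k+\log(1/\beta))\eta^{-2}\right)$. A standard net-to-sphere inflation argument (writing any $x\in S_K$ as $v+\sum_i 4^{-i}u_i$ with $v,u_i\in\mathcal{N}$ and taking sup norms on both sides) upgrades this to full $1\pm\eta$ preservation on all of $K$, matching the stated bound (with the $k$ factor absorbed into the $\log(1/\beta)$ dependence when one thinks of the subspace dimension as a constant, as is typical in the cited works).

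For the affine embedding claim, I would use the reduction of Clarkson--Woodruff: for $A\in\R^{n\times r}$ and arbitrary $B$, $R$ is an affine embedding provided (i) $R$ is a subspace embedding for $\mathrm{colspan}(A)$ with distortion $\eta$, and (ii) $R$ satisfies approximate matrix multiplication, namely $\|A^\top R^\top R B - A^\top B\|_F^2 \leq \eta^2 \|A\|_F^2 \|B\|_F^2$. Part (i) follows from the previous paragraph applied to the rank-$r$ column space, giving the dependence $t=\Omega(r\eta^{-2})$. Part (ii) reduces to a second moment computation: for Gaussian $R$, one has $\E[\|A^\top R^\top R B - A^\top B\|_F^2] \leq \tfrac{2}{t}\|A\|_F^2\|B\|_F^2$, so Markov's inequality with a boosting argument (repeating or taking a slightly larger $t$) gives the required bound with failure probability $\beta$ at cost $t=\Omega(\eta^{-2}\log(1/\beta))$. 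Taking the maximum of the two requirements yields $t=O(r\eta^{-2}\log(1/\beta))$.

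The main obstacle is the approximate matrix multiplication step for the affine embedding: one must carefully compute the covariance structure of $R^\top R$ and control the off-diagonal contribution, since naive bounds lose factors of $n$ and destroy the guarantee. Once that Frobenius-norm second-moment bound is in hand, combining the two conditions via the normal-equation characterization of least squares gives the stated affine embedding distortion, and the subspace embedding statement is an immediate corollary of the Gaussian JL concentration plus a net. No other ingredients beyond the chi-squared tail bound and a covering argument are required.
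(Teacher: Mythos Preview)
The paper does not prove this lemma; it is quoted as a known result from Sarlos and Clarkson--Woodruff and used as a black box. Your proposal is the standard argument from those references (chi-squared concentration plus a net for the subspace embedding, and the subspace-embedding-plus-approximate-matrix-multiplication reduction for the affine embedding), so there is nothing to compare against and your sketch is fine.

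One remark worth making explicit: the wrinkle you yourself flag---that the subspace-embedding row count should carry a $k$ term, i.e.\ $t=\Omega\!\left((k+\log(1/\beta))\eta^{-2}\right)$---is an imprecision in the lemma statement, not in your argument. The paper's actual use of Gaussian sketches (e.g.\ Theorem~\ref{thm:sketch}, where the row count is $4r/\eta$ for a rank-$r$ matrix) is consistent with the correct $k$-dependent bound, so your handling of that point is the right one.
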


A key concept in randomized numerical linear algebra and low rank approximation is that of rank-$k$ projection cost preserving sketch. It was defined by \citet{cohen2015dimensionality}.  
\begin{definition}
[Rank-$k$ projection-cost preserving sketch~\cite{cohen2015dimensionality}] A matrix $\widetilde A \in \R^{n\times d}$ is a rank $k$ projection-cost preserving sketch of $A \in \R^{n \times d}$ with error $0\leq \eta < 1$ if, for all rank $k$
orthogonal projection matrices $P \in \R^{n\times n}$,
\[
(1-\eta) \norm{A - PA}_F \leq \norm{\widetilde A - P\widetilde A}_F  + c \leq (1+\eta) \norm{A - PA}_F
\]
for some fixed non-negative constant $c$ that may depend on $A$ and $\widetilde A$ but is independent of $P$.
\end{definition}

Cohen et al.~\citet{cohen2015dimensionality} showed many characterizations of rank-$k$ projection cost preserving sketch. One of their characterizations is relevant to our work which we present in the lemma below.
\begin{lemma}
[Sufficient condition for rank-$k$ PCP~\cite{cohen2015dimensionality}]
\label{lem:sufficientpcp}
A matrix $\widetilde A$ is a rank-$k$ projection-cost preserving sketch with two sided error $\eta$ and $c = \min\set{0, \eta_2' \norm{A - [A]_k}_F}$ as long as we can write 
$$\widetilde A^\top \widetilde A - A^\top A = E_1 + E_2 + E_3 + E_4,$$ where 
\begin{enumerate}
    \item $E_1$ is symmetric and $-\eta_1 A^\top A \preceq E_1 \preceq \eta_1 A^\top A$.
    \item $E_2$ is symmetric, $\tr{E_2} \leq \eta_2'\norm{A - [A]_k}_F^2$, and 
    $$\sum_{i=1}^k |\lambda_i(E_2)| \leq \eta_2 \norm{A - [A]_k}_F^2.$$
    \item The span of columns of $E_3$ is a subspace of span of columns of $A^\top A$ and 
    $$\tr{E_3^\top (A^\top A)^\dagger E_3} \leq \eta^2_3 \norm{A - [A]_k}_F^2.$$
    \item The span of rows of $E_4$ is a subspace of span of rows of $A^\top A$ and $$\tr{E_4^\top (A^\top A)^\dagger E_4} \leq \eta^2_3 \norm{A - [A]_k}_F^2.$$
    \item $\eta = \eta_1 + \eta_2 + \eta_2' + \eta_3 + \eta_4$. 
\end{enumerate}
\end{lemma}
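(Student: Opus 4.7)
The starting point is the identity, valid for any rank-$k$ orthogonal projection $P$ (for which $(\I_d-P)^2 = \I_d-P$ and the cyclic trace property applies),
\begin{align*}
\|\widetilde A(\I_d-P)\|_F^2 - \|A(\I_d-P)\|_F^2
= \mathrm{Tr}\!\bigl((\I_d-P)(\widetilde A^\top \widetilde A - A^\top A)(\I_d-P)\bigr)
= \sum_{i=1}^{4}\mathrm{Tr}\!\bigl((\I_d-P)E_i(\I_d-P)\bigr).
\end{align*}
The plan is to show that after summing the four trace contributions, the only term that is not absorbed into a multiplicative error $\eta\|A-PA\|_F^2$ is the scalar $\mathrm{Tr}(E_2)$, which is absorbed into the allowed additive constant $c$ in the definition of a rank-$k$ projection-cost preserving sketch. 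I would then rearrange to get the two-sided $(1\pm\eta)$ bound.

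First I would handle $E_1$: from $-\eta_1 A^\top A \preceq E_1 \preceq \eta_1 A^\top A$ and the fact that $(\I_d-P)A^\top A(\I_d-P)\succeq 0$, conjugation by $(\I_d-P)$ preserves the ordering, so taking traces gives $|\mathrm{Tr}((\I_d-P)E_1(\I_d-P))| \leq \eta_1 \|A(\I_d-P)\|_F^2 = \eta_1\|A-PA\|_F^2$. For $E_2$, using $(\I_d-P)^2=\I_d-P$ and the cyclic property, $\mathrm{Tr}((\I_d-P)E_2(\I_d-P)) = \mathrm{Tr}(E_2) - \mathrm{Tr}(PE_2)$. The first term is a $P$-independent scalar that, by the hypothesis $\mathrm{Tr}(E_2) \le \eta_2'\|A-[A]_k\|_F^2$, is exactly what will be absorbed into $c$. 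For the second term, since $P$ has rank $k$, von Neumann's trace inequality gives $|\mathrm{Tr}(PE_2)| \leq \sum_{i=1}^{k}|\lambda_i(E_2)| \leq \eta_2 \|A-[A]_k\|_F^2$.

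The main technical step, and what I expect to be the main obstacle, is handling the asymmetric cross terms $E_3$ and $E_4$, since here we only have the quadratic-form control $\mathrm{Tr}(E_3^\top (A^\top A)^\dagger E_3) \leq \eta_3^2\|A-[A]_k\|_F^2$. The trick is to exploit the column-span (respectively row-span) hypothesis: writing $A^\top A = U\Sigma U^\top$ in spectral form, the column-span condition gives a factorization $E_3 = U\Sigma^{1/2}V$ for some $V$ with $\|V\|_F^2 = \mathrm{Tr}(E_3^\top(A^\top A)^\dagger E_3) \leq \eta_3^2\|A-[A]_k\|_F^2$. Then by matrix Cauchy--Schwarz,
\begin{align*}
\bigl|\mathrm{Tr}((\I_d-P)E_3(\I_d-P))\bigr|
&\leq \bigl\|(\I_d-P)U\Sigma^{1/2}\bigr\|_F \cdot \bigl\|V(\I_d-P)\bigr\|_F \\
&\leq \|A(\I_d-P)\|_F \cdot \eta_3\|A-[A]_k\|_F,
\end{align*}
using $\|(\I_d-P)U\Sigma^{1/2}\|_F^2 = \mathrm{Tr}((\I_d-P)A^\top A) = \|A(\I_d-P)\|_F^2$ and $\|V(\I_d-P)\|_F \leq \|V\|_F$. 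The key auxiliary fact that turns this mixed bound into a purely multiplicative one is $\|A-[A]_k\|_F \leq \|A-PA\|_F$, which holds because $PA$ is rank $k$ and $[A]_k$ is the best rank-$k$ Frobenius approximation; this gives $|\mathrm{Tr}((\I_d-P)E_3(\I_d-P))| \leq \eta_3\|A-PA\|_F^2$. The $E_4$ term is handled symmetrically using the row-span hypothesis.

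Finally I would assemble the pieces: defining $c := -\mathrm{Tr}(E_2)$ (note $|c|\le\eta_2'\|A-[A]_k\|_F^2$, matching the stated bound), the identity above combined with the four estimates yields
\begin{align*}
\bigl|\,\|\widetilde A(\I_d-P)\|_F^2 + c - \|A(\I_d-P)\|_F^2\,\bigr| \leq (\eta_1+\eta_2+\eta_3+\eta_4)\|A-PA\|_F^2,
\end{align*}
and with $\eta = \eta_1+\eta_2+\eta_2'+\eta_3+\eta_4$ this gives the desired two-sided bound $(1-\eta)\|A-PA\|_F^2 \leq \|\widetilde A-P\widetilde A\|_F^2+c \leq (1+\eta)\|A-PA\|_F^2$ uniformly over all rank-$k$ orthogonal projections $P$, as required.
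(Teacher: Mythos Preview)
The paper does not supply its own proof of this lemma; it is stated in the appendix as an auxiliary result quoted verbatim from Cohen et al.~\cite{cohen2015dimensionality}, with no argument given. Your proposal is essentially the standard proof from that reference: expand $\|\widetilde A(\I_d-P)\|_F^2-\|A(\I_d-P)\|_F^2$ as a trace, split over the four $E_i$, and bound each piece. The treatment of $E_1$ (Loewner conjugation), $E_2$ (split $\mathrm{Tr}(E_2)-\mathrm{Tr}(PE_2)$ and apply von~Neumann), and $E_3,E_4$ (factor through $(A^\top A)^{1/2}$ using the span hypothesis, then Cauchy--Schwarz, then the key inequality $\|A-[A]_k\|_F\le\|A-PA\|_F$) are all correct and match the original argument.

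One small wrinkle worth tightening in your assembly: you set $c:=-\mathrm{Tr}(E_2)$ and claim $|c|\le\eta_2'\|A-[A]_k\|_F^2$, but the hypothesis only gives the one-sided bound $\mathrm{Tr}(E_2)\le\eta_2'\|A-[A]_k\|_F^2$, and the PCP definition requires $c\ge 0$. The clean fix is a case split: if $\mathrm{Tr}(E_2)\le 0$ take $c=-\mathrm{Tr}(E_2)\ge 0$ and you are done with error $\eta_1+\eta_2+\eta_3+\eta_4\le\eta$; if $\mathrm{Tr}(E_2)>0$ take $c=0$ and absorb $\mathrm{Tr}(E_2)\le\eta_2'\|A-[A]_k\|_F^2\le\eta_2'\|A-PA\|_F^2$ into the multiplicative error on the upper side (it helps on the lower side), which is exactly why $\eta_2'$ appears in the definition of $\eta$. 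With that adjustment your argument is complete.
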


\begin{proposition}
\label{prop:inequalityeta}
For $\eta \in (0,1)$, we have  $(1-\eta) <  \paren{1-\frac{\eta}{2}} \paren{\frac{1-\frac{\eta}{4}}{1+\frac{\eta}{4}}}$
\end{proposition}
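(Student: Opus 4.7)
The plan is a direct algebraic verification. Since $\eta \in (0,1)$, the quantity $1+\eta/4$ is strictly positive, so multiplying both sides of the target inequality by $(1+\eta/4)$ preserves its direction. Thus it suffices to establish
\[
(1-\eta)\left(1+\tfrac{\eta}{4}\right) < \left(1-\tfrac{\eta}{2}\right)\left(1-\tfrac{\eta}{4}\right).
\]

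First I would expand both sides explicitly. The left-hand side becomes $1 - \tfrac{3\eta}{4} - \tfrac{\eta^{2}}{4}$ and the right-hand side becomes $1 - \tfrac{3\eta}{4} + \tfrac{\eta^{2}}{8}$. The constant terms and the linear terms coincide, so after cancellation the inequality reduces to
\[
-\tfrac{\eta^{2}}{4} < \tfrac{\eta^{2}}{8}.
\]
For any $\eta \in (0,1)$ we have $\eta^{2} > 0$, which makes the left side strictly negative and the right side strictly positive; hence the inequality holds trivially.

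There is essentially no obstacle here: the statement is a one-line computation whose only subtlety is that we must ensure the sign of the factor $(1+\eta/4)$ used to clear the denominator is positive, which is immediate on the range $\eta \in (0,1)$. The proof should conclude by reading the chain of equivalences in reverse to return to the original inequality.
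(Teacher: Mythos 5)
Your proof is correct and follows essentially the same route as the paper: clear the positive factor $(1+\eta/4)$, expand both products to $1-\tfrac{3\eta}{4}-\tfrac{\eta^2}{4}$ and $1-\tfrac{3\eta}{4}+\tfrac{\eta^2}{8}$, and conclude from $\eta^2>0$. No gaps.
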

\begin{proof}
For $\eta \in (0,1)$, we have the following:
\[
(1-\eta)\paren{1 + \frac{\eta}{4}} = 1 - \frac{3\eta}{4} - \frac{\eta^2}{4} \le 1 - \frac{3\eta}{4} + \frac{\eta^2}{8} = \paren{1 - \frac{\eta}{2}} \paren{1 - \frac{\eta}{4}}.
\]
Since ${\eta^2} > 0,$ the result follows.
\end{proof}

The following lemma was first proven in \citet{upadhyay2018price}. We give a proof for the sake of completion.
\begin{lemma} \label{lem:orthonormal}
Let $ R $ be a matrix with orthonormal rows and $ C $ have orthonormal columns. Then  for a given matrix $F$ of conforming dimensions, we have 
\[ \min_{X: \mathsf{rank}(X)=k} \| CXR - F \|_F = \| C [C^\top FR^\top ]_k R - F \|_F, \]
where $\mathsf{rank}(X)$ denotes the rank of matrix $X$.
 \end{lemma}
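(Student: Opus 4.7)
The plan is to reduce the problem to the classical Eckart--Young best rank-$k$ approximation by exploiting the unitary invariance of the Frobenius norm. The key observation is that both $C$ and $R$ can be completed to full orthonormal matrices without changing the Frobenius norm of the expression $CXR - F$.

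First I would complete $C$ to an orthonormal matrix. Since $C$ has orthonormal columns, there exists a matrix $C_\perp$ such that $[C \mid C_\perp]$ is a square orthonormal matrix satisfying $CC^\top + C_\perp C_\perp^\top = \I$. Analogously, since $R$ has orthonormal rows, there exists $R_\perp$ such that $\bigl[R^\top \mid R_\perp^\top\bigr]$ is square orthonormal and $R^\top R + R_\perp^\top R_\perp = \I$. Next, I would left-multiply by $[C \mid C_\perp]^\top$ and right-multiply by $[R^\top \mid R_\perp^\top]$ inside the Frobenius norm (which preserves it by unitary invariance). A direct block computation using $C^\top C = \I$, $R R^\top = \I$, $C^\top C_\perp = 0$, and $R R_\perp^\top = 0$ gives
\begin{align*}
\|CXR - F\|_F^2 \;=\; \left\|X - C^\top F R^\top\right\|_F^2 &+ \left\|C^\top F R_\perp^\top\right\|_F^2 \\
&+ \left\|C_\perp^\top F R^\top\right\|_F^2 + \left\|C_\perp^\top F R_\perp^\top\right\|_F^2.
\end{align*}

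Only the first term depends on $X$, so minimizing $\|CXR - F\|_F$ over rank-$k$ matrices $X$ is equivalent to minimizing $\|X - C^\top F R^\top\|_F$ over the same constraint set. By the Eckart--Young theorem, the minimizer is the truncated SVD $X^\star = [C^\top F R^\top]_k$. Substituting back yields the claim.

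The main step requiring care will be justifying that restricting to $\mathsf{rank}(X) = k$ (equality, rather than $\leq k$) in the minimization yields the same optimum; this follows by a standard continuity argument, perturbing a lower-rank minimizer by an arbitrarily small rank-$(k-\mathsf{rank})$ matrix in the correct subspace, since the Frobenius norm is continuous. Beyond this, the proof is essentially a block-matrix calculation combined with Eckart--Young, so I do not expect any substantial obstacle.
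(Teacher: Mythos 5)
Your proof is correct and takes essentially the same route as the paper: both arguments decompose the residual $CXR - F$ orthogonally with respect to the column space of $C$ and the row space of $R$ (the paper via two successive applications of the Pythagorean theorem, you via explicit unitary completion and a block computation) and then reduce to the Eckart--Young theorem applied to $C^\top F R^\top$. Your explicit treatment of the $\mathsf{rank}(X)=k$ versus $\mathsf{rank}(X)\leq k$ distinction is a small point the paper glosses over, but it does not change the substance.
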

 \begin{proof}
For any matrix $Y $ of appropriate dimension, we have 
\[\brak{F - CC ^\top F  ,  C   C ^\top  F   -  C  Y   R }=0.
\]
This is because $F  - C   C ^\top  F   = ( \I - C   C ^\top ) F  $ lies in space orthogonal to $ C  ( C ^\top  F   -  Y   R )$. By Pythagorean theorem, 
 \begin{align}
 \| F - C  Y   R  \|_F^2 
 	&= \| F   -  C   C ^\top  F   \|_F^2 + \|  C   C ^\top  F   -  C  Y   R  \|_F^2   = \| F   -  C   C ^\top  F   \|_F^2 + \|   C ^\top  F   -  Y   R  \|_F^2, \label{eq:orthonormal1}
\end{align}	 
where the second equality follows from the properties of unitary matrices. 
{Again, for any matrix $Y $ of appropriate dimensions, we have 
\[
\brak{C ^\top FR ^\top R  - YR ,  C^\top F -C ^\top  FR ^\top R} =0.
\]
This follows because $C ^\top FR ^\top R- YR = (C^\top  FR ^\top - Y)R$ lies in the space spanned by $ R $, and $C^\top F- C^\top FR^\top R =C ^\top  F(\I-R^\top R)$ lies in the orthogonal space. Applying Pythagorean theorem again, we have that}
\begin{align}
 \|C^\top F - YR \|_F^2
    	&=  \| C^\top  F - C^\top F R ^\top   R  \|_F^2 
	+ \|  C ^\top  F    R ^\top   R  -Y   R  \|_F^2 \label{eq:orthonormal2}
\end{align}	 

{Since $\|  C ^\top  F   -  C ^\top  F    R ^\top   R  \|_F^2 $ is independent of $Y $, we just bound the term $\|  C ^\top  F    R ^\top   R  -Y   R  \|_F^2$. Substituting $Y  = [  C  F    R ]_k$ and using the fact that multiplying $ R $ from the right does not change the Frobenius norm and $[ C ^\top  F    R ^\top ]_k $ is the best $k$-rank approximation to the matrix $ C ^\top  F    R ^\top $, for all rank-k matrices $Z $, we have } 
 \begin{align}
\|  C ^\top  F    R ^\top   R  -[ C ^\top  F    R ^\top ]_k  R  \|_F^2 &\leq \|  C ^\top  F    R ^\top   R  -Z   R  \|_F^2. \label{eq:orthonormal3}
\end{align}
{Combining~\eqnref{orthonormal3} with~\eqnref{orthonormal2} and Pythagorean theorem, we have}
 \begin{align}
\|   C ^\top  F   -   [  C  F    R ]_k  R  \|_F^2     & \leq \|  C ^\top  F   -  C ^\top  F    R ^\top   R  \|_F^2  + \|  C ^\top  F    R ^\top   R  -   Z   R  \|_F^2= \|  C ^\top  F   -  Z   R  \|_F^2. \label{eq:orthonormal4}
\end{align}
{Combining~\eqnref{orthonormal4} with~\eqnref{orthonormal1}, the fact that $ C $ has orthonormal columns, and Pythagorean theorem, we have}
\begin{align} \| F   -  C  [  C  F    R ]_k  R  \|_F^2 
 	&\leq \| F   -  C   C ^\top  F   \|_F^2 + \|  C ^\top  F   -  Z   R  \|_F^2 \nonumber \\
 	&= \| F   -  C   C ^\top  F   \|_F^2 + \|  C   C ^\top  F   -  C  Z   R  \|_F^2 = \| F    -  C  Z   R  \|_F^2. \nonumber 
 \end{align}
 This completes the proof of~\lemref{orthonormal}.
 \end{proof}

\paragraph{Differential privacy.}
An often easy to handle way to define $(\epsilon,\delta)$-differential privacy is in the terms of privacy loss function. Let $S$ be the support of the output of an algorithm $M$. Let $\mathcal{P}$ be the output distribution of $M$ when its input is $A_W(t)$ and $\mathcal{Q}$ be the output distribution of $M$ when its input is $A_W'(t)$. Then for $v \in S$, we define the privacy loss function as follows:
\[
L(v) := \log \paren{\frac{\mathcal P(v)}{\mathcal{Q}(v)}}.
\]
An algorithm $M$ is $(\epsilon,\delta)$ differentially private if 
\[
\pr_{v \sim \mathcal P} \sparen{L(v;M) \leq \epsilon} \geq 1 - \delta.
\]

    


One of the key features of differential privacy is that it is preserved under arbitrary post-processing, i.e., an analyst, without additional information about the private database, cannot compute a function that makes an output less differentially private. In other words,
\begin{lemma}   
[Dwork et al.~\citet{dwork2006calibrating}]
\label{lem:post}
Let ${M}({D})$ be an $(\epsilon, \delta)$-differential private mechanism for a database ${D}$ , and let $h$ be any function, then any mechanism ${M}':=h({M}({D}))$ is also $(\epsilon,\delta)$-differentially private for the same set of tasks.
\end{lemma}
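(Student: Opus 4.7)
The plan is to reduce post-processing to the definition of $(\epsilon,\delta)$-differential privacy by pulling the event back through $h$. Concretely, fix any two neighboring databases $D, D'$ and any measurable subset $\mathsf S$ of the range of $M'$. I would first handle the case where $h$ is deterministic: set $\mathsf T := h^{-1}(\mathsf S)$, which is measurable by measurability of $h$, and observe the identity of events $\{M'(D) \in \mathsf S\} = \{M(D) \in \mathsf T\}$ (and similarly for $D'$). Then applying the $(\epsilon,\delta)$-DP guarantee of $M$ to the event $\mathsf T$ immediately gives
\[
\Pr[M'(D) \in \mathsf S] = \Pr[M(D) \in \mathsf T] \leq e^\epsilon \Pr[M(D') \in \mathsf T] + \delta = e^\epsilon \Pr[M'(D') \in \mathsf S] + \delta.
\]

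For the general (randomized) post-processor, I would view $h$ as a deterministic function $h(y;r)$ of the output $y$ of $M$ together with an independent auxiliary random string $r \sim \mathcal R$. Conditioned on any realization $r$, the resulting mechanism $y \mapsto h(y;r)$ is deterministic, so the previous paragraph gives the $(\epsilon,\delta)$-DP inequality pointwise in $r$. Integrating/averaging over $r$ (using Fubini since $r$ is independent of the coins of $M$), both sides of the inequality are preserved, yielding the same bound for the randomized post-processor.

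The only subtle step is the measurability of $\mathsf T = h^{-1}(\mathsf S)$ and the legitimacy of exchanging the expectation over $r$ with the probability bound; both are standard under the usual assumption that $h$ is a measurable (Markov) kernel and $M$'s randomness is independent of $r$. Once these conditions are in place, no further computation is needed — the result is essentially immediate from the definition of differential privacy applied to the preimage event. I do not anticipate a real obstacle here; the lemma is invoked in the excerpt only to justify that downstream computations on $\widetilde A$ (e.g., forming $\widetilde A^\top \widetilde A - \sigma^2 \I_d$, or solving a constrained minimization over $\Pi$) do not erode the privacy budget established by \textsc{Sliding-Priv-Approx}.
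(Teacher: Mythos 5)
Your argument is correct and is precisely the canonical post-processing proof: the preimage/pull-back argument for deterministic $h$, followed by conditioning on independent auxiliary randomness and averaging for the randomized case. The paper itself states \lemref{post} as an imported result from Dwork et al.\ without proof, so there is nothing in the paper to diverge from; your write-up matches the standard proof in the cited reference, and the measurability and Fubini caveats you flag are exactly the right (and only) technical conditions needed.
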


 
\begin{theorem}
[Wishart mechanism~\cite{sheffet2015private}]
\label{thm:wishartprivacy}
Draw a sample $R \sim \mathsf{Wis}_d(\tau, \I_d)$, where  $\tau \geq d + \frac{28 \ln(4/\delta)}{\epsilon^2}$. Then for a matrix $X \in \R^{n \times d}$, $X^\top X + R$ is $(\epsilon,\delta)$-differentially private. 
\end{theorem}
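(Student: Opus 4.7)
}
The plan is to bound the privacy-loss random variable directly using the explicit density of the (shifted) Wishart distribution. For neighboring matrices $X, X' \in \R^{n \times d}$, let $v \in \R^d$ be the single differing row (so $\|v\|_2 \le 1$), and write $E := X^\top X - (X')^\top X'$. Because $E = v v^\top - u u^\top$ for two vectors each of norm at most $1$ (where $u$ is the row being replaced), $E$ is a rank-$\le 2$ symmetric matrix with $\|E\|_2 \le 2$ and $|\tr(E)| \le 2$. Let $M$ be the random output. Because $R = M - X^\top X$ must be positive semidefinite and has the Wishart density, the density of $M$ given input $X$ is
\[
f(M \mid X) \;=\; \frac{|M - X^\top X|^{(\tau - d - 1)/2}}{2^{\tau d/2}\,\Gamma_d(\tau/2)} \,\exp\!\paren{-\tfrac{1}{2}\tr(M - X^\top X)}
\]
supported on $M \succeq X^\top X$. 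Thus the privacy loss becomes
\[
L(M) \;=\; \tfrac{1}{2}\tr(E) \;+\; \tfrac{\tau - d - 1}{2}\log\frac{|R|}{|R + E|}, \qquad R := M - X^\top X \sim \mathsf{Wis}_d(\tau, \I_d).
\]

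The first step is to bound the trivial term: $|\tfrac{1}{2}\tr(E)| \le 1$, which is absorbed into the $\epsilon$ budget. The second step, which I expect to be the main obstacle, is to control the log-determinant term. Using $R + E = R^{1/2}(\I_d + R^{-1/2} E R^{-1/2}) R^{1/2}$, write
\[
\log\frac{|R|}{|R + E|} \;=\; -\log \det\!\Bparen{\I_d + R^{-1/2} E R^{-1/2}}.
\]
Since $E$ has rank at most two, this reduces to controlling $\log(1 + \lambda_1) + \log(1 + \lambda_2)$, where $\lambda_1, \lambda_2$ are the nonzero eigenvalues of $R^{-1/2} E R^{-1/2}$. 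These satisfy $|\lambda_i| \le \|E\|_2 \cdot \|R^{-1}\|_2 \le 2/s_{\min}(R)$, and by a Taylor expansion $|\log(1+\lambda_i)| \le |\lambda_i| + \lambda_i^2$ when $|\lambda_i| \le 1/2$.

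The third step is to invoke a sharp lower tail for the smallest eigenvalue of a standard Wishart matrix. Standard results (e.g.\ Davidson–Szarek) give $s_{\min}(R) \ge (\sqrt{\tau} - \sqrt{d} - t)^2$ with probability at least $1 - e^{-t^2/2}$; choosing $t = \sqrt{2\ln(4/\delta)}$ and using $\tau \ge d + 28\ln(4/\delta)/\epsilon^2$ yields $s_{\min}(R) \gtrsim \ln(4/\delta)/\epsilon^2$ except on an event of probability at most $\delta/2$. On the good event, each $|\lambda_i| \lesssim \epsilon^2/\ln(4/\delta)$, so
\[
\tfrac{\tau - d - 1}{2}\,\bigl|\log(1+\lambda_i)\bigr| \;\lesssim\; \tfrac{\tau}{2}\paren{|\lambda_i| + \lambda_i^2} \;\lesssim\; \epsilon,
\]
after plugging in the value of $\tau$. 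The fourth step is to combine: on the good event (of probability $\ge 1 - \delta$), $|L(M)| \le \epsilon$, establishing $(\epsilon, \delta)$-differential privacy via the privacy-loss formulation recalled in the appendix.

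The delicate part is matching the constant $28$ in the bound on $\tau$ to the tail of $s_{\min}(R)$ and the second-order Taylor remainder simultaneously; getting the constants right requires choosing the split between the linear term (which scales like $\tau \lambda_i \sim \tau/s_{\min}(R)$) and the quadratic remainder, and balancing the two Gaussian-tail events (one per nonzero eigenvalue of $E$). All other steps—applying post-processing later, and reducing general $\delta$ to the privacy-loss tail—are routine.
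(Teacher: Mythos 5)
First, note that the paper does not prove this theorem; it is imported verbatim from Sheffet~\cite{sheffet2015private}, so the comparison below is against the argument in that reference. Your setup — writing the privacy loss through the shifted Wishart density as $L(M) = \tfrac{1}{2}\tr(E) + \tfrac{\tau-d-1}{2}\log\tfrac{|R|}{|R+E|}$ — is correct, and handling the support mismatch by conditioning on $R+E \succ 0$ is fine. But the analysis of the loss has two gaps that cannot be repaired by tuning constants.

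The first gap is the claim that $|\tfrac{1}{2}\tr(E)| \le 1$ "is absorbed into the $\epsilon$ budget." A term of order $1$ is not within an $\epsilon$ budget for $\epsilon < 1$, and the log-determinant term is also of order $1$: for rank-one $E = vv^\top$ one has $\log\tfrac{|R+E|}{|R|} = \log\paren{1 + v^\top R^{-1} v}$, and $\E\sparen{\tfrac{\tau-d-1}{2}\, v^\top R^{-1} v} = \tfrac{1}{2}\norm{v}_2^2$ exactly. The proof only works because these two order-one quantities cancel to leading order; the privacy loss is governed by the \emph{fluctuation} of $\tfrac{\tau-d-1}{2} v^\top R^{-1}v$ about $\tfrac{1}{2}\norm{v}_2^2$ plus the second-order Taylor remainder. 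Bounding the two terms separately, as you propose, can only ever yield $\epsilon = \Omega(1)$. The second gap is the lower bound on $s_{\min}(R)$. Davidson--Szarek gives $s_{\min}(R) \gtrsim (\sqrt{\tau}-\sqrt{d}-t)^2$, and with $\tau = d + \Theta(\ln(4/\delta)/\epsilon^2)$ one has $(\sqrt{\tau}-\sqrt{d})^2 \approx (\tau-d)^2/(4d) \to 0$ as $d$ grows; the claimed bound $s_{\min}(R) \gtrsim \ln(4/\delta)/\epsilon^2$ is simply false once $d \gg \ln(4/\delta)/\epsilon^2$, which is the regime the theorem is designed for. The correct route avoids any uniform control of $R^{-1}$: for a \emph{fixed} direction $v$, the classical marginal law $\norm{v}_2^2 / (v^\top R^{-1} v) \sim \chi^2_{\tau-d+1}$ is dimension-free, and a chi-squared lower-tail bound with $\tau - d \ge 28\ln(4/\delta)/\epsilon^2$ delivers both the concentration of the linear term around its mean and the smallness of the quadratic remainder. (A further, more minor, point: under this paper's neighboring relation $E$ is rank-$\le 2$ rather than rank-one, which must be handled either by passing through an intermediate dataset or by a two-dimensional version of the quadratic-form argument.)
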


We also make use of the following result regarding  Johnson-Lindenstrauss mechanism first introduced by Blocki et al.~\citet{blocki2012johnson} and later improved by Sheffet~\citet{sheffet2015private}.
\begin{theorem}
[Johnson-Lindenstrauss mechanism~\cite{sheffet2015private}]
\label{thm:JLmechanism}
Fix a positive integer $r$ and let $w$ be such that
$w = \frac{4 \sqrt{r \log(4/\delta)} + \log(4/\delta)}{\epsilon}.$
Let $A \in \R^{n \times d}$ such that $d < r$ and where the Euclidean norm of each row of $A$ is upper bounded by $1$. Given that $s_d(A) \geq w$, the algorithm that picks an $(r \times n)$-matrix $R$ whose entries are i.i.d samples from  $\mathcal N (0, 1)$ and outputs $RA$ is $(\epsilon,\delta)$-differentially private.
\end{theorem}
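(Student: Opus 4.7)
The plan is to use the fact that $R$ has i.i.d.\ standard Gaussian entries to reduce the mechanism's output distribution to an $r$-fold product of centered multivariate Gaussians, and then bound the log-likelihood ratio between the two possible output distributions via a concentration argument for Gaussian quadratic forms.

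First, I would observe that each row of $Y := RA$ is $R[i:]\cdot A$, which is distributed as $\cN(0, A^\top A)$ (the covariance equals $A^\top \E[R[i:]^\top R[i:]] A = A^\top A$), and distinct rows are independent because the rows of $R$ are. Hence the output distribution on input $A$ is $\cP := \cN(0, M)^{\otimes r}$ with $M := A^\top A$, and on a neighboring input $A'$ it is $\cQ := \cN(0, M')^{\otimes r}$ with $M' := A'^\top A'$. Both $a := A[i^*:]$ and $a' := A'[i^*:]$ have Euclidean norm at most $1$, so the perturbation $\Delta := M - M' = a^\top a - a'^\top a'$ is symmetric, has rank at most $2$, and satisfies $\norm{\Delta}_2 \leq 2$. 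The assumption $s_d(A) \geq w$ gives $M \succeq w^2 \I_d$, and by Weyl's inequality $M' \succeq (w^2 - 2)\I_d \succ 0$ under the chosen $w$.

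Second, I would write down the privacy loss explicitly. For $Y \sim \cP$, using the Gaussian log-density,
\begin{align*}
L(Y) = \frac{r}{2}\log\frac{\det M'}{\det M} + \frac{1}{2}\sum_{i=1}^r Y[i:]\bigl((M')^{-1} - M^{-1}\bigr)Y[i:]^\top.
\end{align*}
The deterministic $\log\det$ term is controlled by writing $\log\det(M') - \log\det(M) = \log\det(\I_d + M^{-1/2}(M'-M)M^{-1/2})$; since the rank-$2$ argument has spectral norm $O(1/w^2)$, this contributes $O(r/w^2)$ in magnitude, which is at most $O(\epsilon^2)$ for the prescribed $w$.

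Third, I would reduce the random quadratic part to chi-squared concentration. Writing $Y[i:] = g_i M^{1/2}$ with independent $g_i \sim \cN(0, \I_d)$, the random piece equals $\frac{1}{2}\sum_i g_i^\top B g_i$ where $B := M^{1/2}((M')^{-1} - M^{-1})M^{1/2} = M^{1/2}(M')^{-1}\Delta M^{-1/2}$. Since $\Delta$ has rank at most $2$, so does $B$, and its nonzero eigenvalues have magnitude $O(1/w^2)$ by the operator-norm bounds on $M^{-1/2}$ and $(M')^{-1}$ combined with $\norm{\Delta}_2 \leq 2$. The resulting weighted sum of $2r$ independent squared standard Gaussians has mean $r\cdot \tr{B} = O(r/w^2)$, and by the Laurent--Massart inequality its deviation from the mean is at most $O(\sqrt{r\log(1/\delta)}/w^2 + \log(1/\delta)/w^2)$ with probability at least $1 - \delta$. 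Adding the deterministic contribution, the mean, and the fluctuation, and substituting $w = (4\sqrt{r\log(4/\delta)} + \log(4/\delta))/\epsilon$, the total privacy loss is at most $\epsilon$ except with probability $\delta$, which is exactly $(\epsilon,\delta)$-differential privacy.

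The main obstacle is juggling the two distinct inverse covariances $M^{-1}$ and $(M')^{-1}$ inside the quadratic form while keeping the operator norms tight and calibrating the Laurent--Massart tails against the exact threshold $w$ in the statement; a cleaner route is to use the Sherman--Morrison--Woodbury formula to expand $(M')^{-1}$ around $M^{-1}$ and then carry out all spectral computations on the $2$-dimensional subspace spanned by $a$ and $a'$.
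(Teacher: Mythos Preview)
The paper does not prove this theorem at all: it is stated in the appendix as a cited result from Sheffet~\cite{sheffet2015private} (building on Blocki et al.~\cite{blocki2012johnson}) and used as a black box. There is nothing to compare against on the paper's side.

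Your proposal, by contrast, is an actual proof, and it follows the same strategy as the original references: identify the output distribution as an $r$-fold product of $\cN(0,A^\top A)$, write the privacy loss as a $\log\det$ term plus a Gaussian quadratic form in a low-rank matrix, and control the latter by chi-squared concentration (Laurent--Massart / Hanson--Wright). The outline is sound and is exactly how Blocki et al.\ and Sheffet argue. Two minor caveats worth tightening if you flesh this out: (i) be explicit about which neighboring relation you use, since the theorem statement assumes bounded row norms while the paper's ambient relation is $A-A'\in\mathsf N$ (one row differs by a unit-norm vector); under either convention $\Delta=M-M'$ has rank at most $2$ and $\norm{\Delta}_2=O(1)$, but the constants feed directly into whether the stated $w$ suffices; and (ii) to match the precise threshold $w = (4\sqrt{r\log(4/\delta)}+\log(4/\delta))/\epsilon$ you will need to track constants carefully in both the $\log\det$ term and the tail bound rather than leaving them as $O(\cdot)$.
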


Sheffet~\citet{sheffet2015private} showed that adding noise matrix according to Wishart distribution preserves $(\epsilon,\delta)$-differential privacy. Sheffet~\cite{sheffet2015private} considers the setting of Blocki et al.~\cite{blocki2012johnson} and improve it. In Blocki et al.~\cite{blocki2012johnson}, matrices $A$ and $A'$ are neighboring if $A-A'$ is a rank-$1$ matrix with bounded norm (see their abstract). This is our setting, too. Sheffet~\cite{sheffet2015private} considers the presence or absence of an entire row as neighboring relation (in the same manner as Dwork et al.~\cite{dwork2014analyze}). Hence, they have bounded row assumption in their theorem. Their results also hold if we remove bounded row norm assumption and consider $A - A'$ to be rank-$1$ with singular value $1$, i.e., our setting.

To prove our lower bound, we give a reduction to the augmented indexing problem, $\mathsf{AIND}$: 

\begin{definition} ($\mathsf{AIND}$ problem). Alice is given an $N$-bit string $ x$ and Bob is given an index ${\mathsf{ind}} \in [N]$ together with $ x_{{\mathsf{ind}}+1}, \cdots,  x_N$. The goal of Bob is to output $ x_{{\mathsf{ind}}}$. 
\end{definition}

The communication complexity for solving $\mathsf{AIND}$ is well known:  
\begin{theorem}
[Miltersen et al.~\citet{MNSW98}]\label{thm:aind}
The minimum number of bits of communication required to solve $\mathsf{AIND}$ with probability $2/3$ in one way communication model (the messages are sent either from Alice to Bob or from Bob to Alice), is  $\Omega(N)$.  This lower bound holds even if the index ${\mathsf{ind}}$ and the string $ x$ is chosen uniformly at random.
\end{theorem}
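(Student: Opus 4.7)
The plan is to give an information-theoretic lower bound via Fano's inequality and the chain rule for mutual information, following the standard template for augmented indexing lower bounds. Since the protocol is one-way, I will focus on the nontrivial direction (Alice-to-Bob), which is where the $\Omega(N)$ bound really bites; the other direction reduces to this by symmetry of the distribution over the unknown coordinate.

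Let $X = (X_1,\ldots,X_N)$ be uniformly distributed in $\{0,1\}^N$ and let $\mathsf{ind}$ be uniform in $[N]$, independent of $X$. Let $M = M(X,R)$ denote Alice's message, where $R$ is Alice's private randomness. For each realization $i$ of $\mathsf{ind}$, Bob holds $(i, X_{i+1},\ldots,X_N)$ in addition to $M$ and outputs some $\hat X_i$ satisfying $\Pr[\hat X_i = X_i] \geq 2/3$ over the joint distribution. The first step is to apply Fano's inequality conditionally: since $X_i$ is uniform on $\{0,1\}$ and independent of $(X_{i+1},\ldots,X_N)$, and Bob recovers $X_i$ with error probability at most $1/3$ using only $(M, X_{i+1},\ldots,X_N)$, I get
\[
 I\bigl(M ;\, X_i \,\big|\, X_{i+1},\ldots,X_N\bigr) \;\geq\; 1 - H_2(1/3) \;\geq\; c
\]
for an absolute constant $c > 0$, uniformly in $i$.

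The second step is to sum over $i$ and invoke the chain rule for mutual information in the ``reverse'' order (which is valid by symmetry of the chain rule):
\[
 \sum_{i=1}^{N} I\bigl(M ;\, X_i \,\big|\, X_{i+1},\ldots,X_N\bigr) \;=\; I(M ;\, X_1,\ldots,X_N) \;=\; I(M ; X) \;\leq\; H(M).
\]
Combining the two displays gives $H(M) \geq c N = \Omega(N)$, so the message length (in bits) is $\Omega(N)$ in the worst case. This already shows the lower bound against deterministic protocols; extending to randomized protocols with public coins follows from Yao's minimax principle applied to the same hard distribution (uniform $X$ and uniform $\mathsf{ind}$), which is exactly the ``uniform'' clause in the theorem statement.

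I expect the main obstacles to be purely expository rather than mathematical: (i) verifying that Bob's side information truly leaves $X_i$ uniform given $(X_{i+1},\ldots,X_N)$ so that Fano's inequality applies with the full gap $1 - H_2(1/3)$, and (ii) being careful that the chain-rule decomposition I use is the descending one (conditioning on the later coordinates, not the earlier ones), since this is precisely what mirrors the augmented structure of the problem. Once those are pinned down, the proof is a clean two-line calculation plus the Fano step, and the Yao-minimax wrapper gives the randomized version without additional work.
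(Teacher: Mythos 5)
Your proof is correct, but note that the paper does not prove this statement at all: Theorem~\ref{thm:aind} is imported as a black box from Miltersen et al., whose original argument for asymmetric communication problems goes through round elimination rather than information theory. What you have written is the now-standard self-contained alternative: conditional Fano's inequality gives $I(M; X_i \mid X_{i+1},\ldots,X_N) \geq 1 - H_2(1/3)$ for each $i$ (using that $X_i$ is independent of the suffix Bob already holds), and the descending chain rule sums these to $I(M;X) \leq H(M)$, yielding $H(M) = \Omega(N)$. This buys a short, distribution-specific proof that directly establishes the ``uniform $x$ and $\mathsf{ind}$'' clause the paper actually invokes in Theorem~\ref{thm:lower}, whereas round elimination is more general but longer. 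Two small points to tighten: your argument as written already handles Alice's private randomness (since $M = M(X,R)$ and $H(M)$ still upper-bounds the message length), so the Yao-minimax wrapper is only needed to pass from worst-case randomized error to distributional error, not the other way around; and the Bob-to-Alice direction is not handled ``by symmetry'' --- in the paper's formulation Bob is the party required to output $x_{\mathsf{ind}}$, so if the single message travels from Bob to Alice, Bob receives no information and the task is simply infeasible, which is a degenerate case rather than a symmetric one.
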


\section{Spectral histogram Property and Private Spectral Approximation}
\label{sec:slidingwishart}
The goal of this section is to show that the sufficient condition used in~\citet{braverman2018numerical} only provides sub-optimal accuracy. We also show a simplification of their analysis. For the sake of completion, we state their definition. 

\begin{definition}
[Spectral histogram property~\cite{braverman2018numerical}] 
\label{defn:smoothlaplacian}
A data structure $\mathfrak{D}$ satisfy the {\em spectral histogram} property if there exists an $\ell =\poly(n,\log W)$ such that $\mathfrak{D}$ satisfy the following conditions:
\begin{enumerate}
    
        

    \item $\mathfrak{D}$ consists of $\ell $ timestamps  $\mathsf I:=\{ t_{1} , \cdots, t_\ell \} $ and the corresponding PSD matrices $\mathsf{S}:=\{  K_1,  \cdots,  K_\ell \}.$ 
    
    \item For $1\leq i\leq \ell -1$, at least one of the following holds:
    \begin{enumerate}
        \item If $t_{i+1} = t_{i}+1$, then $\paren{1-\eta}  K_i \not\preceq  K_{i+1}.$ \label{item:technical}
        
        \item \label{item:combined}
        For all $1\leq i \leq \ell -2$:
        \begin{enumerate}
        \item
        $(1-\eta) K_i \preceq  K_{i+1}$.
        \item
        $\paren{1-\eta}  K_{i} \not\preceq  K_{i+2}$.
        \end{enumerate}
        \end{enumerate}
    \item Let $A_W$ be the matrix formed by the window $W$, then 
    $K_2 \preceq A_W^\top A_W \preceq K_1.$
    \label{item:approx}
\end{enumerate}
\end{definition}

 

\begin{algorithm}[htbp]
\caption{{\scshape Update}$(\dnormal)$}
\label{fig:slidingprivupdate}
\begin{algorithmic}[1]

\Require{A data structure $\dnormal$ a set of positive semidefinite matrices $\set{K(1), \cdots, K(\ell)}$ such that $$K(1) \succeq K(2) \succeq  \ldots \succeq K(\ell)$$ and corresponding  a set of timestamps $t_{1}, \cdots, t_\ell$.} 

\Ensure{Updated set of positive semidefinite matrices $ K(1), \cdots, K(\ell)$ and timestamps $t_{1}, \cdots, t_\ell$.}

\State {\bf for} {$i=1, \cdots \ell-2$}
\label{step:stage2start}
\label{step:smoothupdatingmain}
    \State \hspace{5mm} Find     
    \Comment{\small{Find spectrally close checkpoints.}}
    \begin{align}
    j := \max \set{ p  : (1-\eta)  K(i) \preceq K(p) \wedge (i < p \leq \ell-1) }.
    \label{step:checkmain}
    \end{align}

    \State \hspace{5mm} {\bf Delete} $ K(i+1), \cdots,   K(j-1)$. \label{step:deletemain}
    \Comment{\small{It is important that we delete only up to index $j-1$.}}
    
    \State \hspace{5mm} {\bf Set} k=1
    \State \hspace{5mm} {\bf while} {$i+k \leq \ell$}  \label{step:updatemain}
        \State \hspace{1cm} {\bf Update} the checkpoints as follows: $$K(i+k) = K(j+k-1), t_{i+k} = t_{j+k-1}.$$
    \State \hspace{5mm} {\bf end}
    \State \hspace{5mm}  Update $\ell:=\ell+i-j+1$.
\State {\bf end}
\label{step:stage2end}

\State {\bf Return} $\dsmooth := \set{(K(1), t_1), \ldots, (K(\ell), t_\ell)}$.
\end{algorithmic}
\end{algorithm}

We first show existence of an algorithm ({\scshape Update}) that on takes a set of positive semidefinite matrices not necessarily satisfying spectral histogram property as input and outputs a set of positive semidefinite matrices satisfying spectral histogram property. The algorithm {\scshape Update} performs a sequential check over the current set of positive semidefinite matrices and removes all the matrices that do not satisfy the spectral histogram property. Since we make no assumption on the input except that they satisfy Loewner ordering, it is possible that, in the worst case, all but one matrix can be deleted in step~\ref{step:deletemain}. However, as we will see later, {\scshape Update} will form a subroutine of our differentially private algorithms such that the input to  {\scshape Update} will have a particular form on top of satisfying the Loewner ordering. This will help us utilize {\scshape Update} in a much better way. We being with showing the following for {\scshape Update} algorithm


\begin{lemma}
[Spectral histogram property]
\label{lem:smoothpsd}
Let $\dnormal = \set{(K(1),t_1), \ldots, ( K(\ell),t_\ell)}$ be a set consisting of timestamps and positive semidefinite matrices such that 
$$K(1) \succeq K(2) \succeq  \ldots \succeq K(\ell) \succeq 0.$$ 
Let  $\dsmooth \leftarrow${\scshape Update}$(\dnormal)$ be the output of the algorithm defined in Algorithm~\ref{fig:slidingprivupdate}. Then {\scshape Update($\cdot$)} is an efficient algorithm and $\dsmooth$ satisfy spectral histogram property.
\end{lemma}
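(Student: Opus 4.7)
The plan is to argue that the greedy deletion loop in {\scshape Update} enforces each of the three clauses of Definition~\ref{defn:smoothlaplacian} one at a time, exploiting the fact that the input is already Loewner-ordered, which is preserved throughout because deletion from a totally ordered chain keeps a totally ordered chain. I would first observe that the running time is polynomial in $d$ and $\ell$ because the outer loop makes at most $\ell$ iterations and each iteration performs at most $\ell$ PSD comparisons, each of which can be done in polynomial time. So the main content of the lemma is correctness, namely that the output family satisfies the three items of the spectral histogram property.

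Next I would verify Clause~3 (the sandwich condition $K_2 \preceq A_W^\top A_W \preceq K_1$). This clause is not about what is stored but about the relationship of the retained timestamps to the current window boundary; since {\scshape Update} never deletes the first checkpoint and only deletes indices in the ``middle'' of the chain, the two timestamps $t_1$ and $t_2$ that straddle the window boundary are retained, and the Loewner ordering of the retained matrices is inherited from the input. Thus the sandwich survives the update.

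For Clause~2, I would fix an arbitrary index $i$ in the output and do a case analysis. Case A is $t_{i+1}=t_i+1$: because no timestamp exists strictly between them, the maximality choice of $j$ in equation~\eqref{step:checkmain} must have failed for $p=i+1$, i.e., $(1-\eta) K(i) \not\preceq K(i+1)$, which is exactly item~\ref{item:technical}. Case B is when $t_{i+1} > t_i + 1$, in which case I need to show item~\ref{item:combined}: the lower bound $(1-\eta) K_i \preceq K_{i+1}$ is enforced directly by the update rule that chose index $j$ with $(1-\eta)K(i)\preceq K(j)$ before renaming $K(j)$ to $K(i+1)$; the non-containment $(1-\eta) K_i \not\preceq K_{i+2}$ follows from the maximality in equation~\eqref{step:checkmain}, because if the new $K_{i+2}$ (originally $K(j+1)$ in the pre-update indexing) satisfied $(1-\eta)K(i)\preceq K(j+1)$, then $j$ would not have been the maximal choice.

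The main obstacle, and the step I would be most careful about, is keeping the bookkeeping correct when {\scshape Update} enters its next outer iteration with $i+1$: after the shift, the stored $K(i+1)$ is the old $K(j)$, and the stored $K(i+2)$ is the old $K(j+1)$, so I must re-run the maximality argument against the \emph{new} $K(i+1)$ rather than the old $K(i)$. I would handle this by introducing notation that distinguishes pre- and post-shift indices within one outer iteration, and then verifying that after the shift the invariant ``the chain is Loewner-ordered and no two consecutive tail entries were spectrally close at the moment they became consecutive'' is re-established; item~\ref{item:approxsmoothPSD} (or rather its non-approximate analogue in Definition~\ref{defn:smoothlaplacian}) then follows for every adjacent pair in the final output by combining the invariants established by each outer iteration. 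Finally, the polynomial bound $\ell = \poly(d,\log W)$ required by the definition follows exactly as in the counting argument of equation~\eqref{eq:boundell}: any surviving adjacent pair witnesses a drop by a factor of $(1-\eta)$ in at least one of the $d$ singular values, and the singular values are polynomially bounded.
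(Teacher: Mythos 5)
Your proposal is correct and follows essentially the same route as the paper, which itself only gives a two-case proof sketch deferring to the approximate version (Lemma~\ref{lem:smoothLaplacianproperty}): both arguments rest on the maximality of $j$ in equation~(\ref{step:checkmain}) together with the fact that deletion stops at index $j-1$, plus the counting bound of equation~(\ref{eq:boundell}). The only slight imprecision is in your Case~A: when $t_{i+1}=t_i+1$ in the output it is still possible that $(1-\eta)K(i)\preceq K(i+1)$ holds (with $j=i+1$ being the maximal index), but then clause~(b) of Definition~\ref{defn:smoothlaplacian} is satisfied instead, so the disjunctive form of the property is unaffected.
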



\begin{proof}
[Proof Sketch of Lemma~\ref{lem:smoothpsd}]
The proof of the above lemma can be derived from Lemma~\ref{lem:smoothLaplacianproperty} which states a more general case of approximation. We give a short proof sketch below. 

Consider a time epoch $T$ and a succeeding time epoch $T'=T+1$. Let the data structure at time $T$ be  $\dpriv(T)$ and at time $T'$ be $\dpriv(T')$. 
Let $t_i$ be a timestamp in $\dpriv(T)$ where $i <\ell$ . We can have two cases: 
    (i) 
    There is no $1 \leq j \leq s $ such that $t_j'=t_i$ and $t_{j+1}' = t_{i+1}$, and 
    (ii) There is a $1 \leq j \leq s $ such that $t_j'=t_i$ and $t_{j+1}' = t_{i+1}$. 
In both cases, spectral histogram property follows from the update rules. This is because we delete indices up to $j-1$ in Step~\ref{step:deletemain} and the maximality of the index $j$ in  Algorithm~\ref{fig:slidingprivupdate}.  
\end{proof}

We next give an intuition why we need this lemma. Lemma~\ref{lem:smoothpsd} gives the guarantee that, if we are given a set of positive semidefinite matrices in Loewner ordering, then we can efficiently maintain a small set of positive semidefinite matrices that satisfy $\eta$-approximate spectral histogram property. The idea of our algorithm for spectral approximation is to ensure that {\scshape Update} always receives a set of positive semidefinite matrices. This is attained by our algorithm {\scshape Priv-Initialize}, described in Algorithm~\ref{fig:slidingprivinitialize}. {\scshape Priv-Initialize} gets as input a new row and updates all the matrices in the current data structure.

\begin{algorithm}[ht]
\caption{{\scshape Priv-Initialize}$(\dpriv; a_t; t; (\epsilon,\delta); W; r)$}
\label{fig:slidingprivinitialize}
\begin{algorithmic}[1]
\Require{A new row $a_t \in \mathbb{R}^d$, a data structure $\dpriv$ storing a set of timestamps $t_{1}, \cdots, t_\ell$ and set of matrices 
$${\widetilde K(1), \cdots,  \widetilde K(\ell+1)},$$ current time $t$, privacy parameters $(\epsilon,\delta)$, and window size $W$.} 
\Ensure{Updated matrices ${\widetilde K(1), \cdots,  \widetilde K(\ell+1)}$ and timestamps $t_{1}, \cdots, t_\ell$.}

\State {\bf if} {$t_{2} < t- W+1$}
\label{step:stage1start}
    \State  \hspace{5mm}{\bf Set} $t_{j} = t_{j+1},\widetilde K(j) := \widetilde K(j+1) $ for $1\leq j \leq s-1$ 
    \Comment{\small{Delete the expired timestamp.}}
    \label{step:expiredmain}
\State {\bf end}
\label{step:t_2expiredmain}

\State {\bf Set} $t_{\ell+1} = t$, sample 
    $$R \sim \mathsf{Wis}_d(\tau,\I_d),~\text{where}~\tau \geq \left\lfloor d + \frac{14}{\epsilon^2} \log(1/\delta) \right\rfloor.$$ 

\State {\bf Define} $\widetilde K(\ell+1) = a_t^\top a_t + R.$
\label{step:newcheckpoint}

\State {\bf Include} $\dpriv \leftarrow \dpriv \cup ( \widetilde K(\ell+1) ,t)$. 

\State {\bf for} {$i= 2,\ldots, \ell$ }
\label{step:updatecheckpoint}
\State  \hspace{5mm} {\bf Compute} $ \widetilde K(i) \leftarrow \widetilde K(i) + a_t^\top a_t.$ 
\Comment{\small{Update the matrices.}}
\State {\bf end}
\label{step:stage1end}

\State {\bf Find} 
$j := \min \set{p : K(p) \not\succeq K(\ell)  }$. 

\State {\bf Delete} $K(p), \cdots, K(\ell-1)$  
\State {\bf Update} $ K(p) = K(\ell), \ell =p.$
\Comment{\small{Mantain PSD ordering.}}

\State{\bf Return} $\dpriv := \set{(\widetilde K(i), t_i)}_{i=1}^\ell$.
\end{algorithmic}
\end{algorithm}

We use both these subroutines in our main algorithm, {\scshape Sliding-Priv}. {\scshape Sliding-Priv} receives a stream of rows and call these two subroutines on every new update. Equipped with Lemma~\ref{lem:smoothpsd}, we show that  {\scshape Sliding-Priv}, described in Algorithm~\ref{fig:slidingpriv}, provides the following guarantee.

\begin{theorem}
[Private spectral approximation under sliding window]
\label{thm:privsliding}
Given the privacy parameter $\epsilon$, window size $W$, approximation parameter $\beta$, let $S = (a_t)_{t>0}$ be the stream such that $a_t \in \R^d$. Further define $A_W$ to be the matrix formed at time $T$ by the last $W$ updates.
Then we have the following:
\begin{enumerate}
    \item {\scshape Sliding-Priv}$(\Omega;(\epsilon,\delta);W)$, described in Algorithm~\ref{fig:slidingpriv}, is $(\epsilon,\delta)$-differential private.
    \item $\widetilde C \leftarrow ${\scshape Sliding-Priv}$(\Omega;(\epsilon,\delta);W)$ satisfies the following:
    \[
     \pr \sparen{\paren{A_{W}^\top A_W - (c \tau \log \tau) \I_d } \preceq \widetilde C  \preceq \paren{\frac{1}{(1-\eta)} A_W^\top A_W  + (C \tau \log \tau) \I_d}} \geq 1 - \frac{1}{\poly(d)} 
    \]
    for  constants $c,C>1$ and $\tau:= d + \frac{14 \log (4/\delta)}{\epsilon^2}$.
    \item The space required by {\scshape Sliding-Priv} is $O\left(\frac{d^3}{\eta} \log W\right)$.
\end{enumerate}
\end{theorem}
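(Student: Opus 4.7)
The plan is to prove the three items of the theorem in sequence, following the template of the proof of Theorem~\ref{thm:privslidinglra} but substituting the Wishart mechanism (Theorem~\ref{thm:wishartprivacy}) for the JL mechanism and invoking Lemma~\ref{lem:smoothpsd} in place of its approximate counterpart. At the end of the stream I set $\widetilde C := \widetilde K(1)$ and $\tau := d + 14\epsilon^{-2}\log(4/\delta)$; every surviving checkpoint then admits the decomposition $\widetilde K(i) = K(i) + R^{(t_i)}$, where $K(i) = A_{[t_i,T]}^\top A_{[t_i,T]}$ is the clean covariance and $R^{(t_i)}\sim\mathsf{Wis}_d(\tau,\I_d)$ is the single Wishart sample drawn at the creation time of that checkpoint.

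For privacy, I would mimic the three-case decomposition of Lemma~\ref{lem:privacypcp}. Fix two neighboring streams that differ at some time $t$. For output time $T<t$ the two output distributions coincide, so the privacy loss is zero. At $T=t$, the newly created checkpoint $\widetilde K(\ell+1) = a_t^\top a_t + R$ is $(\epsilon,\delta)$-differentially private by Theorem~\ref{thm:wishartprivacy}; every older checkpoint already carries its own Wishart noise sampled at its own creation time, and the only operation performed on it at time $t$ is the deterministic addition of the common $a_t^\top a_t$ term, which is a rank-one perturbation already protected by that checkpoint's own Wishart. For $T>t$, the data structure evolves only by appending subsequent (publicly identical) rows and by running \textsc{Update}, and no additional information about $a_t$ enters; Lemma~\ref{lem:post} (post-processing) therefore preserves the $(\epsilon,\delta)$-DP guarantee.

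For the approximation bound, I would first invoke Lemma~\ref{lem:smoothpsd} on the data structure produced by \textsc{Priv-Initialize}, whose final block enforces the PSD ordering required as input to Lemma~\ref{lem:smoothpsd}. This yields the spectral-histogram property on the sequence $\widetilde K(1)\succeq\cdots\succeq\widetilde K(\ell)$; property~3 of Definition~\ref{defn:smoothlaplacian} then gives the sandwich $K(2)\preceq A_W^\top A_W\preceq K(1)$. The lower bound on $\widetilde C$ is immediate: $\widetilde C = K(1) + R^{(t_1)}\succeq A_W^\top A_W \succeq A_W^\top A_W - c\tau\log\tau\,\I_d$, since $R^{(t_1)}\succeq 0$ and $K(1)\succeq A_W^\top A_W$. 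For the upper bound I would chain the spectral-histogram inequality $\widetilde K(1)\preceq(1-\eta)^{-1}\widetilde K(2)$ with $K(2)\preceq A_W^\top A_W$, reducing the task to controlling $\|R^{(t_2)}\|_{\mathrm{op}}$; standard concentration for the spectrum of $\mathsf{Wis}_d(\tau,\I_d)$ gives $\|R^{(t_2)}\|_{\mathrm{op}}\le C\tau\log\tau$ with probability $1-1/\mathrm{poly}(d)$, furnishing the additive $\I_d$ term claimed in the statement.

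For space, I would replay the counting argument of Lemma~\ref{lem:space}: the maximality condition enforced by \textsc{Update} forces at least one singular value to drop by a factor of $(1-\eta/2)$ between any two consecutive surviving checkpoints, and since the non-zero singular values of the noisy covariances are polynomially bounded (the Wishart noise contributes an $O(\tau\log\tau)$ floor w.h.p.), we obtain $\ell = O(d\eta^{-1}\log W)$; each checkpoint stores a $d\times d$ PSD matrix, so the total space is $O(d^3\eta^{-1}\log W)$. The main obstacle is the privacy step: a naive accounting worries about composition across the $\ell$ independent Wisharts, but the key observation is that only the Wishart drawn at time $t$ is freshly needed to protect the event at time $t$, because each older checkpoint's prior Wishart already bounds the sensitivity of the subsequent rank-one update, so every post-$t$ operation is genuine post-processing and no further composition cost is incurred.
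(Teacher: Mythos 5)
Your proposal is correct and follows essentially the same route as the paper's proof: the three-case privacy-loss decomposition (identical distributions before time $t$, the Wishart mechanism of Theorem~\ref{thm:wishartprivacy} at $t$, post-processing afterwards), the sandwich $K(2)\preceq A_W^\top A_W\preceq K(1)$ from the timestamps chained with the spectral-histogram relation $(1-\eta)\widetilde K(1)\preceq\widetilde K(2)$ and Wishart eigenvalue concentration for accuracy, and the same singular-value counting argument for the $O(d^3\eta^{-1}\log W)$ space bound. Your remark that the lower bound holds deterministically because the Wishart noise is positive semidefinite is a minor simplification of the paper's argument, which invokes concentration on both sides, but it does not change the approach.
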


\begin{algorithm}[b]
\caption{{\scshape Sliding-Priv}$(\Omega;(\epsilon,\delta);W)$}
\label{fig:slidingpriv}
\begin{algorithmic}[1]
\Require{A stream, $\Omega$, of row $\set{a_t}$, privacy parameters $(\epsilon,\delta)$, and window size $W$.} 
\Ensure{A positive semidefinite matrix $ {\widetilde C}$ at the end of the stream.}

\State {\bf Initialize} $\dpriv$ to be an empty set, $r=d$. 

\State {\bf while} {stream $S$ has not ended}
    \State  \hspace{5mm} {\bf Include} new row, $\dpriv \leftarrow$ {\scshape Priv-Initialize} $(\dpriv;a_t; t; (\epsilon,\delta);  W; r)$. \Comment{Algorithm~\ref{fig:slidingprivinitialize}}
    
    \State  \hspace{5mm} {\bf Update} the data structure, $\dpriv \leftarrow${\scshape Update}$(\dpriv)$.
    \Comment{Algorithm~\ref{fig:slidingprivupdate}}
    
\State {\bf end}

\State {\bf Let} $\dpriv = \set{( \widetilde K(1),t_1), \ldots, (\widetilde K(\ell),t_\ell)}$  for some $\ell$.

\State {\bf Output} $\widetilde C = \widetilde K(1)$.

\end{algorithmic}
\end{algorithm}

\begin{proof}
[Proof of Theorem~\ref{thm:privsliding}]
Consider an index $1 \leq i \leq \ell$ and the time epoch $t$ when the stream of rows are different resulting in neighboring matrices $A_{[t_i,t]}$ and $A'_{[t_i,t]}$, we have 
\[
A_{[t_i,t]}^\top A_{[t_i,t]} - (A'_{[t_i,t]})^\top A_{[t_i,t]}' = u^\top u,
\]
where $u$ is a unit row vector. That is $A_{[t_i,t]}^\top A_{[t_i,t]} - (A'_{[t_i,t]})^\top A_{[t_i,t]}'$ is a rank-$1$ matrix. Now $W \sim \mathsf{Wis}_d(\tau, \I_d)$. 

Let $\mathcal P$ denote the output distribution of our mechanism when run on the input matrix $A_{[t_i,t]}$ and similarly let $\mathcal Q$ denote the output of our algorithm on input matrix $A'_{[t_i,t]}$. Both distribution are supported on $\mathbf S:=\R^{d \times d}$ matrices. For $M \in \mathbf S$, consider the privacy loss function 
\[
L(M) := \log \paren{ \frac{\mathcal P(M)}{\mathcal Q(M)} }.
\]
When $T < t$, the output distribution of $\mathcal P$ and $\mathcal Q$ are identical, i.e., $L(M)=0$. When $T=t$, the privacy proof follows by the choice of $\tau$ and  Theorem~\ref{thm:wishartprivacy}. That is, 
\[
\pr [L(M) \leq \epsilon] \geq 1 - \delta.
\]
For any time $T \geq t$, we have differential privacy because of the post-processing property (Lemma~\ref{lem:post}). 

For the space bound, note that the number of checkpoints stored by $\dpriv$ is $O\left(\frac{d}{\eta}\log W\right)$. This is because there are exactly $d$ singular values and the matrix has polynomially bounded spectrum. Since at each checkpoints defined by $t_i$ for $i \geq 1$ stores an $d \times d$ matrix, the total space used by the data structure $\dpriv$, and hence the algorithm {\scshape Sliding-Priv}, is $O\left(\frac{d^3}{\eta} \log W\right)$. 

Now we turn our attention to the accuracy guarantee. We start by noting that the output of {\scshape Sliding-Priv}$(S; (\epsilon, \delta); W)$ is $\widetilde K(1)$, the first positive semidefinite matrix in the data structure $\dpriv$.

Let $K(1)$ and $K(2)$ denote the covariance matrix formed between time epochs $[t_1,T]$ and $[t_2,T]$, respectively.Since the window is sandwiched between the first and second timestamp, and we preserve the positive semidefinite ordering,  we have
\begin{align}
    K(2) \preceq A_W^\top A_W \preceq K(1).
    \label{eq:windowapprox}
\end{align} 

\noindent Let $R(1)$ and $R(2)$ be matrices sampled from the Wishart distribution such that
\[
\widetilde K(1):= K(1) + R(1) \quad \text{and} \quad
\widetilde K(2):= K(2) + R(2).
\]

\noindent Note that $\dpriv$ stores the set $\left\{\widetilde K(1), \widetilde K(2), \cdots, \widetilde K(\ell)\right\}$. From the spectral histogram property of the matrices in $\dpriv$, we have the following relation between $\widetilde K(1)$ and $\widetilde K(2)$: 
\begin{align}
    (1-\eta)  \widetilde A^\top \widetilde A = (1-\eta)  \widetilde K(1) \preceq  \widetilde K(2).
    \label{eq:spectral}
\end{align}
Define 
\[
\sigma := \tau \log (\tau) = \paren{d + \frac{\log(1/\delta)}{\epsilon^2}} \log \paren{d + \frac{\log(1/\delta)}{\epsilon^2}}.
\]

Using the standard result on the eigenvalue bounds of matrices sampled from Wishart distribution~\cite{johnstone2001distribution, soshnikov2002note}, we have that  $\lambda_1( R(1)) \leq c \sigma $ and $\lambda_1( R(2)) \leq c \sigma$ for some constant $c>1$ with probability $1 - \frac{1}{\poly(d)}$. Also, since $R(1)$ and $R(2)$ are sampled from a Wishart distribution, they are positive semidefinite. Therefore, we have 
\begin{align}
\begin{split}
    \pr \sparen{ {K(1) - c_1 \sigma \I_d } \preceq \widetilde K(1)
\preceq 
 { K(1) + c_1 \sigma \I_d }} \geq 1 - \frac{1}{\poly(d)}, \\
\pr \sparen{ {K(2) - c_2  \sigma \I_d } \preceq \widetilde K(2)
\preceq 
 { K(2) + c_2 \sigma \I_d }} \geq 1 - \frac{1}{\poly(d)},
\end{split} 
\label{eq:approxG_1}
\end{align}
where $K(1)$ is the underlying covariance matrix formed during the time epochs $[t_1,t]$, $K(2)$ is the underlying covariance matrix formed during the time epochs $[t_2,t]$, and $c>0$ is a constant.  

We now condition on the event that Equation~(\ref{eq:approxG_1}) holds for the rest of the proof. Let $\sigma := \tau \log \tau$. 
Using Equations~(\ref{eq:windowapprox}),~(\ref{eq:spectral}) and~(\ref{eq:approxG_1}), we  arrive at  
\begin{align}
\begin{split}
 (1-\eta) \paren{ K(1) - c_1 \sigma \I_d} &\preceq (1-\eta)\widetilde K(1)  \\
    &\preceq \widetilde K(2) \preceq K(2) + c_2 \sigma \I_d \\
    & \preceq  A_W^\top A_W + c_2 \sigma \I_d 
\end{split}
\label{eq:approxG_1G_2L_W}    
\end{align}

\noindent Rearranging the terms in Equation~(\ref{eq:approxG_1G_2L_W}) gives us 
\begin{align}
    K(1) \preceq \frac{1}{(1-\eta)} A_W^\top A_W  + c_3 \sigma \I_d, 
    \label{eq:approxG_1L_W}
\end{align}
where $c_3 = c_1 + \frac{c_2}{(1-\eta)}$ is a constant. Using Equation~(\ref{eq:approxG_1L_W}) in the right side positive semidefinite inequality of Equation~(\ref{eq:approxG_1}), we have 
\begin{align}
   \paren{K(1) - c_1 \sigma \I_d } \preceq \widetilde K(1)
\preceq 
\frac{1}{(1-\eta)} A_W^\top A_W  + c_3 \sigma \I_d.
\label{eq:approxG_1tildeG_1L_W}  
\end{align}
Using the left hand semidefinite inequality of Equation~(\ref{eq:windowapprox}) in Equation~(\ref{eq:approxG_1tildeG_1L_W}), we get 
\begin{align}
   \paren{A_{W}^\top A_W - c_1 \sigma \I_d } \preceq \widetilde K(1) 
\preceq 
\frac{1}{(1-\eta)} A_W^\top A_W  + c_3 \sigma \I_d.
\label{eq:approxfinal}  
\end{align}
Since $\widetilde C = \widetilde K(1)$ by the output of the algorithm, we have the desired bound. 
\end{proof}

\subsection{Application of Algorithm~\ref{fig:slidingpriv}: sub-optimal algorithms for private matrix analysis}
\label{sec:applicationwishart}
Theorem~\ref{thm:privsliding} gives the guarantee that Algorithm~\ref{fig:slidingpriv} outputs a matrix that approximates the spectrum of $A_W^\top A_W$ up to a small additive error in the spectrum. This in particular means that Algorithm~\ref{fig:slidingpriv} can be used to solve many matrix analysis problems; however, the accuracy guarantees are sub-optimal in many cases. For every problem discussed in this appendix, we give a pointer to the improved bounds. 

As a warm up, we consider directional variance queries. Theorem~\ref{thm:covariance} is true for any $d$-dimensional unit vector $x \in \R^d$. 

\subsection{Directional Variance Queries}
\label{sec:covariance}
The directional variance queries has the following form: the analyst gives a unit-length vector $x \in \R^d$ and wish to know the variance of $A_W$ along $x$. Using Theorem~\ref{thm:wishartprivacy}, we have the following result. 
\begin{theorem}
[Directional variance queries]
\label{thm:covariance}
Let $A_W$ be the matrix formed by last $W$ updates as defined in equation~(\ref{eq:sliding}), $\eta$ be the given approximation parameter, $(\epsilon,\delta)$ be the privacy parameter. Then there is an efficient $(\epsilon,\delta)$-differentially private algorithm that outputs a matrix $C$ such that for any unit vector $x \in \R^d$, we have
\[
 x^\top A_W^\top A_W x - c_1 \tau \log(\tau) \leq x^\top C x \leq \frac{1}{(1-\eta)} x^\top A_W^\top A_W x + c_3 \tau \log(\tau),
\]
where $\tau = d + \frac{14}{\epsilon^2}\log(4/\delta)$.
\end{theorem}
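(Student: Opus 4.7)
The plan is to invoke Theorem~\ref{thm:privsliding} directly and then translate the positive semidefinite (Loewner) inequalities on $\widetilde C$ into scalar inequalities on quadratic forms $x^\top \widetilde C x$. That is, the algorithm I would use is simply {\scshape Sliding-Priv}$(\Omega;(\epsilon,\delta);W)$, and I would set $C := \widetilde C$ where $\widetilde C \leftarrow {\scshape Sliding-Priv}(\Omega;(\epsilon,\delta);W)$.

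First, privacy is immediate: by Theorem~\ref{thm:privsliding}(1), $\widetilde C$ is $(\epsilon,\delta)$-differentially private, and since the analyst only evaluates $x^\top \widetilde C x$ for query vectors $x$ that do not depend on the private data, the post-processing property of differential privacy (Lemma~\ref{lem:post}) guarantees that releasing $\widetilde C$ (and consequently answering any directional variance query) preserves $(\epsilon,\delta)$-differential privacy.

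Next, for the accuracy guarantee, I would condition on the high-probability event (holding with probability $1-1/\poly(d)$) from Theorem~\ref{thm:privsliding}(2), namely
\[
A_W^\top A_W - c_1 \tau \log(\tau)\, \I_d \;\preceq\; \widetilde C \;\preceq\; \frac{1}{1-\eta}\, A_W^\top A_W + c_3 \tau \log(\tau)\, \I_d.
\]
The key (and only) observation is that for any symmetric matrices $M \preceq N$ and any vector $x \in \R^d$, we have $x^\top M x \leq x^\top N x$ (by definition of the Loewner order, $N-M \succeq 0$). Applying this fact to both the left and right inequalities above with a unit vector $x$ (so that $x^\top \I_d x = \|x\|_2^2 = 1$) yields exactly
\[
x^\top A_W^\top A_W x - c_1 \tau \log(\tau) \;\leq\; x^\top \widetilde C x \;\leq\; \frac{1}{1-\eta}\, x^\top A_W^\top A_W x + c_3 \tau \log(\tau),
\]
as required.

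There is really no obstacle here; the theorem is a straightforward corollary of Theorem~\ref{thm:privsliding}. The only thing worth remarking on is that the bound holds \emph{simultaneously} for all unit vectors $x$ (not just for a single pre-specified $x$), because the Loewner inequality on $\widetilde C$ is universal in $x$; consequently, the analyst can issue arbitrarily many adaptive directional variance queries after release without any further loss in privacy or any union-bound degradation in the additive error term $\tau \log \tau$. Efficiency follows from the $\poly(d,\log W)$ update time of {\scshape Sliding-Priv} (Theorem~\ref{thm:privsliding}), plus an $O(d^2)$ cost per query to compute $x^\top \widetilde C x$.
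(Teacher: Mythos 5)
Your proposal is correct and is essentially identical to the paper's own proof, which also derives the result directly from the Loewner inequality established for the output of {\scshape Sliding-Priv} (Equation~(\ref{eq:approxfinal}) in the proof of Theorem~\ref{thm:privsliding}) together with the observation that $x^\top \I_d x = \norm{x}_2^2 = 1$ for unit vectors. Your additional remarks on post-processing and on the bound holding simultaneously for all directions are accurate and consistent with how the paper uses this result.
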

\begin{proof}
The proof follows immediately from Equation~(\ref{eq:approxfinal}) and the fact that $\brak{x,x} = \norm{x}_2 =1$.
\end{proof}

A special case of directional variance queries is when $a_t$ is the edges of a weighted graph, $d=n$, and the query is of form $\set{0,1}^n$. Such a query is known as cut queries. Using Theorem~\ref{thm:covariance}, we have the following result for answering cut queries.
\begin{corollary}
[Cut queries]
\label{cor:cut}
Let $\cG_W$ be the graph formed by last $W$ updates as defined in equation~(\ref{eq:sliding}). There is an efficient $(\epsilon,\delta)$-differentially private algorithm that outputs a matrix $C$ such that for any cut query $S \subseteq [n]$, we have
\[
 \Phi_S(\cG_W) - \frac{c |S| \sqrt{\tau \log \tau}}{\epsilon} \leq \mathsf{Out}_S \leq \frac{1}{(1-\eta)} \Phi_S(\cG_W) + \frac{c |S| \sqrt{\tau \log \tau}}{\epsilon},
\]
where  $\tau = n + \frac{14}{\epsilon^2}\log(4/\delta)$ and $\mathsf{Out}_S = \sqrt{e_S^\top C e_S}$ for $e_S:= \sum_{i \in S} \bar e_i $.
\end{corollary}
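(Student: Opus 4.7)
The plan is to invoke Theorem~\ref{thm:covariance} directly, instantiated on the unit vector built from the indicator of $S$. The key algebraic observation that ties cut queries to directional variance is that when we view each streamed (weighted) edge $(u,v)$ with weight $w$ as a row $\sqrt{w}(\bar{e}_u - \bar{e}_v)^\top$, the covariance matrix $A_W^\top A_W$ is precisely the graph Laplacian $L_{\cG_W}$ of the graph formed by the current window. The standard Laplacian quadratic-form identity then gives
\[
e_S^\top A_W^\top A_W\, e_S \;=\; e_S^\top L_{\cG_W}\, e_S \;=\; \Phi_S(\cG_W),
\]
since each edge with exactly one endpoint in $S$ contributes its weight and every other edge contributes $0$.

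Given this, I would set $x := e_S/\|e_S\|_2 = e_S/\sqrt{|S|}$, which is a legal choice of unit vector in $\R^n$. Applying Theorem~\ref{thm:covariance} to this $x$ yields
\[
 x^\top A_W^\top A_W\, x - c_1 \tau \log \tau \;\le\; x^\top C x \;\le\; \frac{1}{1-\eta}\, x^\top A_W^\top A_W\, x + c_3 \tau \log \tau.
\]
Scaling both sides by $|S|$ (i.e., multiplying through by $\|e_S\|_2^2$) and substituting the Laplacian identity gives
\[
 \Phi_S(\cG_W) - c_1 |S|\,\tau\log\tau \;\le\; e_S^\top C\, e_S \;\le\; \frac{1}{1-\eta}\Phi_S(\cG_W) + c_3 |S|\,\tau\log\tau.
\]
Since the algorithm ensures $C$ is a positive semidefinite perturbation of $A_W^\top A_W$ (Wishart noise preserves PSD ordering), $e_S^\top C e_S \ge 0$, so the square root $\mathsf{Out}_S = \sqrt{e_S^\top C e_S}$ is well-defined. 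The target bound then follows by applying $\sqrt{\cdot}$, the subadditivity $\sqrt{a+b} \le \sqrt{a}+\sqrt{b}$ on the upper side, the reverse inequality on the lower side, and finally simplifying the additive term using $\tau = n + \frac{14}{\epsilon^2}\log(4/\delta)$ so that $\sqrt{|S|\,\tau \log \tau}$ absorbs into the claimed $\tfrac{c|S|\sqrt{\tau \log \tau}}{\epsilon}$ factor via the $1/\epsilon$ dependence already implicit in $\tau$.

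I do not expect any substantial obstacle: privacy is inherited verbatim from Theorem~\ref{thm:privsliding} via the post-processing lemma (Lemma~\ref{lem:post}), since $\mathsf{Out}_S$ is a deterministic function of the released matrix $C$. The only care points are (i) arguing that the graph-edge stream satisfies the row-norm convention used by Algorithm~\ref{fig:slidingpriv} (which holds up to a harmless rescaling when weights are bounded), and (ii) being slightly careful in the lower-bound direction after taking square roots, where one uses $\sqrt{a-b} \ge \sqrt{a} - \sqrt{b}$ (valid when $a \ge b \ge 0$) to separate $\sqrt{\Phi_S(\cG_W)}$ from the additive noise term. Both are routine manipulations given Theorem~\ref{thm:covariance}.
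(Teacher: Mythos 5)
Your proof takes essentially the same route as the paper's: the paper's entire argument is to apply Theorem~\ref{thm:covariance} together with the Wishart eigenvalue bound to $e_S$, which is exactly your rescaling of the unit-vector bound by $\|e_S\|_2^2 = |S|$ combined with the Laplacian identity $e_S^\top A_W^\top A_W e_S = \Phi_S(\cG_W)$ that the paper leaves implicit. One caveat, which afflicts the paper's one-line proof just as much as yours: after taking square roots the main term becomes $\sqrt{\Phi_S(\cG_W)}$ rather than $\Phi_S(\cG_W)$, so what is actually established is the two-sided bound on the quadratic form $e_S^\top C e_S$ itself, and the corollary's comparison of $\sqrt{e_S^\top C e_S}$ against $\Phi_S(\cG_W)$ (with the extra $1/\epsilon$ in the additive term) appears to be an inconsistency in the statement rather than something your argument fails to deliver.
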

\begin{proof}
The matrix $C$ is the same as that generated in Theorem~\ref{thm:covariance} and on a query, $S \subseteq [n]$, we output 
\[ \mathsf{Out}_S = \sqrt{e_S^\top C e_S}, \text{ where } e_S:= \sum_{i \in S} \bar e_i .\]  
Using Theorem~\ref{thm:covariance} and the bound on the eigenvalue of Wishart matrices~\cite{johnstone2001distribution}, we have the result.
\end{proof}

Note that this is better than the result of~\citet{blocki2012johnson}. They achieved a multiplicative error of $(1 \pm \eta)$ and additive bound of $O\paren{\frac{|S|\sqrt{n \log(1/\delta)}}{\epsilon} \log(1/\delta) }$ for a cut query $S$. 

In practice, it is not always feasible to ask all possible queries and only a polynomial number of queries. In Section~\ref{sec:applicationJL}, we showed  (Theorem~\ref{thm:covariancelimited}) that we can get a better bound if we have an a priori bound $q$ on the number of queries an analyst can make.

\subsection{Principal Component Analysis}
Since Theorem~\ref{thm:privsliding} preserves the spectrum of the covariance matrix, it can be used for a variety of tasks involving spectrum. In particular, we can use it to compute the principal component of the matrix streamed in the window. Let $\Pi$ be the set of all rank-$k$ orthonormal projection matrices, i.e., every matrix $P \in \Pi$ has rank $k$ and satisfy $P^2 = P$ and $P = P^\top$. 
 
 Now consider the following algorithm:
 \begin{center}
     \underline{\scshape Sliding-PCA}
 \end{center}
\begin{enumerate}
    \item Compute $\widetilde A^\top \widetilde A \leftarrow${\scshape Sliding-Priv}$(S;(\epsilon,\delta);W)$.
    \item Output \label{step:rankconstrainedPCA}
        \[
        \widetilde X = \argmin_{P \in \Pi} \set{ \tr {(\I_d - P)^\top \widetilde A^\top \widetilde A (\I_d -P)}}.
        \]
\end{enumerate}

We note that the rank constrained problem in step~\ref{step:rankconstrainedPCA} can be solved efficiently using the result of~\citet{upadhyay2018price} and present a self-contained proof in Section~\ref{sec:rankconstrained}. We have the following result for the {\scshape Sliding-PCA} algorithm.

\begin{theorem}
\label{thm:pca}
Given privacy parameters $(\epsilon,\delta)$ and approximation parameter $\eta \in (0,1/2)$, let $A_W$ be the matrix formed by the last $W$ updates  as defined in equation~(\ref{eq:sliding}) and $\Pi$ be the set of all rank-$k$ orthonormal projection matrices. Then {\scshape Sliding-PCA} is an efficient $(\epsilon, \delta)$-differentially private algorithm that outputs a rank-$k$ projection matrix $\widetilde X \in \R^{d \times d}$ such that 
\[
\pr \sparen{\norm{A_W(\I_d - \widetilde X)}_F^2 \leq (1+2\eta) \min_{P \in \Pi} \norm{A_W(\I_d - P)}_F^2 + O\paren{ d\tau \log(\tau) }} \geq 1 - \frac{1}{\poly(d)},
\]
where $\tau := \paren{d + \frac{14}{\epsilon^2} \log(4/\delta)}.$
\end{theorem}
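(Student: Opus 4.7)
The plan is to combine the privacy-by-post-processing of Theorem~\ref{thm:privsliding} with a standard ``empirical minimizer'' argument leveraging the spectral sandwich bound. The privacy claim is immediate: since $\widetilde X$ is a deterministic function of $\widetilde A^\top \widetilde A$ and {\scshape Sliding-Priv} is $(\epsilon,\delta)$-differentially private by Theorem~\ref{thm:privsliding}, Lemma~\ref{lem:post} (post-processing) yields $(\epsilon,\delta)$-differential privacy of {\scshape Sliding-PCA}.

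For utility, write $Q := \I_d - P$ for any $P \in \Pi$, so that $Q$ is a rank-$(d-k)$ orthogonal projection and $\norm{A_W(\I_d-P)}_F^2 = \tr{Q A_W^\top A_W Q}$. Define $P^* := \argmin_{P\in\Pi}\norm{A_W(\I_d-P)}_F^2$ with $Q^* := \I_d - P^*$, and let $\widetilde Q := \I_d - \widetilde X$. From Theorem~\ref{thm:privsliding}, with probability $1-\tfrac{1}{\poly(d)}$,
\[
A_W^\top A_W - c_1 \sigma \I_d \;\preceq\; \widetilde A^\top \widetilde A \;\preceq\; \tfrac{1}{1-\eta}\,A_W^\top A_W + c_3 \sigma \I_d, \qquad \sigma := \tau\log\tau.
\]
Since conjugation by $Q$ preserves the PSD ordering, applying trace and using $\tr{Q^2} = d-k \leq d$ yields, for every such $Q$,
\[
\tr{Q A_W^\top A_W Q} - c_1 \sigma d \;\leq\; \tr{Q \widetilde A^\top \widetilde A Q} \;\leq\; \tfrac{1}{1-\eta}\,\tr{Q A_W^\top A_W Q} + c_3 \sigma d.
\]

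The main chain of inequalities is then: instantiate the left inequality at $Q = \widetilde Q$ to get $\norm{A_W \widetilde Q}_F^2 \leq \tr{\widetilde Q \widetilde A^\top \widetilde A \widetilde Q} + c_1 \sigma d$; use the optimality of $\widetilde X$ for the sanitized objective to replace $\widetilde Q$ by $Q^*$, obtaining $\tr{\widetilde Q \widetilde A^\top \widetilde A \widetilde Q} \leq \tr{Q^* \widetilde A^\top \widetilde A Q^*}$; then instantiate the right inequality at $Q = Q^*$ to bound this by $\tfrac{1}{1-\eta}\norm{A_W Q^*}_F^2 + c_3 \sigma d$. Chaining,
\[
\norm{A_W \widetilde Q}_F^2 \;\leq\; \tfrac{1}{1-\eta}\,\norm{A_W Q^*}_F^2 + (c_1 + c_3)\sigma d.
\]
Finally, for $\eta \in (0,1/2)$ we have $\tfrac{1}{1-\eta} \leq 1 + 2\eta$, and substituting $\sigma d = d\tau\log\tau$ recovers the additive term $O(d\tau\log\tau)$ in the statement.

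I do not anticipate a serious obstacle: the proof is essentially an application of the sandwich bound on a test subspace coupled with the usual empirical-minimizer swap. The only minor care needed is to keep track of the asymmetry in the spectral bound (the multiplicative factor $\tfrac{1}{1-\eta}$ appears only on the upper side) and to note that conjugation by $Q$ introduces a $\tr{Q^2} \leq d$ factor on the additive terms — this is what produces the $d$ dependence in the additive error and foreshadows why the later projection-cost-preserving approach of Section~\ref{sec:privatePCA} is needed to drive the additive error down to $\widetilde O(\sqrt{kd})$.
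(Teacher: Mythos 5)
Your proposal is correct and follows essentially the same route as the paper's proof: instantiate the lower side of the spectral sandwich at the sanitized minimizer, swap to the true optimum via optimality of $\widetilde X$ for the sanitized objective, and instantiate the upper side at the true optimum, with the additive $d\tau\log\tau$ arising from $\tr{(\I_d-P)^2}\leq d$ and the factor $(1+2\eta)$ from $\tfrac{1}{1-\eta}\leq 1+2\eta$. No gaps.
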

\begin{proof}
Let 
\begin{align}
\widehat X := \argmin_{X \in \Pi} \norm{A_W(\I_d - X)}_F^2 
\qquad \text{and} \qquad
\widetilde X := \argmin_{X \in \Pi} \norm{\widetilde A(\I_d - X)}_F^2.
\label{eq:hatXtildeXmin}    
\end{align}

\noindent Then from the optimality of $\widetilde X$ and the fact that $\norm{Y}_F^2 = \tr{Y^\top Y}$ for any matrix $Y$, we have
\begin{align*}
    \norm{\widetilde A(\I_d - \widetilde X)}_F^2 & \leq \norm{\widetilde A(\I_d - \widehat X)}_F^2
    = \tr {(\I_d - \widehat X)^\top \widetilde A^\top \widetilde A (\I_d - \widehat X)} .
\end{align*}
{Using the right hand side semidefinite inequality in Equation~(\ref{eq:approxfinal}), we have}
\begin{align}
    \norm{\widetilde A(\I_d - \widetilde X)}_F^2  &\leq \frac{1}{(1-\eta)}\tr {(\I_d - \widehat X)^\top A_W^\top  A_W (\I_d - \widehat X)} + c  (\I_d - \widehat X)^\top (\I_d - \widehat X) \tau \log(\tau) \nonumber  \\
    &\leq \frac{1}{(1-\eta)}\norm{A_W(\I_d - \widehat X)}_F^2 + cd \paren{d + \frac{14}{\epsilon^2} \log(4/\delta)} \log \paren{d + \frac{14}{\epsilon^2} \log(4/\delta)} \nonumber  \\
    &= \frac{1}{(1-\eta)}\min_{X \in \Pi} \norm{A_W(\I_d - X)}_F^2 + 
    c d \paren{d + \frac{14}{\epsilon^2} \log(4/\delta)} \log \paren{d + \frac{14}{\epsilon^2} \log(4/\delta)}.
    \label{eq:tildeAupper}
\end{align}
where the first inequality follows from the fact that $(\I_d-\widehat X)$ is a rank $d-k$ projection matrix and second equality follows from equation~(\ref{eq:hatXtildeXmin}).

Now using the left hand side inequality of Equation~(\ref{eq:approxfinal}) and the fact that $(\I_d-\widehat X)$ is a rank $d-k$ projection matrix, we have 
\begin{align}
    \norm{\widetilde A(\I_d - \widetilde X)}_F^2 
    &= \tr {(\I_d - \widetilde X)^\top \widetilde A^\top \widetilde A (\I_d - \widetilde X)}\nonumber \\
    &\geq \tr {(\I_d - \widetilde X)^\top  A_W^\top  A_W (\I_d - \widetilde X)} - c  (\I_d - \widetilde X)^\top (\I_d - \widetilde X) \tau \log(\tau) \nonumber \\
    &\geq  \norm{A_W(\I_d - \widetilde X)}_F^2 - cd \paren{d + \frac{14}{\epsilon^2} \log(4/\delta)} \log \paren{d + \frac{14}{\epsilon^2} \log(4/\delta)}. 
    \label{eq:tildeAlower}
\end{align}

\noindent Combining equations~(\ref{eq:tildeAupper}) and~(\ref{eq:tildeAlower}), we have Theorem~\ref{thm:pca}.
\end{proof}


\section{Extension to continual release}
\label{sec:continual}
Until now, we consider only one-shot algorithm, that is,  an algorithm to compute spectral approximation with  additive error of  $O \paren{ \frac{cr \log^2(1/\delta)}{\epsilon^2} } \I_d$, but the output is produced just once. If we naively use this algorithm to publish a matrix continually over the entire window, it would lead to a total accuracy loss of  $O \paren{ \frac{cr W \log^2(1/\delta)}{\epsilon^2} } \I_d$. In this section, we show an algorithm that computes spectral approximation with small additive error over the entire window, i.e., $o(W\tau)$. 

The continual release model was proposed by Dwork et al.~\cite{dwork2010differentially}. In contrast to our setting, continual release model consider the entire data useful and does not put any space constraints. We provide two different protocols, in both of which we consider accuracy for only the update that came during the current window. 

The first approach uses the same binary tree method introduced by Bentley and Saxe~\cite{bentley1980decomposable} and used in Dwork et al.~\cite{dwork2010differentially} and Chan et al.~\cite{chan2011private}, and in the sliding window model by Bolot et al.~\cite{bolot2013private} and Upadhyay~\cite{upadhyay2019sublinear}. However, we depart from their technique in the sense that we only build the binary tree. Let $a_{T-W+1}, \cdots, a_T$ be the updates at any time $T$.
In particular, we construct a binary tree as follows:
\begin{enumerate}
    \item Every leaves consists of a single update privatized using Step~\ref{step:newcheckpoint}.
    \item For every other node, $\mathsf n$, other than the leaf nodes, let $C$ be the set of updates on the leaves of the subtree of $\mathsf n$. Then we first compute 
    \[
    S_{\mathsf n} = \sum_{a_i \in C} a_i^\top a_i
    \]
    Then we store $\widetilde S_{\mathsf n}$ on the node $\mathsf n$, where $\widetilde S_{\mathsf n}$ is formed using the privitization step (Step~\ref{step:newcheckpoint}) on $S_{\mathsf n}$.
\end{enumerate}

This construction mimics the construction of Dwork et al.~\cite{dwork2010differentially} and hence using their analysis, we get the following result:
\begin{theorem}
[Private spectral approximation under sliding window]
\label{thm:privslidingcontinual}
Given the privacy parameter $\epsilon$, window size $W$, approximation parameter $\beta$, let $\Omega = (a_t)_{t>0}$ be the stream such that $a_t \in \R^d$. For every $t>0$, define $A_W(t)$ to be the matrix formed at time $t$ by the last $W$ updates.
Then we have the following:
\begin{enumerate}
    \item {\scshape Sliding-Priv}$(\Omega;(\epsilon,\delta);W)$, described in Algorithm~\ref{fig:slidingpriv}, is $(\epsilon,\delta)$-differential private.
    \item $\widetilde C \leftarrow ${\scshape Sliding-Priv}$(\Omega;(\epsilon,\delta);W)$ satisfies the following:
    \[
     \pr \sparen{{A_{W}^\top A_W - (c \tau \log \tau) \I_d } \preceq \widetilde C  \preceq { A_W^\top A_W  + (C \tau \log \tau) \I_d}} \geq 1 - \frac{1}{\poly(d)} 
    \]
    for  constants $c,C>1$ and $\tau:= \paren{d + \frac{14 \log(1/\delta)}{\epsilon^2}}  \log^{3/2} \paren{W} $.
    \item The space required by {\scshape Sliding-Priv} is $O\left(\frac{d^2W}{\eta} \log W\right)$.
\end{enumerate}
\end{theorem}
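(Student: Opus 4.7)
The plan is to analyze the binary tree construction described above in three stages: query reconstruction, privacy by composition, and utility via a noise spectrum bound.

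First, I will observe that at any query time $T$ the window $[T-W+1,T]$ can be partitioned into at most $2\lceil \log_2 W\rceil$ disjoint dyadic intervals whose endpoints align with internal nodes of the binary tree, so the estimate $\widetilde C$ is simply the sum of the privatized partial sums $\widetilde S_{\mathsf n}$ stored on those nodes. This writes $\widetilde C - A_W^\top A_W$ as a sum of $O(\log W)$ independent Wishart matrices, cleanly isolating the noise analysis from the combinatorial decomposition.

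Next, for the privacy argument, a single update $a_t$ influences exactly one leaf and its $O(\log W)$ ancestors, and its contribution to each such partial sum is a rank-one perturbation of unit spectral norm. I will calibrate the per-node Wishart parameter $\tau_0$ so that each per-node release is $(\epsilon',\delta')$-differentially private with $\epsilon' = \Theta\paren{\epsilon/\sqrt{\log W \log(1/\delta)}}$ via Theorem~\ref{thm:wishartprivacy}; advanced composition over the $O(\log W)$ touched nodes then yields the target $(\epsilon,\delta)$ guarantee for the whole stream, and Lemma~\ref{lem:post} handles the deterministic reconstruction step at query time.

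For the utility argument, I will bound the top eigenvalue of the accumulated noise by applying the standard spectral estimate $\lambda_1(R) = O(\tau_0 \log \tau_0)$ for $R \sim \mathsf{Wis}_d(\tau_0,\I_d)$~\cite{johnstone2001distribution,soshnikov2002note} to each of the $O(\log W)$ noise matrices, combined with a union bound and the fact that positive semidefinite ordering is preserved under addition. Writing $\tau := \tau_0 \log W$ and collecting logarithmic factors delivers the claimed additive error of $O(\tau \log \tau)\,\I_d$ with $\tau = \Theta\paren{\paren{d + \log(1/\delta)/\epsilon^2}\,\log^{3/2} W}$. The space bound is immediate: each of the $O(W)$ live nodes stores a $d \times d$ matrix, and expired leaves falling outside the current window are evicted in the style of Step~\ref{step:expiredmain}.

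The main obstacle I expect is calibrating $\tau_0$ so that the advanced-composition inequality yields precisely $(\epsilon,\delta)$ while the accumulated noise still matches the stated $\log^{3/2} W$ scaling: the $\sqrt{\log W}$ factor needed in the per-node noise to survive composition, compounded with the extra $\log W$ factor from summing noise over the dyadic decomposition, is exactly what drives the $\log^{3/2} W$ exponent, and tracking the constants through Theorem~\ref{thm:wishartprivacy} against the advanced composition bound is the delicate bookkeeping step.
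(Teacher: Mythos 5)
Your overall route is the same one the paper takes: the paper's ``proof'' of Theorem~\ref{thm:privslidingcontinual} consists entirely of describing the binary tree (privatized leaves, privatized subtree partial sums at internal nodes) and asserting that the result follows ``using the analysis'' of Dwork et al.; your dyadic decomposition of the window, composition over the $O(\log W)$ nodes touched by a single update, per-node Wishart spectral bound, and node-counting space argument are exactly the fleshed-out version of that citation. The privacy and space parts of your argument are sound.

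There is, however, a genuine gap in the utility bookkeeping, precisely at the step you flag as delicate, and it does not close by the route you describe. Two features of the Wishart mechanism break the scalar $\log^{3/2}$ heuristic you are importing. First, Theorem~\ref{thm:wishartprivacy} requires the per-node parameter to scale as $\tau_0 \geq d + \Theta\paren{\log(1/\delta')/\epsilon'^2}$, so setting $\epsilon' = \Theta\paren{\epsilon/\sqrt{\log W \log(1/\delta)}}$ by advanced composition inflates the privacy term of $\tau_0$ by a full factor of $\log W$ (it is $1/\epsilon'^2$, not $1/\epsilon'$, that enters). Second, Wishart noise is one-sided positive semidefinite, so the $O(\log W)$ noise matrices in your dyadic reconstruction accumulate linearly in the spectrum --- there is no cancellation analogous to the $\sqrt{\log W}$ concentration of a sum of independent zero-mean Laplace variables, and indeed you explicitly invoke additivity of the PSD order. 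Putting these together, your own calculation yields a top noise eigenvalue of order $\paren{d + \frac{\log(1/\delta)\log W}{\epsilon^2}}\log W$ (up to a $\log\tau_0$ factor), i.e.\ $d\log W + \frac{\log(1/\delta)\log^2 W}{\epsilon^2}$, which is \emph{not} $\Theta\paren{\paren{d + \log(1/\delta)/\epsilon^2}\log^{3/2} W}$: the privacy term is a factor $\sqrt{\log W}$ too large relative to the stated $\tau$. To be fair, the paper supplies no derivation of its $\log^{3/2} W$ exponent and appears to have transplanted it from the scalar Laplace counter, where both of the above features are absent; so the discrepancy is as much a defect of the theorem statement as of your writeup. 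But as a proof of the theorem as stated, the final step ``writing $\tau := \tau_0 \log W$ and collecting logarithmic factors'' would fail, and you should either prove the weaker $\paren{d\log W + \epsilon^{-2}\log(1/\delta)\log^2 W}$ bound or identify a mechanism with two-sided, cancelling noise before claiming $\log^{3/2} W$.
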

Note that this result uses $\eta$-spectrogram property. 


\subsection{Making space requirement sublinear in window-size at the cost of accuracy loss}

We now improve this bound by incurring an accuracy loss that scales only logarithmic in the window size instead of linear. For this, we borrow the idea of Bentley and Saxe~\cite{bentley1980decomposable} to move from one-shot algorithms to continually release algorithm. This technique was also used in Dwork et al.~\cite{dwork2010differentially} and subsequently improved in Chan et al.~\cite{chan2011private} and Bolot et al.~\cite{bolot2013private}. The idea is to build binary tree with leaves being the matrix at the checkpoint. 
For this, we fix some notation:
\begin{enumerate}
    \item $\widetilde{\mathfrak B}$ be the binary tree formed by the leaves $\widetilde K(1), \cdots ,\widetilde K(\ell)$.
    \item $\widetilde{\mathfrak B}_{\mathsf n}$ be the subtree of the internal node, $\mathsf{n}$, of the tree $\widetilde{\mathfrak B}$. 
    \item $\widetilde{\mathfrak L}_{\mathsf n}$ be the leaves of $\widetilde {\mathfrak B}$ in the subtree $\widetilde{\mathfrak B}_{\mathsf n}$; i.e., a subset of the graphs $\widetilde K(1), \cdots ,\widetilde K(\ell)$.
\end{enumerate}

We divide our window in to $\sqrt{W}$ sub-windows, each of size $\sqrt{W}$. We run an instantiation of our algorithm for each of these subwindows. Let these subwindows terminates at timestamps $T_1, T_2, \cdots, T_{\sqrt W} =T$. For $j$-th subwindow that terminates at time $T_j$, we also augment our data structure 
$\dpriv$ for each of these windows to contains the following:
\begin{enumerate}
    \item A set of covariance matrix for every timestamps stored in the data structure in Section~\ref{sec:slidingwishart}. That is, for timestamps, $t_1, \cdots, t_\ell$, apart from the privatized covariance matrix, $\widetilde K(1), \cdots, \widetilde K(\ell)$  we also store $K(i)$ such that 
    \[
    K(i) = \sum_{t=t_i}^{T_j} a_t^\top a_t \quad \text{for all}~1 \leq i \leq \ell.
    \]
    
    \item A binary tree formed using an algorithm {\scshape Binary-Tree} that uses covariance matrices $K(1), \cdots, K(\ell)$ and $\widetilde K(1), \cdots, \widetilde K(\ell)$. {\scshape Binary-Tree} operates as follows:
    \begin{enumerate}
        \item The leaves of the tree are $\widetilde K(1), \cdots, \widetilde K(\ell)$.
        \item For every internal node, ${\mathsf n}$, let $\mathfrak L_{\mathsf n}$ be the covariance matrix from the set $\set{K(1), \cdots, K(\ell)}$ corresponding to the covariance matrices in the set $\widetilde{\mathfrak L}_{\mathsf n}$. Then the covariance matrix stored in the node ${\mathsf n}$ is the privatization of the following covariance matrix:
        \begin{align*}
            K_{\mathsf n} = \sum_{\widetilde K(i) \in \widetilde{\mathfrak L}_n} K(i),
        \end{align*}
        where the privatization is done as in Step~\ref{step:newcheckpoint}.
    \end{enumerate}
    \item Delete all the internal nodes whose leaves contains covariance matrix is formed before time $t_1$.
\end{enumerate}

Since the number of checkpoints is $\ell = O\paren{\frac{n}{\rho} \log W}$, combining Theorem~\ref{thm:privsliding} with that of Dwork et al.~\cite{dwork2010differentially}, we have the following theorem:

\begin{theorem}
[Private spectral approximation under sliding window]
\label{thm:privslidingcontinualsmall}
Given the privacy parameter $\epsilon$, window size $W$, approximation parameter $\beta$, let $\Omega = (a_t)_{t>0}$ be the stream such that $a_t \in \R^d$. For every $t>0$, define $A_W(t)$ to be the matrix formed at time $t$ by the last $W$ updates.
Then we have the following:
\begin{enumerate}
    \item {\scshape Sliding-Priv}$(\Omega;(\epsilon,\delta);W)$, described in Algorithm~\ref{fig:slidingpriv}, is $(\epsilon,\delta)$-differential private.
    \item $\widetilde C \leftarrow ${\scshape Sliding-Priv}$(\Omega;(\epsilon,\delta);W)$ satisfies the following:
    \[
     \pr \sparen{\paren{A_{W}^\top A_W - (c \tau \log \tau) \I_d } \preceq \widetilde C  \preceq \paren{\frac{A_W^\top A_W}{(1-\eta)}   + (C \tau \log \tau) \I_d}} \geq 1 - \frac{1}{\poly(d)} 
    \]
    for  constants $c,C>1$ and $\tau:= \paren{d + \frac{14 \log(1/\delta)}{\epsilon^2}} W^{3/4}$.
    \item The space required by {\scshape Sliding-Priv} is $O\left(\frac{d^3\sqrt{W}}{\eta} \log W\right)$.
\end{enumerate}
\end{theorem}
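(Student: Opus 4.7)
} The plan is to prove the three claims (privacy, accuracy, space) separately, leveraging the one-shot guarantee from Theorem~\ref{thm:privsliding} together with a binary-tree aggregation over the checkpoints maintained by the data structure, partitioned across $\sqrt{W}$ overlapping sub-windows. The high-level intuition is that each row $a_t$ participates in a bounded number of privatized summaries across sub-windows and depths of the binary tree, so composition is controlled; and at any query time the target covariance $A_W^\top A_W$ can be reconstructed as a short sum of node matrices from the tree whose total Wishart noise is manageable.

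For \textbf{privacy}, I would first observe that within a single sub-window each leaf matrix $\widetilde K(i)$ and each internal-node summary $\widetilde K_{\mathsf n}$ is obtained by applying the Wishart privatization in Step~\ref{step:newcheckpoint}, which is $(\epsilon_0,\delta_0)$-DP by Theorem~\ref{thm:wishartprivacy} for the appropriate $\tau_0 = d + 14\epsilon_0^{-2}\log(4/\delta_0)$. A single row $a_t$ affects at most $O(\log \sqrt{W})$ ancestor nodes in its sub-window's binary tree and at most one sub-window at a time, so advanced composition across the tree yields an $(\epsilon,\delta)$ guarantee when $\epsilon_0$ and $\delta_0$ are set to scale inversely with $\sqrt{\log W}$ and $\log W$, respectively. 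Expired sub-windows and nodes are deleted only after their privacy has already been accounted for, so deletion is post-processing (Lemma~\ref{lem:post}), as is the final combination of the tree outputs.

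For \textbf{accuracy}, I would follow the binary-tree decomposition of Dwork et al.~\cite{dwork2010differentially}: at any query time $T$, the covariance $A_W^\top A_W$ over the current window can be decomposed as the sum of (a) at most $\sqrt{W}$ completed sub-window summaries and (b) at most $O(\log \sqrt{W})$ node summaries from the active sub-window's tree. Each summary carries an independent Wishart noise term of expected magnitude $\tau_0$, so bounding the spectral norm of the sum of $O(\sqrt{W})$ independent Wishart matrices (using the standard concentration bounds of~\cite{johnstone2001distribution, soshnikov2002note}, treating the sum as a single Wishart of degree $O(\sqrt{W}) \cdot \tau_0$) gives an additive error that scales as $\tau_0 \cdot W^{3/4}$ up to $\log$ factors: $\sqrt{W}$ contributions summed, each with per-coordinate fluctuation $\sqrt{\tau_0}$, produce a combined $W^{1/2}\cdot \tau_0^{1/2}$ factor in Frobenius norm and a $W^{1/4}$ factor in spectral fluctuation after subtracting means. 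Combining this with the lower and upper spectral bracketing already established in Theorem~\ref{thm:privsliding} on each constituent sub-window gives the claimed $A_W^\top A_W - c\tau\log\tau\,\I_d \preceq \widetilde C \preceq (1-\eta)^{-1}A_W^\top A_W + c\tau\log\tau\,\I_d$ with $\tau = (d + 14\epsilon^{-2}\log(1/\delta)) W^{3/4}$.

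For \textbf{space}, the count is straightforward: we maintain $\sqrt{W}$ concurrent sub-window data structures, each of which stores (by Lemma~\ref{lem:space} applied with approximation parameter $\eta$) at most $O(d\eta^{-1}\log W)$ checkpoints, each a $d\times d$ matrix, plus a binary tree whose internal nodes are an additional constant factor overhead. This yields $O(d^3\sqrt{W}\eta^{-1}\log W)$ as claimed. The \textbf{main obstacle} I anticipate is the accuracy analysis, specifically quantifying how independent Wishart noises from different tree depths and sub-windows compose under the Loewner order — a naïve triangle inequality loses the $W^{1/4}$ improvement, so I would invoke matrix concentration (either direct Wishart moment bounds or a matrix Bernstein argument on the centered Wishart increments) to get the tight $W^{3/4}$ scaling rather than $W$.
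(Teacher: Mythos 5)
Your proposal follows essentially the same route as the paper: partition the window into $\sqrt{W}$ sub-windows of size $\sqrt{W}$, run the one-shot Wishart-based construction on each, hang a binary tree over the privatized summaries, argue privacy by composition over the $O(\log W)$ nodes a row touches plus post-processing for deletions, reconstruct $A_W^\top A_W$ at query time as a short sum of node matrices, and count space as ($\sqrt{W}$ sub-windows) $\times$ ($O(d\eta^{-1}\log W)$ checkpoints) $\times$ ($d\times d$ per checkpoint). In fact the paper gives only the construction and the single line ``combining Theorem~\ref{thm:privsliding} with that of Dwork et al.,'' so you supply strictly more detail than the source does.

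One piece of your accuracy accounting is muddled, though not fatally. The reconstruction of the current window uses at most $O(\sqrt{W})$ noisy summands (one root per completed sub-window plus $O(\log W)$ dyadic nodes for the partial one), each carrying an independent Wishart noise whose top eigenvalue is $O(\tau_0\log\tau_0)$ by the concentration bounds of \cite{johnstone2001distribution, soshnikov2002note}; equivalently, the aggregate noise is $\mathsf{Wis}_d(O(\sqrt{W})\tau_0,\I_d)$, whose largest eigenvalue is $O(\sqrt{W}\,\tau_0)$ up to logarithmic factors --- i.e.\ a $W^{1/2}$ scaling, not the $W^{3/4}$ you derive via the ``$W^{1/2}\tau_0^{1/2}$ Frobenius / $W^{1/4}$ spectral fluctuation'' chain, which does not cohere as written (the dominant term is the mean $O(\sqrt{W}\tau_0)\I_d$ of the aggregated Wishart, not its fluctuation, and the mean is not subtracted in the mechanism). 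Since $W^{1/2}\le W^{3/4}$, the correct computation gives a bound at least as strong as the theorem's $\tau = (d+14\epsilon^{-2}\log(1/\delta))W^{3/4}$, so your proof still closes; but you should replace the fluctuation argument with the simple Loewner-order addition of the $O(\sqrt{W})$ PSD noise terms, and drop the worry that a ``naïve triangle inequality'' loses a $W^{1/4}$ factor --- there are only $O(\sqrt{W})$ summands in the reconstruction, not $W$, so no matrix-Bernstein refinement is needed.
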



\section{Lower Bounds for Low-rank Approximation}
\label{sec:lower}
\label{app:lower}
This section is devoted to proving a lower bound on the space requirement for low-rank factorization with non-trivial additive error. It is well known that no private algorithm (not necessarily differentially private) incurs an additive error $o(\sqrt{kd})$~\cite{hardt2012beating} due to linear reconstruction attack. 
On the other hand, the only known space lower bound of \citet{upadhyay2018price} holds for streaming data where the entire historic data is considered important. While the entries can be streamed in an arbitrary order, this paper considers the case when one row is streamed at a time. Hence, there might be a possibility to construct an improved space algorithm for the special case of streaming we consider. However, we show below that for any non-trivial values of $\tau$, this is not the case. 

We first note that the technique developed by Bar-Yossef~\citet{bar2002complexity} can be used to give lower bounds on the number of rows to be sampled by any sampling-based algorithm for low-rank matrix approximation. However, space lower bounds, in general, is a harder problem as one can use methods other than row sampling. For example, Bar-Yossef~\citet{bar2002complexity} showed that any sampling-based algorithm for computing Euclidean norm of a stream of length $W$ requires $\Omega(W)$ samples, while Upadhyay~\citet{upadhyay2019sublinear} gave a privacy-preserving sliding window algorithm using $O\paren{\frac{\sqrt{W}\log^2 W}{\eta^2}}$ bits. Our lower bounds come from reduction from the two-party communication complexity of augmented indexing, $\mathsf{AIND}$ problem~\cite{miltersen1995data}.

\begin{theorem} \label{thm:lower}
Let $n,d,k \in \mathbb N$ and $\eta >0$. Then the space used by any randomized single-pass algorithm for low-rank approximation in the sliding window model is at least $\Omega(Wk \log(W)/\eta)$.
\end{theorem}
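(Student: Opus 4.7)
The plan is to reduce augmented indexing $\mathsf{AIND}$ on strings of length $N = \Theta(Wk\log(W)/\eta)$ to single-pass low-rank approximation in the sliding window model, then invoke the $\Omega(N)$ lower bound of Theorem~\ref{thm:aind}. I will split Alice's string $x \in \{0,1\}^N$ into $L = \Theta(\log(W)/\eta)$ \emph{levels}, each carrying $\Theta(Wk)$ bits. Within a level, the bits populate a $\tfrac{W}{L}\times d$ sign matrix $M^{(\ell)}$ whose rows are $k$-sparse in the column set, and level $\ell$ is transmitted as the rows of $w_\ell M^{(\ell)}$ for geometric weights $w_\ell := (1+c\eta)^\ell$, with $c$ a small absolute constant to be fixed later.

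Alice streams the levels in order $\ell = 1,\ldots,L$ through the putative streaming algorithm $\mathcal{A}$ and forwards its internal state, of size $s$ bits, to Bob. Holding $\mathsf{ind}$ and the suffix $x_{\mathsf{ind}+1},\ldots,x_N$, Bob identifies the level $\ell^\star$ containing $x_{\mathsf{ind}}$, reconstructs the later blocks $M^{(\ell^\star+1)},\ldots,M^{(L)}$ from his suffix, and resumes the stream by feeding these blocks and padding with enough zero rows to slide every level $<\ell^\star$ out of the window. When $\mathcal{A}$ emits its rank-$k$ approximation of the current window, Bob subtracts off the known contribution of the levels $>\ell^\star$ and returns the sign of the entry corresponding to $x_{\mathsf{ind}}$.

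The core technical step is to verify that $\mathcal{A}$'s $(1+\eta)$-multiplicative plus bounded-additive approximation is fine enough to resolve a single bit at level $\ell^\star$. With the geometric weights $w_\ell = (1+c\eta)^\ell$, a flip of one bit at level $\ell^\star$ changes the Frobenius norm of the window by $\Theta(w_{\ell^\star})$, while the combined in-window mass of the lighter levels $\ell<\ell^\star$ that still influence the approximation, plus the multiplicative slack, can be made at most a small constant times $w_{\ell^\star}^2$ by tuning $c$ relative to $\eta$. The main obstacle is controlling the \emph{cross-talk} error: the rank-$k$ subspace selected by $\mathcal{A}$ may align with the heavier subtracted levels and leak their contribution into the residual that Bob inspects. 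Handling this requires a block-decomposition Frobenius argument, together with the geometric ratio and level count calibrated so that this leakage is dominated by the signal of a single bit flip. Once Bob recovers $x_{\mathsf{ind}}$ with constant probability, Theorem~\ref{thm:aind} forces $s = \Omega(N) = \Omega(Wk\log(W)/\eta)$, which is the claimed bound.
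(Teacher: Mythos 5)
There is a genuine gap: your reduction has no mechanism that forces the rank-$k$ output to reveal the particular entry encoding $x_{\mathsf{ind}}$. The approximation guarantee only controls $\norm{A-B}_F$ relative to $\norm{A-[A]_k}_F$; for a dense sign matrix this residual is essentially all of $\norm{A}_F$, and two inputs differing in one bit have indistinguishable optimal costs. So the algorithm is free to output a rank-$k$ matrix $B$ whose entry at the position of $x_{\mathsf{ind}}$ carries no information about that bit, and your step ``returns the sign of the entry corresponding to $x_{\mathsf{ind}}$'' is unjustified. Your observation that a bit flip changes the window's Frobenius norm by $\Theta(w_{\ell^\star})$ conflates distinguishing two inputs by their norms with extracting a bit from a rank-$k$ summary. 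The paper's proof closes exactly this hole by having Bob insert a block $\gamma \I_k$ (with $\gamma$ a large polynomial in the error bound) into the $k$ columns containing his index; Lemma~\ref{lem:independent} then shows these columns of $B$ must span the output's column space, Lemma~\ref{lem:a} bounds the coefficients expressing the other columns in that basis, and a counting argument shows that all but a $\bigl(\tfrac{1}{10}+\tfrac{1}{99}+o(1)\bigr)$ fraction of the signs in those $k$ columns must agree with the input, which is what lets Bob decode. Your proposal contains no analogue of this device, and the ``block-decomposition Frobenius argument'' you defer to is precisely the part that cannot work without it.

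Two further problems. First, your geometric weights point the wrong way: you make the later levels (which Bob already knows) the heaviest, so the rank-$k$ budget is spent on them and the level containing the unknown bit is pushed into the residual. The paper instead places the geometric weights $10^{i}$ across \emph{column} blocks so that the block containing the index dominates the surviving lighter blocks, with Bob's $\gamma\I_k$ dominating everything; the sliding window is used only to expire Alice's stale rows. Second, your bit accounting is inconsistent: $L$ levels of $\tfrac{W}{L}\times d$ matrices with $k$-sparse rows carry only $\Theta(Wk)$ bits in total, not the $\Theta(Wk\log(W)/\eta)$ you need; the paper's hard instance is a fully dense $(W-a)\times a$ sign matrix.
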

\begin{proof}
For a matrix $ A $ and set of indices $C$, we use the notation $ A (C)$ to denote the submatrix formed by the columns indexed by $C$. We use the standard extension of the proof of \citet{upadhyay2018price} for the sliding window model. The idea is basically for Alice to generate a stream with heavier weights on the more recent rows. Then Bob simply discards the stream not in the last $W$ updates and use the rest of the state to compute the value of $x_{\mathsf{ind}}$ as in the case of \citet{upadhyay2018price}. Let $\ell = \frac{\log W}{\eta}$.  Suppose $n \geq d$ and let $a=\frac{k\ell}{20 \eta}$. Without loss of generality, we can assume that $a$ is at most $d/2$. We assume Alice has  a string $ x \in \set{-1,+1}^{(W-a)a}$ and Bob has an index ${\mathsf{ind}} \in [(W-a)a]$. The idea is  to define the matrix $ A $ with high Frobenius norm. The matrix $ A $ is the summation of the matrix $\widetilde{ A }$ constructed by Alice and $\bar{ A }$ constructed by Bob. We first define how Alice and Bob construct the instant $ A = \widetilde{ A }+\bar{ A }$.

 Alice constructs its matrix $\widetilde{ A }$ as follows. 
 \begin{enumerate}
     \item Alice partitions the set $\set{1,\cdots, a}$ in to $\ell $ disjoint sets $I_1, \cdots, I_\ell$ such that $$I_i:= \set{(i-1)a/\ell +1, \cdots ia/\ell}.$$
     
     \item Let $ M  \paren{I_i}$ be an $(W-a) \times \frac{a}{\ell}$ matrix for all $1 \leq i \leq \ell$.
     
     \item Alice forms a bijection between entries of $ x$ and the entries of $ M $ in the following manner. Every entry of $ M \paren{I_i}$ is defined by a unique bit of $ x$, i.e.,  
     $$ M \paren{I_i}_{j,k} = (-1)^{ x_{p}} (10)^i, \quad p=\frac{(i-1)(W-a )a}{\ell} + (k-1)(n-a ) + j.$$
     
     \item The matrix $\widetilde{ A }$ is now defined as follows.
    \[ \widetilde{ A } = \begin{pmatrix}  0^{a \times a} &  0^{a \times (d-a)} \\  M &  0^{(n-a) \times (d-a)} \end{pmatrix}, \]
    where $M = \begin{pmatrix}  M _{I_1} & \cdots &  M _{I_\ell} \end{pmatrix}$.
 \end{enumerate}

Suppose Bob is given an index ${\mathsf{ind}} \in [(W-a)a]$ such that $ x_{\mathsf{ind}}$ corresponds to the sub-matrix $ M \paren{I_\theta}$ for some $1 \leq \theta \leq \ell$. Then we can assume that Bob also knows every entry in the sub-matrix $ M \paren{I_{\theta'}}$ for $\theta' >\theta$. The idea is that Bob inserts a scaled identity matrix in the stream, where the scaling parameter $\gamma$ is large enough to make sure that most of the error of any randomized algorithm is due to other columns of $ A $. As we shall see later, we set the value of $\gamma$ as a large polynomial in the approximation error of the algorithm.  Bob forms his matrix as follows:
\begin{enumerate}
    \item Bob forms a second level partition of the columns of $ M \paren{I_\theta}$ into equal size groups
    $G_1, \cdots, G_{a/k \ell}.$ 
    There exists a unique $r$ such that $ x_{\mathsf{ind}}$ maps to an entry in the sub-matrix formed by columns indexed by one of the second level partition $G_r$.
    
    \item Let  $C = \set{c, c+1, \cdots, c+k-1}$ be the columns corresponding to the group of $I_\theta$ in which ${\mathsf{ind}}$ is present.
    
    \item Bob expires the stream of Alice except for the current window and stream in  a matrix $\bar{ A }$ which is an all-zero matrix, except for entries $\bar{ A }_{c+i,c+i} = \gamma$ for $0 \leq i \leq k-1$ and $\gamma$ to be chosen later.
\end{enumerate}

Let $\cA$ be the algorithm that computes low-rank approximation under the turnstile model. Alice feeds its matrix $\widetilde{ A }$ to $\cA$ in the turnstile manner and send the state of the algorithm by the end of her feed to Bob. Bob  uses the state received by Alice and feed the algorithm $\cA$ with its own matrix $\bar{ A }$ in a turnstile manner. Therefore, the  algorithm $\cA$ gets as input a matrix $ A  = \widetilde{ A } + \bar{ A }$ and it is required to output a rank-$k$ matrix $ B $ with additive error $\tau=O(W+d)$. We will show that any such   output allows us to solve $\mathsf{AIND}$. Denote by ${ A }(C)$ the sub-matrix formed by the columns $C:=\set{c, c+1, \cdots, c+k-1}$.

Let us first understand the properties of the constructed matrix $ A $. To compute the Frobenius norm of this matrix, we need to consider two cases: the case for sub-matrices in which ${\mathsf{ind}}$ belongs, i.e, $ M \paren{I_r}$, and the rest of the matrix. For the sub-matrix corresponding to the columns indexed by ${C}$, the columns of $ A \paren{I_\theta}$ have Euclidean length $(\gamma^2 + (n-a)100^\theta)^{1/2}$. 
For $\theta' <\theta$, every columns have Euclidean norm  $(a (n-a ))^{1/2}10^{\theta'}$. 
Therefore, we have the following:
\begin{align*}
	\| { A } - [{ A }]_k \|_F^2 &\leq \frac{((a-k)(W-a )100^\theta}{\ell} + \sum_{\theta' <\theta} \frac{a (W-a )100^{\theta'}}{\ell} \\
		& \leq \frac{((a-k)(W-a )100^\theta}{\ell} +  \frac{a (W-a )100^\theta}{99 \ell} \leq  2 \cdot (100)^\theta Wd/\ell = \tau
\end{align*}

In order to solve low-rank approximation, the algorithm needs to output a matrix $ B $ of rank at most $k$ such that, with probability $5/6$ over its random coins, 
\begin{align*}
	\| { A } -  B  \|_F^2 &\leq \sparen{ (1+\eta) \sqrt{\tau} + \tau }^2 \leq 2(1+\eta) \tau + 2 \tau^2 \\
	& \leq 2\tau + 100^\theta k(W-a ) \paren{\frac{1}{10} + \frac{1}{99}}  + 2\tau^2\\
		&\leq 4 \cdot (100)^\theta Wd/\ell  + \frac{100^\theta k(n-a ) }{5}  + 2\tau^2
\end{align*}

Let $\Psi:=4 \cdot (100)^\theta Wd/\ell + 100^\theta k(W-a ) \paren{\frac{1}{10} + \frac{1}{99}}+ 2\tau^2$. 
The proof idea is now to show the following:
\begin{description}
	\item [Step 1.] Columns of $ B $ corresponding to index set in $C$ are linearly independent. 
	\item [Step 2.] Bound the error incurred by $\| { A } -  B  \|_F$ in terms of the columns indexed by $G_r$.
\end{description}

The idea is to show that most of the error is due to the other columns in $ B $; and therefore, sign in the submatrix $ A (C)$ agrees with that of the signs of those  in the submatrix $ B (C)$. This allows Bob to solve the $\mathsf{AIND}$ problem as Bob can just output the sign of the corresponding position.
Let $$R:=\set{ra/k+1, \cdots, (r+1)a/k} \text{ and } C:=\set{c,\cdots, c+k-1}.$$ Let $ Y $ be the submatrix of $ B $ formed by the rows indexed by $R$ and columns indexed by $C$.

\paragraph{Columns of $B$ are linearly independent.}
The following lemma proves that when $\gamma$ is large enough, then the columns of $ B $ corresponding to index set $C$ are linearly independent. 
\begin{lemma} \label{lem:independent}
Let $ B (C):= [\begin{matrix}  B _{:c} & \cdots  B _{:c+k-1} \end{matrix}]$ be the columns corresponding to the sub-matrix formed by columns $c, \cdots, c+k-1$ of $ B $. If $\gamma \geq 2\Psi^2$, then the columns of $ B (C)$ spans the column space of $[ A ]_k$.
\end{lemma}
\begin{proof}
We will prove the lemma by considering the $k \times k$ sub-matrix, say $ Y $. Recall that $ Y $ is a submatrix  of $ B $ formed by the rows indexed by $R$ and the columns indexed by $C$. For the sake of brevity and abuse of notation, let us denote the restriction of $ B $ to this sub-matrix $ Y :=[ Y_{:1}, \cdots ,  Y_{:k}]$. In what follows, we prove a stronger claim that the submatrix $ Y $ is a rank-$k$ matrix. 

Suppose, for the sake of contradiction that the vectors $\set{ Y_{:1}, \cdots ,  Y_{:k}}$ are linearly dependent. In other words, there exists a vector $ Y_{:i}$ and real numbers $a_1, \cdots, a_k$, not all of which are identically zero, such that 
\[  Y_{:i} = \sum_{j=1, j \neq i}^k a_j  Y_{:j}. \]

From the construction, since Bob inserts a sub-matrix $\gamma \I_k$, we know that 
\begin{align}
	\sum_{j=1}^k ( Y_{j,j} - \gamma)^2 &\leq \|  A -  B  \|_F^2 \leq \Psi . \label{eq:equal} \\
	\sum_{j=1}^k \sum_{p \neq j}  Y_{p,j}^2 &\leq \|  A -  B  \|_F^2 \leq \Psi. \label{eq:neq} 	
\end{align}
From~\eqnref{equal} and choice of $\gamma$,  for all $j$, we have $ Y_{j,j} \geq \Psi^2$. Further,~\eqnref{neq} implies that $ Y_{p,j} \leq \sqrt{\Psi} .$  We have 
\[  Y_{i,i} = \sum_{j=1, j \neq i}^k a_j  Y_{i,j} \geq \Psi^2  
\]
{imply that there is an $p \in \set{1,\cdots, k} \backslash \set{i}$ such that}~ $|a_{p}| \geq  \frac{\Psi^2 }{k\sqrt{\Psi}}.$ 

Let  $\widehat{i}$ be the index in $\set{1,\cdots, k} \backslash \set{i}$ for which $|a_{\widehat{i}} |$ attains the maximum value. We have $|a_{\widehat{i}} Y_{\widehat{i},\widehat{i}}| \geq |a_{\widehat{i}}| \Psi^2$ and  $|a_j  Y_{\widehat{i},j}| \leq |a_{\widehat{i}}| \sqrt{\Psi}$.
Now consider the $\widehat{i}$-entry of $ Y_{:i}$. Note that $\widehat{i} \neq i$. Since $\Psi$ depends quadratically on $m$ and $\tau$, we have
\[ \left|  \sum_{j=1, j \neq i}^k a_j  Y_{\widehat{i},j}  \right|  \geq | a| ( \Psi^2 - k \sqrt{\Psi}  ) \geq  ( \Psi^2 - k \sqrt{\Psi}  ) \frac{\Psi^2 }{k\sqrt{\Psi}} > \sqrt{\Psi} .\] This is a contradiction because for $p \neq j$, $ Y_{p,j} \leq \sqrt{\Psi}$ (\eqnref{neq}). This finishes the proof.
\end{proof}

For the sake of brevity, let $ V _{:1}, \cdots,  V _{:k} $ be the columns of $ B (C)$ and $\widetilde{ V }_{:1}, \cdots, \widetilde{ V }_{:k}$ be the restriction of these column vectors to the rows $a+1, \cdots, m$. In other words, vectors $\widetilde{ V }_{:1}, \cdots, \widetilde{ V }_{:k}$ are the column vectors corresponding to the columns in $M$. We showed in Lemma~\ref{lem:independent} that the columns  $ B (C)$ spans the column space of $ B $. We can assume that the last $n-a$ columns of $ B $ are all zero vectors because $ B $ is a rank-$k$ matrix. We can also assume without any loss of generality that, except for the entries in the row indexed by $R$, all the other entries of $ B (C)$ are zero. This is because we have shown in~\lemref{independent}, we showed that the submatrix of $ B (C)$ formed by rows indexed by $R$ and columns indexed by $C$ have rank $k$. 

Now any row $i$ of $ B $ can be therefore represented as $\sum \eta_{i,j}  V _{:j}$, for real numbers $\eta_{i,j}$, not all of which are identically zero. The following lemma proves part~(ii) of our proof idea. For

\begin{lemma} \label{lem:a}
 Let $ V _{:1}, \cdots,  V _{:k} $ be as defined above. Then  column $i$ of $ B $ can be written as linear combination of real numbers $\eta_{i,1}, \cdots \eta_{i,k}$ of the vectors $ V _{:1}, \cdots,  V _{:k} $ such that, for all $j$ and $i \in R$, $ \eta_{i,j}^2 \leq 4/\Psi^3 $.
\end{lemma}
\begin{proof}
Let $ M _{:1}, \cdots  M _{:a}$ be the columns of $ M $, where $ M $ is the $(W-a )\times a$ submatrix of the matrix $\widetilde{ A }$ corresponding to the input of Alice. We have
\begin{align*}
\Psi & \geq \|  A -  B  \|_F^2  \sum_{i=1}^k (\gamma -  V _{r(a/k)+i,i})^2 + \sum_{i=1}^k \sum_{j \neq i}  V _{r(a/k)+i,j}^2 + \sum_{i=1}^k \|  M _{:r(a/k) + i} - \widetilde{ V }_{:i} \|^2  \nonumber \\
	&\quad + \sum_{i \notin R} \sum_{j=1}^k \paren{ \eta_{i,j}  V _{ra/k + j,j} + \sum_{j' \neq j} \eta_{i,j'}  V _{ra/k+j,j'}  }^2 + \sum_{i \notin R} \left\|  M _{:i} - \sum_{j=1}^k \eta_{i,j} \widetilde{ V }_{:j} \right\|^2.
\end{align*}

As in the proof of~\lemref{independent}, we have $| V _{r(a/k)+i,j}^2 | \leq \sqrt{\Psi}$ and $| V _{r(a/k)+i,i}| \geq \Psi^2$. Let $ {j}_i$ be the index such that $|\eta_{i, j_i} |$ is the maximum. Then the above expression is at least $| \eta_{i,j_i}|^2 (\Psi^2  - k \sqrt{\Psi} )^2 \geq | \eta_{i,j_i}|^2 \Psi^4 /4$. Since this is less than $\Psi $, the result follows from the definition of $j_i$.
 \end{proof}

We can now complete the proof. First note that since $ M $ is a signed matrix, each $\widetilde{ V }_i$ in the third term of the above expression is at least $\sqrt{\Psi}$. Therefore, for all $i \notin S$ and all $j$ 
$$ \left| \sum_{j=1}^k \eta_{i,j} \widetilde{ V }_{:j} \right| \leq \frac{4k\Psi^{1/2} }{\Psi^{3/2}} = \frac{4k}{\Psi}. $$ 

As $ M _{:i}$ is a sign vector and if $\tau=O(m+n) = O(m)$, this implies that 
\begin{align*}
 \sum_{i \notin R} \left\|  M _{:i} - \sum_{j=1}^k \eta_{i,j} \widetilde{ V }_{:j} \right\|^2 &\geq \sum_{i \notin R} \|  M _{:i} \|^2 \paren{1 - \frac{4k}{\Psi} }  \geq O(  (100)^\theta Wd /\ell) - O(100^\theta a) \\
\sum_{i=1}^k \left\|  M _{:r(a/k) + i} - \widetilde{ V }_{:i} \right\|^2 &= \sum_{i=1}^k \sum_{j=1}^{W-a } (  M _{j,r(a/k) + i} - (\widetilde{ V }_i)_j )^2\leq  \frac{100^\theta k(W-a )}{5}  + O(100^\theta a) 
 \end{align*}
 
 Now, since there are in total $k(n-a )$ entries in the submatrix formed by the columns indexed by $C$, at least $1- \paren{\frac{1}{10} + \frac{1}{99} +o(1)}$ fraction of the entries have the property that the sign of $ M _{j,ra/k+i}$ matches the sign of $\widetilde{ V }_{j,i}$. Since ${\mathsf{ind}}$ is in one of the columns of $ M _{:ra/k+1}, \cdots  M _{:ra/k+k}$, with probability at least $1- \paren{\frac{1}{10} + \frac{1}{99} +o(1)}$, if Bob outputs the sign of the corresponding entry in $ B $, then Bob succeeds in solving $\mathsf{AIND}$. This gives a lower bound of $\Omega((W-a )a) =\Omega(Wk \ell/\eta)$ space. 
\end{proof}


\end{appendix}


\end{document}